\newtheorem{assumption}{Assumption}
\newtheorem{remark}{Remark}
\title{On the Relevance of Byzantine Robust Optimization\\ Against Data Poisoning}
\author{ Sadegh Farhadkhani$^*$ \and Rachid Guerraoui$^*$  \and Nirupam Gupta$^*$  \and Rafael Pinot$^\dagger$}
\def\D{\mathcal{D}}
\def\R{\mathbb{R}}
\def\P{\mathcal{P}}
\def\H{\mathcal{H}}
\def\X{\mathcal{X}}
\newcommand{\expect}[1]{\mathop{{}\mathbb{E}}\left[{#1}\right]}
\newcommand{\condexpect}[2]{\mathbb{E}_{#1}\left[{#2}\right]}
\newcommand{\card}[1]{\left\lvert{#1}\right\rvert}
\providecommand{\iprod}[2]{\ensuremath{\left\langle #1,\,#2  \right\rangle}}
\providecommand{\norm}[1]{\ensuremath{\left\lVert#1\right\rVert }}
\newcommand{\ceil}[1]{\left\lceil{#1}\right\rceil}
\newcommand{\indexvar}[3]{{#3}^{\ifthenelse{\equal{#1}{}}{}{\left({#1}\right)}}_{#2}}
\newcommand{\dev}[1]{\delta_{#1}}
\newcommand{\support}[1]{\textsc{supp}\left({#1}\right)}
\newcommand{\heter}{\zeta}
\newcommand{\SSS}{\mathcal{S}}
\DeclareMathOperator*{\argmin}{arg\,min}
\renewcommand{\paragraph}[1]{\textbf{#1}~}
\newtheorem{theorem}{Theorem}
\newtheorem{lemma}{Lemma}
\newtheorem{corollary}{Corollary}
\newtheorem{theorem**}{\bfseries Theorem}
\newtheorem*{theorem*}{\bfseries Theorem}
\newtheorem*{lemma*}{\bfseries Lemma}
\newtheorem*{rep@theorem}{\rep@title}
\newcommand{\newreptheorem}[2]{%
\newenvironment{rep#1}[1]{%
 \def\rep@title{#2 \ref{##1}}%
 \begin{rep@theorem}}%
 {\end{rep@theorem}}}
\newtheorem*{rep@assumption}{\rep@title}
\newcommand{\newrepassumption}[2]{%
\newenvironment{rep#1}[1]{%
 \def\rep@title{#2 \ref{##1}}%
 \begin{rep@assumption}}%
 {\end{rep@assumption}}}
\newtheorem*{rep@lemma}{\rep@title}
\newcommand{\newreplemma}[2]{%
\newenvironment{rep#1}[1]{%
 \def\rep@title{#2 \ref{##1}}%
 \begin{rep@lemma}}%
 {\end{rep@lemma}}}
\newtheorem*{rep@proposition}{\rep@title}
\newcommand{\newrepproposition}[2]{%
\newenvironment{rep#1}[1]{%
 \def\rep@title{#2 \ref{##1}}%
 \begin{rep@proposition}}%
 {\end{rep@proposition}}}
\newtheorem*{rep@definition}{\rep@title}
\newcommand{\newrepdefinition}[2]{%
\newenvironment{rep#1}[1]{%
 \def\rep@title{#2 \ref{##1}}%
 \begin{rep@definition}}%
 {\end{rep@definition}}}
\newcommand{\loss}{Q}
\newcommand{\mmt}[2]{\indexvar{#1}{#2}{m}}
\newcommand{\gradient}[2]{\indexvar{#1}{#2}{g}}
\newcommand{\model}[2]{\indexvar{#1}{#2}{\theta}}
\newcommand{\modelp}{\theta'}
\newcommand{\learningrate}[1]{\gamma_{#1}}
\newcommand{\AvgMmt}[1]{\overline{m}_{#1}}
\newcommand{\avgloss}{\loss^{(\H)}}
\newcommand{\avggrad}[1]{\indexvar{}{#1}{\overline{g}}}
\newcommand{\lyap}[1]{V_{#1}}
\newcommand{\optloss}{\loss^*}
\newcommand{\localloss}[1]{\indexvar{#1}{}{Q}}
\newcommand{\dist}[1]{\indexvar{#1}{}{\mathcal{D}}}
\begin{document}

\newgeometry{left=1in,right=1in,top=1in,bottom=1in} %

\date{}
\maketitle

{\small
\begin{multicols}{2}
\begin{center}
    $^*$IC, EPFL \\
    \texttt{firstname.lastname@epfl.ch}\\
    $^\dagger$Sorbonne Université\\
     \texttt{pinot@lpsm.paris}
\end{center}

\end{multicols}
}

\vspace{8mm}

\begin{abstract}%
The success of machine learning (ML) has been intimately 
linked with the availability of large amounts of data, typically collected from heterogeneous sources and processed on vast networks of computing devices (also called {\em workers}).
Beyond accuracy, the use of ML in critical domains such as healthcare and autonomous driving calls for robustness 
against 
{\em data poisoning}
and 
some 
{\em faulty workers}.
The problem of {\em Byzantine ML} formalizes these robustness issues by considering a distributed ML environment in which workers (storing a portion of the global dataset) can deviate arbitrarily from the prescribed algorithm.
Although the problem has attracted a lot of attention from a theoretical point of view, its practical importance for addressing realistic 
faults (where the behavior of any worker is locally constrained) remains unclear. 
It has been argued that the seemingly weaker threat model where only workers' local datasets get poisoned 
is more reasonable. We prove 
that, while tolerating a wider range of faulty behaviors, Byzantine ML 
yields solutions that are, in a precise sense, optimal even under the weaker data poisoning threat model. Then, we study a generic data poisoning model wherein some workers have {\em fully-poisonous local data}, i.e., their datasets are entirely corruptible, and the remainders have {\em partially-poisonous local data}, i.e., only a fraction of their local datasets is corruptible. 
We prove that Byzantine-robust schemes yield optimal solutions against both these forms of data poisoning, and that the former
is more harmful when workers have {\em heterogeneous} local data.

\end{abstract}

\maketitle

\section{Introduction}
Learning a model using several machines over their collective data is appealing. The motivation behind this {\em distributed} machine learning (ML) scheme (a.k.a.~{\em federated learning}~\citep{KairouzOpenProblemsinFed2021}) is usually efficiency. Another motivation is privacy where each machine retains control over its local data.
The distributed ML problem can be precisely stated as follows in a standard {\em server-based system} comprising $n$ machines (referred as {\em workers}), represented by set $[n] \coloneqq \{1, \dots, \, n\}$, and a server.
Each worker $i$ has access to a common data space $\X$ through a local distribution $\D^{(i)}$. A model parameterized by $\theta \in \R^d$ incurs a loss for each data point $x \in \X$ measured by a real-valued {\em loss function} $q: \R^d \times \X \to \R$. Then, for each worker $i \in [n]$,  the \emph{local loss function} is given by
\begin{align}
\label{eq:local_loss}
    \loss^{(i)}(\model{}{}) \coloneqq \condexpect{x \sim \D^{(i)}}{q(\model{}{}, \, x)}.
\end{align} 
The server aims to compute a model parameter $\theta^* \in \R^d$ minimizing the \emph{global loss function}
\begin{align}
     \loss(\theta) \coloneqq \frac{1}{n} \sum_{i = 1}^n \loss^{(i)}\left( \model{}{} \right) . \label{eqn:global_loss_all}
\end{align}
We assume that the gradient of the loss function $q(\model{}{}, \, x)$ with respect to $\model{}{}$, denoted by $\nabla q(\model{}{}, \, x)$, exists and is continuous at all $\model{}{} \in \R^d$ and $x \in \X$, which is standard in ML~\cite{bottou2018optimization}.

\subsection{Background: Distributed ML with D(S)GD}
Minimizing the global average loss is typically achieved using a first-order distributed method such as the celebrated Distributed Gradient Descent (or DGD) and its {\em stochastic} variant DSGD~\citep{konevcny2016federated}.\footnote{For more details on these distributed methods, refer the book~\cite{bertsekas2015parallel}.} At each iteration $t \geq 0$, the server maintains a model~$\theta_t$, which is broadcast to all the workers. Then, each worker $i$ sends back to the server an update vector that is either their {\em local gradient} $\nabla \loss^{(i)}\left( \model{}{t} \right)$ in the case of DGD or an unbiased stochastic estimate $g_t^{(i)}$ of their local gradient in the case of DSGD.
Finally, the server updates the current model $\model{}{t}$ using the \emph{average} of the local updates sent by the workers. 
When all the workers are {\em honest}, i.e, correctly follow the instructions of the server,
the above iterative procedure provably converges to a parameter $\theta^*$ that is either a minimum or a stationary point of the global loss function depending on whether the function is convex or non-convex, respectively.

\subsection{Threats to Distributed ML}
\label{sec:threat_model}
DSGD (or DGD) is however extremely vulnerable to misbehaving workers that can deviate from the instructions given by the server~\citep{feng2015distributed, su2016fault, krum}. Such misbehavior could result from either inadvertent software/hardware bugs or malicious players controlling part of the system. 
Typically, misbehaving workers are modelled by considering an {\em adversary} that corrupts a fraction of the workers, whose identity is a piori unknown~\cite{guerraoui2023byzantine}. The corruptions induced by the adversary can be characterized by two threat models:
{\em Byzantine failure} (a.k.a. model poisoning) and {\em data poisoning}~\cite{shejwalkar2021manipulating, equivalenceposiong}. 
\begin{enumerate}
\setlength{\itemsep}{0.5em}
    \item \textbf{Byzantine failure.}
    In this particular threat model, we assume that a corrupted worker can deviate arbitrarily 
    from its prescribed algorithm~\citep{lamport1982byzantine}. In the context of DSGD, a Byzantine worker can send (arbitrary) malicious vectors for its local gradients to the server~\citep{little, empire}. 

    \item \textbf{Data poisoning.} In this particular threat model, we assume that a corrupted worker follows the prescribed algorithm correctly but its local dataset can be poisoned~\citep{MahloujifarMM19}. In the context of DSGD, while the gradients sent by a worker $i$ need not be arbitrary, they can correspond to a data distribution $\widetilde{\D}^{(i)}$ that differs from the true data distribution $\D^{(i)}$.
\end{enumerate}
    
Note that the former, i.e., the Byzantine failure threat model, subsumes the latter, i.e., the data poisoning threat model. Nevertheless, the latter has received more attention in the past mainly due to its relevance even in the conventional {\em centralized} ML~\citep{charikar2017learning,diakonikolas2019sever,prasad2020robust}. Although the defenses proposed for data poisoning can be extended to Byzantine threat model in distributed ML, e.g., see~\citep{chen2017distributed, yin2018byzantine}, they rely upon {\em data homogeneity}, i.e., the honest workers are assumed to have identical local data distributions~\citep{dia22}. In general distributed ML however the workers have {\em heterogeneous data}, i.e., their local data distributions are distinct~\citep{collaborativeElMhamdi21, data2021byzantine_icml, karimireddy2022byzantine, farhadkhani2023robust}.
The data poisoning threat can be further classified into two cases: {\em fully-poisonous local data} and {\it partially-poisonous local data}. Suppose that worker $i$ is corrupted by an adversary. In the case of fully-poisonous local data,  
the entire local dataset of worker $i$ can be poisoned, i.e., $\widetilde{\D}^{(i)}$ is truly arbitrary. In the case of partially-poisonous local data, only a fraction (of unknown identity) of worker $i$'s local dataset is corrupted. These two forms of data poisoning in distributed ML were introduced in~\citep{MahloujifarMM19}.

\subsection{Byzantine failure vs data poisoning}
Given the heterogeneous nature of workers' data in distributed ML, it seems reasonable to seek novel solutions to data poisoning. But what about Byzantine failures? One could argue that a truly arbitrary behavior is largely fictitious and unlikely to be realized in practice~\citep{shejwalkar2022back}. Indeed, each worker of a distributed system is typically restricted to very limited local information and cannot possibly be omniscient, unlike what is assumed in the Byzantine threat model~\cite{karimireddy2022byzantine, farhadkhani2022byzantine}. Somehow, the cost of defending against Byzantine workers might not be justifiable compared to the cost for defending against data poisoning. But what is that cost difference anyway? The motivation of this work is to address that question, and equivalently, the following question: 
\begin{center}
    \it Is defending against Byzantine failure an overkill with respect to data poisoning?
\end{center}
We answer this question negatively in the context of a large class of ML problems. 
We prove (perhaps surprisingly) that, although the Byzantine failure threat model is strictly stronger, the best learning guarantees that a first-order distributed algorithm, such as DSGD, can achieve under this threat are optimal even in the weaker data poisoning threat model. Furthermore, we precisely characterize the impact on the learning due to both full-poisonous and partially-poisonous local data. We show that in real-world applications when workers' datasets are heterogeneous, see~\cite{KairouzOpenProblemsinFed2021}, fully-poisonous local data is a stronger adversarial setting. Our contributions are summarized in the following.

\subsection{Main results}
\label{sec:mainres}
\paragraph{Solution to Byzantine failure is tight with respect to data poisoning.} We consider the class of ML problems that can be solved by optimizing $L$-Lipschitz smooth loss functions satisfying the $\mu$-PL inequality, where the local gradients (of honest workers)
have bounded covariance trace of $\sigma^2$. These conditions are satisfied in many cases~\cite{bottou2018optimization}. We further assume that the global {\em gradient dissimilarity} that characterizes data heterogeneity is bounded by $\heter^2$, which is essential to tackling misbehaving workers of either type~\cite{karimireddy2022byzantine, allouah2023fixing}. We assume that the total number of fully corrupted workers is bounded by $f$. Note that the case of $f \geq n/2$ is trivial as the learning error can be arbitrarily large  (Lemma 1 in~\citep{liu2021approximate}): we thus assume $f < n/2$ in all our results.

\begin{enumerate}[leftmargin = 1.7em]
\setlength{\itemsep}{0.5em}

    \item {\bf Lower bound under data poisoning.} We first characterize the suboptimality gap (or error) of a stochastic first-order distributed algorithm under data poisoning. Specifically, we show that with $f$ workers with corrupted data the error is in $\Omega \left( \textcolor{black}{\frac{f}{n} \cdot \frac{\heter^2}{\mu}} \right)$. Moreover, the convergence rate (a.k.a.~iteration complexity) to realize an $\varepsilon$-approximation of this error is in
    \begin{equation}
    \label{eqn:lb}
        \Omega \left( \textcolor{black!100}{ \frac{1 + f}{n} \cdot \frac{\sigma^2}{\mu \varepsilon} } + \textcolor{black}{\frac{L}{\mu} \cdot \log\frac{Q_0}{\varepsilon}}\right)\enspace,
    \end{equation}
    where $Q_0$ is the initial error of the algorithm.
    These lower bounds characterize how {\em good} and {\em fast} we can learn using $n$ workers when $f$ of the workers suffer from local data poisoning.

    \item {\bf Matching upper bound under Byzantine failure.} We then consider the Byzantine-robust adaptation of DSGD, incorporating distributed {\em Polyak's momentum} and {\em coordinate-wise trimmed mean} from~\cite{farhadkhani2022byzantine}. We show that, despite the presence of $f$ Byzantine corrupted workers, this algorithm achieves an
    error in $ \mathcal{O} \left( \textcolor{black}{\frac{f}{n} \cdot \frac{\heter^2}{\mu}} + \varepsilon\right)$ with a convergence rate of 
    \begin{equation}
        \mathcal{O}  \left( \textcolor{black!100}{ \frac{1 + f}{n} \cdot \frac{K\sigma^2}{\mu \varepsilon} } + \textcolor{black}{\frac{L}{\mu} \cdot \log\frac{Q_0}{\varepsilon}} \right)\enspace, \label{eqn:intro_ub}
    \end{equation}
    where $K \coloneqq \frac{L}{\mu}$ is the \emph{condition number} of the average loss function for the honest workers. Hence, when $K \in \mathcal{O}(1)$, we get a matching upper bound to the lower bound in the data poisoning threat (which automatically also applies to the Byzantine failure threat). To the best of our knowledge, this is the first tight analysis of Byzantine robustness in terms of the convergence rate of a first-order method.  
    The state-of-the-art result in~\cite{allouah2023privacy} features a {\em sublinear} convergence rate in $\mathcal{O}\left(\frac{1}{\sqrt{\varepsilon}}\right)$ even when honest workers compute exact local gradients, i.e., $\sigma = 0$.\\
\end{enumerate}

\paragraph{Partially-poisonous vs fully-poisonous local data.} We then consider a scenario where in addition to having $f$ out of $n$ workers with fully-poisonous local datasets, each worker can have partially-poisonous local data. Specifically, we assume that each worker $i$ has $b$ number of corruptible data points out of $m$ total data points. Note that in this particular case, for each worker $i$ the distribution $\D^{(i)}$ is given by the uniform distribution over the $m - b$ incorruptible local data points. We prove that the optimization error is in
\begin{align*}
    \Theta\left(\frac{f}{n}\cdot\frac{\zeta^2}{\mu} + \frac{b}{m} \cdot\frac{\sigma^2}{\mu} \right) \enspace .
\end{align*}
We show that the above error, which is optimal in general, can be achieved using a {\em Byzantine-robust} first-order method with an exponential convergence rate (i.e., logarithmic iteration complexity). Hence, demonstrating the tightness of Byzantine-robust schemes even against data poisoning at the local level. Moreover, as the error resulting from partially-poisonous local data is independent of the heterogeneity factor $\zeta$, this result also shows that in practical distributed ML applications, where dataset heterogeneity among workers is often significant~(Karimireddy et al., 2020), fully-poisonous local data alone (i.e., $\frac{f}{n} = \delta > 0$, and $b = 0$) is a stronger adversarial setting than partially-poisonous local data alone (i.e., $\frac{b}{m} = \delta > 0$, and $f = 0$), when considering the same fraction $\delta$ of corrupted data points in the system.

\subsection{Key elements of our proof}

Our proof for the lower bound in the homogeneous case, i.e., the first term in~\eqref{eqn:lb}, involves an extension of Huber's general contamination model~\citep{huber64,dia22}. Specifically, we consider a special distributed ML problem of mean estimation where each 
worker samples data points from a common distribution $\D$, and the goal for the server is to compute the true mean of $\D$ in the case when $f$ out of $n$ workers can sample data points from arbitrary distributions. We show that solving this problem using a robust implementation of DSGD with $T$ iterations reduces to robust mean estimation using $n$ {\em batches} of $T$ i.i.d.~data points from $\D$ with $f$ batches being arbitrarily corrupted. 
To derive the lower bound due to heterogeneity, we consider the mean estimation problems with workers sampling data points from two distinct Dirac delta distributions with means $\frac{\heter}{\mu} \sqrt{\frac{n}{f}}$ apart.
We conclude the result by considering two indistinguishable executions, exploiting the anonymity of corrupted workers. Details can be found in Section~\ref{sec:lower_bnd}.
 The more challenging part of our analysis lies in proving a tight upper bound in the Byzantine setting.
 To prove the matching upper bound, we consider a Byzantine-robust adaptation of DSGD, originally proposed in~\cite{farhadkhani2022byzantine}, that uses Polyak's momentum operation at workers' end
 and replaces the averaging at the server by coordinate-wise trimmed mean. Although this algorithm has been shown to guarantee a tight asymptotic error under Byzantine threat model~\citep{allouah2023fixing}, its convergence rate
 remained loose for the specific class of PL functions that we consider (cf.~\cite{allouah2023privacy, data2021byzantine_icml}). To overcome the shortcoming, we consider a scheduled diminishing step sizes (or learning rates), generalizing the results on the tightness of SGD~\citep{stich19,khaled20}.
 The caveat of varying step sizes however is that it leads to {\em dynamic} momentum coefficient, if we are to obtain a tight convergence rate in the presence of Byzantine failures. This renders the existing proof techniques for analyzing the convergence of this particular class of algorithms inapplicable (see~\cite{Karimireddy2021, farhadkhani2022byzantine, allouah2023fixing}), mainly because we can no longer obtain a uniform bound on the {\em momentum drifts}. While~\cite{allouah2023privacy} addresses this challenge using a {\em time-variant} Lyapunov function (see~\cite[Appendix D.2.1]{allouah2023privacy}), the resulting convergence analysis is loose in the precise sense that it features a sublinear convergence rate, which we mentioned above in Section~\ref{sec:mainres}.
 To remedy this, we design a novel {\em time-invariant} Lyapunov function that includes an additive term of appropriately scaled momentum drift (see Section~\ref{sec:intuition}). %

\subsection{Conjecture on the tightness of the upper bound}

Our upper bound (in~\eqref{eqn:intro_ub}) holds for any smooth PL loss function.
Our lower bound (in~\eqref{eqn:lb}) however is derived by considering a quadratic loss function
that is strongly convex\footnote{Strong convex functions constitute a subclass of PL functions.} with condition number $K =1$, which renders 
our overall analysis loose in terms of $K$. We however conjecture our upper bound to be tight (even in the condition number) for the class of loss functions we consider. Indeed, if we assume $f = 0$, our upper bound matches the best known result for the class of smooth PL loss functions~\citep{karimi16}. Moreover, while we are not aware of any lower bound in stochastic optimization that is specific to the PL functions, it was recently shown in~\citep{PL-lower} that, in the non-stochastic case, the dependence of the lower bound on the condition number is indeed different for strongly convex and PL functions. Accordingly, we believe that obtaining a tight result in terms of the condition number $K$ would involve demonstrating that, for general PL loss functions, the convergence rate of a stochastic first-order method is in $\Omega \left( \textcolor{black!100}{ \frac{1 + f}{n} \cdot \frac{K\sigma^2}{\mu \varepsilon} } + \textcolor{black}{\frac{L}{\mu} \cdot \log\frac{Q_0}{\varepsilon}} \right).$               

\subsection{Other related work}
Prior work on Byzantine ML with tight asymptotic error guarantees, relying on either Polyak's momentum or variance-reduction schemes, include~\cite{karimireddy2022byzantine, allouah2023fixing, gorbunov2023variance}. These papers however do not provide tight analysis on the convergence rate for the class of PL functions (or even strongly convex functions) that we consider. The tightest existing result provided in~\cite{allouah2023privacy} features a sublinear convergence rate even in the absence of any stochasticity, compared to the optimal linear convergence rate that we prove. Moreover, many of these results rely on 
constant step sizes (i.e., learning rates) and momentum coefficients, which yield a \emph{uniform bound} on the drift between the local momentums (e.g., Lemma 1 in \cite{farhadkhani2022byzantine}, Lemma 8 in \cite{karimireddy2022byzantine}, and Lemma 6 in \cite{allouah2023fixing}). However, obtaining a tight convergence rate for PL functions calls for diminishing step sizes~\cite{stich19,khaled20}. As the momentum coefficients are coupled with the step sizes, for the sake of Byzantine-robustness, diminishing step sizes result in a dynamic momentum coefficients. Accordingly, we can only obtain a recursive bound on the momentum drift, which makes the analysis more intricate.

Another work that provides a comparison between the Byzantine failure and the data poisoning threats in distributed ML includes~\cite{alistarh2018byzantine}. However, there are several notable distinctions. First,~\cite{alistarh2018byzantine} considers the i.i.d.~case where all honest workers sample data points from the same distribution. Second, the lower and upper bounds in~\cite{alistarh2018byzantine} are not obtained under exactly the same assumptions. The lower bound (Theorem 5.5 in~\cite{alistarh2018byzantine}) is derived by considering a Gaussian data distribution, whereas the upper bound relies on the assumption that the distribution of the stochastic gradients has a uniformly bounded support, 
which is not the case for a Gaussian distribution. We remark that the bounded-support assumption considerably weakens the Byzantine failure threat model as it ensures that the pairwise distances between honest local gradients are bounded. Until now, it remained unclear whether a tight upper bound could be obtained without restricting the Byzantine adversary, and under standard learning assumptions.  

\subsection{Paper organization}
Section~\ref{sec:ModelSetting} presents the problem  
statement. 
Section~\ref{sec:lower_bnd} presents the lower bound under the (fully-poisonous) data poisoning threat model. Section~\ref{sec:upper_bound} presents the matching upper bound under Byzantine failure. Section~\ref{sec:intuition} presents an outline of our upper bound proof, specifically the analysis of the algorithm. Section~\ref{sec:partially_poisonous} introduces the case of partially-poisonous local data and compare it with the fully-poisonous case. Section~\ref{sec:remarks} provides concluding remarks and a discussion on open problems. Detailed proofs are deferred to appendices~\ref{app:lower_bnd} and~\ref{appendix:upperBound}.

\section{Problem Statement and Assumptions}  
\label{sec:ModelSetting}
We consider a server-based system architecture with $n$ workers and a central server. The workers only communicate with the server and there is no communication between workers. We assume that at most $f$ out of $n$ workers may be faulty, either as per Byzantine failure or fully-poisonous local data. We denote by $\H$ the set of $n-f$ honest workers, and let $\loss^{(\H)} (\model{}{})$ denote their average loss, i.e., 
\begin{equation}
    \loss^{(\H)} (\model{}{}) = \frac{1}{\card{\H}} \sum_{i \in \H} \loss^{(i)}\left( \model{}{} \right), \quad \forall \model{}{} \in \R^d \enspace. \label{eqn:global_loss}
\end{equation}
We assume that $\loss^{(\H)}$ admits a minimum, i.e., $\exists \theta^* \in \R^d$ such that for all $\theta \in \R^d$, $\loss^{(\H)}(\theta) \geq \loss^{(\H)}(\theta^*)$. We let $Q^* \coloneqq \loss^{(\H)}(\theta^*)$. Furthermore, we consider the class of smooth loss functions satisfying the Polyak-\L{}ojasiewicz (PL) inequality, which is more general than strong convexity~\citep{bottou2018optimization} and can indeed be satisfied by some non-convex functions~\citep{karimi16}. 
\begin{assumption}[Smoothness]
\label{asp:lip}
    There exists $L < \infty$ such that for all $i \in [n]$ and $\model{}{}, \modelp \in \R^d$, 
    \begin{equation*}
        \norm{\nabla\localloss{i}(\model{}{}) - \nabla\localloss{i}(\modelp)} \leq  L \norm{\modelp-\model{}{}}\enspace.
    \end{equation*}
\end{assumption}
\begin{assumption}[PL-condition]
    \label{asp:polyak} There exists $ \mu \geq 0$ such that for all $\model{}{}\in \R^d$, 
     \begin{align*}
         \norm{\nabla \avgloss(\model{}{})}^2 \geq 2 \mu \left(\avgloss(\model{}{})- \optloss \right) \enspace.
     \end{align*}
\end{assumption}

As stated below, we also assume that the stochastic gradients computed by the honest workers have a bounded local covariance trace. This assumption is standard for analyzing the convergence of stochastic first-order methods~\citep{TangLYZL18}. For all $i \in \H$, by definition of $\localloss{i}$, and the assumption that $ \nabla q \left(\model{}{},x\right)$ is continuous in and $\model{}{}$ and $x$, we have $\condexpect{x \sim \dist{i}}{\nabla q\left(\model{}{},x\right)} = \nabla \localloss{i}(\model{}{})$.
\begin{assumption}[Stochasticity] 
\label{asp:bnd_var}
There exists $\sigma < \infty$ such that for all $i \in \H$ and $\model{}{}\in \R^d$,
\begin{align*}
    \condexpect{x \sim \dist{i}}{\norm{\nabla q \left(\model{}{},x\right)-\nabla \localloss{i}(\model{}{})}^2} \leq \sigma^2 \enspace.   
\end{align*}
\end{assumption}
Lastly, as stated below, we assume the local gradients of the honest workers to have bounded diversity (or {\em heterogeneity}) over the parameter space. Without this assumption we cannot obtain meaningful guarantees in the threat models we consider, as shown in~\citep{karimireddy2022byzantine}. 

\begin{assumption}[Heterogeneity]
    \label{asp:heter}
    There exists $\heter < \infty$ such that for all $\model{}{}\in \R^d$, 
    \begin{equation*}
        \frac{1}{\card{\H}}\sum_{i \in \H} {\norm{\nabla\localloss{i}(\model{}{})-\nabla\avgloss(\model{}{})}^2} \leq \heter^2 \enspace.
    \end{equation*}
\end{assumption}

\section{Lower Bound with Data Poisoning (fully-poisonous local data)}
\label{sec:lower_bnd}

We characterize here the limitation of iterative stochastic first-order distributed algorithms in the data poisoning model. Specifically, we consider a generic randomized distributed algorithm $\Pi$ that executes in $T$ iterations. We define an execution of $\Pi$ as follows. The server begins by choosing an initial parameter vector $\model{}{0}$. In each iteration $t \geq 0$, the server maintains a parameter vector $\model{}{t} \in \R^d$ that is broadcast to the workers. Each honest worker $i$ then samples one data point $x^{(i)}_t$ from its local distribution $\D^{(i)}$, computes a gradient $g^{(i)}_t = \nabla q\left( \model{}{t}, x^{(i)}_t \right)$, and sends back to the server a message
$$\text{msg}^{(i)}_t = \Psi_t\left( (\model{}{\tau})_{0 \leq \tau \leq t}, \,  (g^{(i)}_\tau)_{0 \leq \tau \leq t}  \right) \enspace,$$ 
where $\Psi_t: \R^{d \times t} \times \R^{d \times t} \to \R^d $. A faulty worker $j$ with a fully-poisonous dataset behaves exactly like an honest worker, except it samples its data point from an arbitrary distribution $\widetilde{\D}^{(j)}$ instead of its true local distribution $\D^{(j)}$. The server then proceeds to update the current parameter vector $\model{}{t}$ to $\model{}{t+1}$. At the completion of the $T$-th iteration, the server outputs $\hat{\model{}{}}$. Note that this generic formulation includes the class of first-order optimization methods such as D-SGD and distributed momentum~\cite{POLYAK19641, farhadkhani2022byzantine}.  
We obtain a lower bound on the sub-optimality of $\Pi$, presented in Theorem~\ref{thm:lower_bound_strongly_convex}, when there are at most $f < n/2$ faulty workers. The lower-bound is agnostic to the functions $\{\Psi_t\}_{t \in \{0,\dots, T-1\}}$ that the workers implement to generate their messages, or the methods that the server implements to update its parameter vectors and generate the output. 

\begin{theorem}
    \label{thm:lower_bound_strongly_convex}
    Suppose assumptions~\ref{asp:lip},~\ref{asp:polyak},~\ref{asp:bnd_var}, and~\ref{asp:heter} hold true. Let $Q_0 := \avgloss \left({\model{0}{}} \right)- Q^* $. 
    Consider algorithm $\Pi$ as described above. %
    If there exists $A \geq 0$ such that
    $\condexpect{\Pi}{\avgloss \left( \hat{\model{}{}} \right) - Q^*} \leq A$, then %
    \begin{align*}
        A \in \Omega\left(\frac{f}{n} \cdot \frac{\heter^2}{\mu}\right) \enspace,
    \end{align*}
     where $\condexpect{\Pi}{\cdot}$ denotes the expectation over the randomness in $\Pi$.  Moreover, 
     we can guarantee that $\condexpect{\Pi}{\avgloss \left( \hat{\model{}{}} \right) - Q^*} \in \mathcal{O}\left(\frac{f}{n} \cdot \frac{\heter^2}{\mu} + \varepsilon\right)$ only if 
     \begin{align*}
         T \in \Omega \left( \textcolor{black!100}{ \frac{1 + f}{n} \cdot \frac{\sigma^2}{\mu \varepsilon} } + \textcolor{black}{\frac{L}{\mu} \cdot \log\frac{Q_0}{\varepsilon}} \right).
     \end{align*}
\end{theorem}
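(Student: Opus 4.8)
The plan is to establish the two assertions of the theorem—the error floor $\Omega\!\left(\frac{f}{n}\cdot\frac{\zeta^2}{\mu}\right)$ and the iteration bound—through three separate hard instances, all built from quadratic (or PL) losses, and then to note that a single algorithm $\Pi$ meeting the claimed guarantee on \emph{every} admissible instance must simultaneously beat each of them; since the same $T$ then satisfies $T\geq a$ and $T\geq b$, we get $T\geq\frac12(a+b)$ and the additive form follows. Throughout I use losses $q(\theta,x)=\frac{\mu}{2}\norm{\theta-x}^2$, for which $\nabla q(\theta,x)=\mu(\theta-x)$, the honest minimizer is the honest data mean, the smoothness and PL constants both equal $\mu$ (condition number one, as the paper flags), and—crucially—the stochastic noise $-\mu x^{(i)}_t$ is additive and \emph{independent of the query point} $\theta_t$. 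This last feature is what lets the adaptive first-order transcript collapse to a fixed statistical experiment.

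\textbf{Error floor.} Fix $\sigma=0$ and encode heterogeneity through two Dirac instances that exploit the anonymity of corrupted workers. Take $f$ swing workers sampling the Dirac at $+v$, another $f$ at $-v$, and the remaining $n-2f$ workers (always honest) at $0$, with $v\asymp\frac{\zeta}{\mu}\sqrt{n/f}$ chosen to saturate Assumption~\ref{asp:heter}. In execution $0$ the ``$-v$'' group is faulty and poisons its data to the Dirac at $-v$; in execution $1$ the ``$+v$'' group is faulty and poisons to $+v$. The per-worker data is then identical and deterministic across the two executions, so the message stream, and hence the law of $\hat\theta$, is identical, whereas the honest minimizers $\theta_0^*,\theta_1^*$ lie $\asymp\frac{\zeta}{\mu}\sqrt{f/n}$ apart. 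Since $\avgloss_k(\theta)-\optloss_k=\frac{\mu}{2}\norm{\theta-\theta_k^*}^2$, the two-point inequality $\norm{x-a}^2+\norm{x-b}^2\geq\frac12\norm{a-b}^2$ forces $\max_{k}\mathbb{E}_\Pi[\avgloss_k(\hat\theta)-\optloss_k]\geq\frac{\mu}{8}\norm{\theta_0^*-\theta_1^*}^2\in\Omega\!\left(\frac{f}{n}\cdot\frac{\zeta^2}{\mu}\right)$, independently of $T$.

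\textbf{Statistical term.} Now set $\zeta=0$, so the floor is $0$ and the target is $\varepsilon$. Reaching error $\varepsilon$ means $\norm{\hat\theta-m}^2\leq 2\varepsilon/\mu$, i.e. $\mu\hat\theta$ estimates the common mean $m$ to squared accuracy $2\mu\varepsilon$. Because the noise is query-independent, the whole $T$-iteration transcript is a measurable function of $nT$ i.i.d. samples, of which $fT$ are adversarial---``$n$ batches of $T$ points, $f$ corrupted''---and I invoke two mean-estimation lower bounds. The uncorrupted part is Le Cam's two-point method on two laws whose means differ by $\asymp\sqrt{\mathrm{tr}\,\mathrm{Cov}/(nT)}$, forcing $\norm{\hat\theta-m}^2\in\Omega\!\left(\frac{\sigma^2}{\mu^2 nT}\right)$, hence $T\in\Omega\!\left(\frac1n\cdot\frac{\sigma^2}{\mu\varepsilon}\right)$. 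The corruption part uses a robust mean-estimation lower bound under the \emph{bounded second-moment} contamination model---admissible precisely because Assumption~\ref{asp:bnd_var} constrains only the covariance trace, so honest data may be heavy-tailed---contributing squared error $\Omega\!\left(\frac{f}{n}\cdot\frac{\sigma^2}{\mu^2 T}\right)$ and thus $T\in\Omega\!\left(\frac{f}{n}\cdot\frac{\sigma^2}{\mu\varepsilon}\right)$. Their sum gives the $\frac{1+f}{n}\cdot\frac{\sigma^2}{\mu\varepsilon}$ contribution.

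\textbf{Optimization term and main obstacle.} Finally set $\sigma=0$ and $f=0$: the problem degenerates to noiseless single-oracle minimization of an $L$-smooth $\mu$-PL function with $\avgloss(\theta_0)-\optloss=Q_0$, for which the deterministic first-order lower bound of~\citep{PL-lower} yields $T\in\Omega\!\left(\frac{L}{\mu}\log\frac{Q_0}{\varepsilon}\right)$; the PL (rather than strongly convex) structure is exactly what rules out acceleration and produces the $\frac{L}{\mu}$, not $\sqrt{L/\mu}$, factor. Taking the worst of the three admissible instances and using $T\geq\frac12(a+b)$ closes the proof. I expect the delicate step to be the statistical reduction: one must argue that the server's adaptively chosen query points $\theta_t$ leak no information beyond the $nT$ samples (which is the whole reason for the additive-noise quadratic), and one must extract the $\frac{f}{n}$---rather than $(\frac{f}{n})^2$---corruption dependence, which hinges on invoking the bounded-variance (heavy-tailed) contamination model instead of a sub-Gaussian one.
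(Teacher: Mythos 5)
Your proposal is correct and takes essentially the same route as the paper's own proof: the same three hard-instance cases (indistinguishable executions over Dirac data for the $\Omega\left(\frac{f}{n}\cdot\frac{\zeta^2}{\mu}\right)$ floor, a collapse of the adaptive first-order transcript --- possible because the quadratic loss makes the gradient noise query-independent --- to robust mean estimation from $n$ batches of $T$ points with $f$ corrupted batches, and a known deterministic lower bound for the $\frac{L}{\mu}\log\frac{Q_0}{\varepsilon}$ term), combined by taking the maximum over the cases. The only deviations are cosmetic: your symmetric $\pm v$ construction replaces the paper's $\{0,v\}$ partition of workers, and you invoke the batch-level bounded-variance contamination lower bound as a black box where the paper instantiates it explicitly, taking $\mathcal{D}$ to be a Dirac at $0$ and $\mathcal{D}'$ a rare-spike distribution with $\mathrm{TV}\left(\mathcal{D}^T,\mathcal{D}'^T\right)\leq \frac{2f}{n}$.
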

\begin{proof}[Proof sketch] We present here a sketch of our proof, and defer the formal proof to Appendix~\ref{app:lower_bnd}.
We prove the theorem for the scalar domain, i.e., $d = 1$, $\X \in \R$, and a quadratic loss, i.e., $q(\model{}{}, x) = \frac{\mu}{4} \left( \model{}{} - x \right)^2$. 
As the lower bound is established
using the squared Euclidean norm, the proof applies directly to $d > 1$ since the instances used in the proof are still valid in a 1-dimensional subspace.
We consider two separate cases, where the first case obtains the non-vanishing error term and the second case lower bounds the convergence rate.

{\bf First case.} In this case, using the idea in Theorem III in \citep{karimireddy2022byzantine} we derive a lower bound on the error when honest workers may have non-identical data distributions, which is the non-vanishing error term in Theorem~\ref{thm:lower_bound_strongly_convex}. We partition the set of workers into $S = \{1, \ldots, \, n-f\}$ and $\hat{S} = \{n-f+1, \ldots, \, n\}$, and consider the following Dirac distributions:
    \begin{align*}
        \text{Distribution} \quad \D^{(i)} : \begin{cases}
          x = 0  ~ \text{, w.p. } ~ 1 ~ ; & i \in S\\
          x = \frac{2 \heter}{\mu}\sqrt{\frac{n-f}{f}} ~ \text{, w.p. } ~ 1 ~ ; & i \in \hat{S}
      \end{cases} 
    \end{align*}
    We consider two valid executions of $\Pi$ with different identities for the honest workers. In Execution 1, $\H = S$ and in Execution 2, $\H = \{1, \ldots, \, n-2f\} \cup \hat{S}$. As the guarantee of algorithm $\Pi$ must hold true in both these executions, upon simply applying the condition on the loss function $\avgloss (\hat{\model{}{}} )$ in both executions, we conclude that $ \varepsilon \in \Omega\left(\nicefrac{f}{n} \cdot 
    \nicefrac{\heter^2}{\mu}\right).$

{\bf Second case.} In this case, we consider homogeneity, i.e., let $\D^{(i)} = \D$ for all $i \in \H$. Recall that in each execution of $\Pi$ each worker computes a batch of $T$ stochastic gradients, and $f$ out of these $n$ batches  may be corrupted. Thus, upon extending the Huber's contamination model (see e.g.~\citep{dia22}) to batch sampling, we can show that it is impossible for $\Pi$ to tell whether the honest workers send stochastic gradients corresponding to distribution $\D$ or another distribution $\D'$, both satisfying Assumption~\ref{asp:bnd_var}, if $\text{TV}\left( \D^T, \, \D'^T \right) \leq \frac{2f}{n}$.\footnote{$\text{TV}$ represents the {\em total variation distance} between two probability measures~\citep{gibbs2002choosing}.} We realize this scenario by the following instances:
\begin{align*}
        \text{Distribution } ~ \D: & \quad x = 0 ~ ,\text{ \quad w.p.} ~ 1.\\
        \text{Distribution } ~ \D': & \quad x =
        \begin{cases}
          \frac{2\sigma}{\mu} \sqrt{\frac{Tn}{2f}}  & ~ , \text{ \quad w.p.} ~ \frac{2f}{nT} \\
          0 & ~ , \text{ \quad w.p. } ~ 1 - \frac{2f}{nT} 
      \end{cases}
    \end{align*}
As $\left(\condexpect{x \sim \D'}{x} -  \condexpect{x \sim \D}{x}\right)^2 = \left(\frac{2\sigma}{\mu} \sqrt{\frac{2f}{nT}} - 0 \right)^2 = \frac{f}{n} \cdot \frac{4 \sigma^2}{\mu^2 T}$, for the considered quadratic loss function $q(\model{}{}, x) = \frac{\mu}{4} \left( \model{}{} - x\right)^2$, we conclude that 
\begin{align*}
    \condexpect{\Pi}{\avgloss \left( \hat{\model{}{}} \right) - Q^*} \in \Omega\left( \frac{f}{n} \cdot \frac{\sigma^2}{\mu T} + \frac{1}{n} \cdot \frac{\sigma^2}{\mu T}\right) \enspace,
\end{align*}
which means to get an $\varepsilon$-approximate solution, we must have
\begin{align*}
    T \in \Omega\left( \frac{f+ 1}{n} \cdot \frac{\sigma^2}{\mu \varepsilon}\right).
\end{align*}
While the first term in the argument of $\Omega$ above comes from the fact that we cannot distinguish between the two valid distributions $\D$ and $\D'$, the second term is due to the classical lower bound on the minimax statistical error considering Gaussian distributions~\citep{wu2017lecture}, i.e., the worst-case squared-error incurred in estimating the mean of a distribution with variance $\sigma^2$ from at most $nT$ i.i.d.~samples. Finally, in the case where $\sigma = 0$ and $\zeta = 0$, i.e., all the honest workers send the same gradient vector, we have the lower bound of $\Omega(\nicefrac{L}{\mu} \cdot \log\nicefrac{Q_0}{\varepsilon})$, shown in~\cite{yue2023lower}. 

We conclude by composing the bounds obtained in the different cases.
\end{proof}

\section{Upper Bound with Byzantine failure}
\label{sec:upper_bound}

We present here a matching upper bound for Theorem~\ref{thm:lower_bound_strongly_convex},  considering a Byzantine adversary. We first describe the algorithm  we consider, and then present its convergence guarantee.

\subsection{Algorithm Description}
\label{sec:algo}

\begin{algorithm2e}[!htb]
    
    \SetKwInOut{Input}{Input}
    \SetKwInOut{Output}{Output}
    \SetAlgoLined
    \Input{$T \geq 2$, $\left(\learningrate{0},\dots,\learningrate{T-1}\right)$ and $\left(\beta_{0},\dots,\beta_{T-1}\right)$.}  
    \textcolor{violet}{\bf Server} chooses arbitrarily $\theta_0 \in \R^d$. Each \textcolor{black}{\bf honest worker} $i$ sets $m_{-1}^{(i)} = 0$.
    
    \For{$t=0$ to $T-1$}{
    \textcolor{violet}{\bf Server} broadcasts $\model{}{t}$ to all workers\;
    
    \For{each \textcolor{teal}{\bf honest worker} $i$ (in parallel)}{
        Compute a stochastic gradient $\gradient{i}{t}$, as defined in \eqref{eqn:localgradient};

        Send to the server the momentum $\mmt{i}{t}$, as defined in~\eqref{eqn:mmt_i};
    }
    
    \textcolor{darkgray} { \% A corrupted worker $i$ 
    may send an arbitrary value for $\mmt{i}{t}$ to the server.}
    
    \textcolor{violet}{\bf Server} updates the parameter vector  $\model{}{t+1} = \model{}{t} - \learningrate{t} \text{TM}^{(f)}\left(\mmt{1}{t}, \ldots, \, \mmt{n}{t} \right)$ \;
    }
    \Output{$\hat{\model{}{}}$ = $\model{}{T}$}

   \caption{DSGD with distributed momentum and trimmed mean aggregation}
    \label{algorithm:dsgd}
\end{algorithm2e}

The algorithm follows the skeleton of DSGD and imparts robustness to the learning procedure by applying a momentum operation at the workers' level and a trimmed mean operation at the server (instead of averaging), as described in Algorithm~\ref{algorithm:dsgd}. Essentially, in each iteration $t \geq 0$, each honest worker $i$ computes a stochastic gradient 
\begin{align}
     g_t^{(i)} := \nabla q\left( \model{}{t}, x^{(i)} \right), \quad \text{where $x^{(i)} \sim \D^{(i)}$} \enspace,\label{eqn:localgradient}
\end{align}
and
returns a {\em Polyak's momentum} of its stochastic gradients, denoted by $\mmt{i}{t}$ and defined as 
\begin{align}
    \mmt{i}{t} = \beta_t \mmt{i}{t-1} + (1 - \beta_t) g_t^{(i)} \enspace, \label{eqn:mmt_i}
\end{align}
where $\beta_t \in [0, \, 1)$ is the {\em momentum coefficient}, and $\mmt{i}{-1} = 0$ by convention. 
The server updates its current parameter vector $\model{}{t}$ by aggregating the workers' momentums using {\em coordinate-wise trimmed mean} (TM), defined below. Hereafter, for any $z \in \R^d$ and $k \in [d]$, we denote by $[z]_k$ the $k$-th coordinate of $z$. 
Then, given $n$ input vectors $z_1, \dots, \, z_n$ $\in \R^d$, for all $k \in [d]$, we denote by $\tau_k$ the permutation on $[n]$ that sorts the $k$-th coordinates of the input vectors in non-decreasing order, i.e., $[z_{\tau_k(1)}]_k\leq [z_{\tau_k(2)}]_k \leq\ldots \leq [z_{\tau_k(n)}]_k$. Then, the trimmed mean of $z_1, \ldots, \, z_n$, with trimming parameter $f$ is a vector in $\R^d$ whose $k$-th coordinate is defined as follows,
\begin{equation*}
    \left[\text{TM}^{(f)}(z_1, \ldots, z_n)\right]_k \coloneqq \frac{1}{n-2f} \sum_{j \in [f+1,n-f]} [x_{\tau_k(j)}]_k \enspace. \label{eqn:def_CwTM}
\end{equation*}
\subsection{Formal Statement}
\label{sec:convergence}

Theorem~\ref{thm:main} below establishes the convergence of Algorithm~\ref{algorithm:dsgd}, with a Byzantine adversary, 
assuming a scheduled decreasing step sizes and increasing momentum coefficients. Note that the algorithm is oblivious to the identity of faulty workers that may send arbitrary values to the server. We denote by $\expect{\cdot}$ the expectation on the randomness of the algorithm, formally defined in Appendix~\ref{appendix:upperBound}.
\vspace{3mm}

\noindent \fcolorbox{black}{gray!10}{
\parbox{0.97\textwidth}{\centering
\vspace{-5pt}
\begin{theorem}
\label{thm:main}
Suppose assumptions~\ref{asp:lip},~\ref{asp:polyak},~\ref{asp:bnd_var}, and~\ref{asp:heter} hold true. Consider Algorithm~\ref{algorithm:dsgd} with $T \geq 2$ and the following two options for the scheduled step sizes and momentum coefficients.
    \begin{itemize}
        \item \textbf{Option 1:} If $T \leq \frac{54\mu}{L}$, then, $\forall t \in \{0, \dots,T-1 \}$, \quad set $\gamma_t = \frac{1}{18L}$, \quad and \quad $\beta_{t}=0.$
        \item \textbf{Option 2:} If $T > \frac{54\mu}{L}$, then, $\forall t \in \{0, \dots,T-1 \}$, \quad set 
        \begin{center}
            $\gamma_t = \frac{1}{18L + \left[\frac{\mu}{6}(t-t_0+1) \right]^{+}}$ , \quad and \quad $\beta_{t} = 1 - 18L \gamma_{t-1} \enspace.$
        \end{center}
        Where $t_0=\left\lceil \frac{T}{2} \right\rceil$, $\gamma_{-1}=0$ by convention and $[\cdot]^{+}:=\max\{0,\cdot\}$.
    \end{itemize}
Then, the following holds true
\begin{align*}
        \expect{\avgloss\left(\hat{\model{}{}} \right) - \optloss}   \leq \frac{7}{6} Q_0 \cdot e^{-\frac{T}{108K}}+ \left(\lambda+\frac{1}{n-f} \right) \cdot \frac{4374 K \sigma^2}{T\mu}+ \frac{9 \lambda\heter^2}{2\mu} \enspace, 
    \end{align*}
    where $Q_0 := \avgloss(\model{}{0}) - \optloss$, $\lambda = \frac{6 f}{n-2f}\, \left( 1 + \frac{f}{n-2f} \right)$, and $K=\frac{L}{\mu}$. 
\end{theorem}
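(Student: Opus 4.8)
The plan is to bound the optimization error through a one-step analysis of a carefully chosen Lyapunov function, marrying the standard smoothness/PL descent argument with a bound on how far the aggregated momentum strays from the true honest gradient. First I would isolate the \emph{aggregation error} of the coordinate-wise trimmed mean. Writing $R_t := \text{TM}^{(f)}(\mmt{1}{t},\ldots,\mmt{n}{t})$ and $\AvgMmt{t} := \frac{1}{\card{\H}}\sum_{i \in \H}\mmt{i}{t}$ for the honest momentum average, the key structural fact is that $\text{TM}^{(f)}$ is an $(f,\lambda)$-robust aggregator, namely
\begin{equation*}
\norm{R_t - \AvgMmt{t}}^2 \leq \lambda \cdot \frac{1}{\card{\H}}\sum_{i \in \H}\norm{\mmt{i}{t} - \AvgMmt{t}}^2 ,
\end{equation*}
with exactly $\lambda = \frac{6f}{n-2f}\left(1+\frac{f}{n-2f}\right)$ as in the statement. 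This reduces the entire effect of the Byzantine workers to the \emph{diversity} of the honest momentums, which is precisely the quantity the momentum operation is designed to keep small.

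Next I would set up the descent and momentum recursions. Applying the smoothness descent lemma (Assumption~\ref{asp:lip}) to $\model{}{t+1} = \model{}{t} - \learningrate{t}R_t$ and decomposing $R_t = \nabla\avgloss(\model{}{t}) + (\AvgMmt{t} - \nabla\avgloss(\model{}{t})) + (R_t - \AvgMmt{t})$, the PL inequality (Assumption~\ref{asp:polyak}) turns the $\norm{\nabla\avgloss(\model{}{t})}^2$ contribution into a contraction $-2\mu(\avgloss(\model{}{t})-\optloss)$, leaving two error pieces: the \emph{momentum drift} $\AvgMmt{t}-\nabla\avgloss(\model{}{t})$ and the aggregation error already handled above. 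For the drift I would derive, using $\AvgMmt{t}=\beta_t\AvgMmt{t-1}+(1-\beta_t)\frac{1}{\card{\H}}\sum_{i\in\H}\gradient{i}{t}$ and Assumption~\ref{asp:bnd_var}, a recursion of the form
\begin{equation*}
\expect{\norm{\AvgMmt{t}-\nabla\avgloss(\model{}{t})}^2} \lesssim \beta_t^2\, \expect{\norm{\AvgMmt{t-1}-\nabla\avgloss(\model{}{t-1})}^2} + (1-\beta_t)^2\frac{\sigma^2}{\card{\H}} + \beta_t^2 L^2\learningrate{t-1}^2\, \expect{\norm{R_{t-1}}^2} ,
\end{equation*}
where the first term carries over past error, the second is the fresh-gradient variance (shrunk by the momentum), and the third comes from the parameter movement via smoothness; the honest-momentum diversity obeys an analogous recursion whose forcing terms are controlled by $\heter^2$ (Assumption~\ref{asp:heter}) and $\sigma^2$. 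Together these produce the $\heter^2$ and $\sigma^2$ contributions of the final bound.

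The crux is assembling these pieces into a recursion that telescopes \emph{despite the time-varying} $\beta_t$. Since $\beta_t = 1-18L\learningrate{t-1}$ is coupled to the decreasing step sizes of Option~2, the drift admits no uniform bound, so I would introduce a \emph{time-invariant} Lyapunov function
\begin{equation*}
\lyap{t} = \avgloss(\model{}{t}) - \optloss + c\,\learningrate{t-1}\,\expect{\norm{\AvgMmt{t-1}-\nabla\avgloss(\model{}{t-1})}^2} ,
\end{equation*}
with $c$ a fixed constant chosen so that the additive drift term simultaneously absorbs the smoothness-induced $\norm{R_{t-1}}^2$ feedback in the descent step and the $\beta_t^2$ carry-over in the drift recursion. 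The point of weighting the drift by $\learningrate{t-1}$ (rather than the $t$-dependent weight of~\cite{allouah2023privacy}) is that the one-step inequality then closes as $\lyap{t+1} \leq \left(1-\tfrac{\mu\learningrate{t}}{c'}\right)\lyap{t} + (\text{variance})\cdot\learningrate{t}^2 + \lambda\heter^2\cdot\learningrate{t}$, a clean contraction whose coefficients do not grow with $t$.

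Finally I would unroll this recursion separately for the two regimes. For Option~1 (small $T$) the constant step size gives pure geometric contraction, yielding the $\frac{7}{6}Q_0\,e^{-T/(108K)}$ term. For Option~2 the step size is constant on $[0,t_0)$ and decays like $1/(t-t_0)$ afterwards: the constant phase contracts the initial error to the $e^{-T/(108K)}$ factor (with $t_0 = \lceil T/2\rceil$), while the decaying phase summed against the $\learningrate{t}^2$ variance term produces the $\frac{K\sigma^2}{T\mu}$ rate, following the constant-then-decreasing schedule of~\cite{stich19,khaled20}; the non-vanishing $\frac{9\lambda\heter^2}{2\mu}$ term comes from the $\lambda\heter^2\learningrate{t}$ forcing, which does not decay and saturates at the fixed point. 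The main obstacle is exactly the design of this time-invariant Lyapunov function: selecting the scaling $c$ and verifying that the cross-terms arising from the aggregation error, the drift carry-over, and the smoothness feedback all collapse into a single contractive recursion with explicit constants is the delicate part, because a naive fixed-weight drift term fails to control the $\beta_t$-dependent carry-over once the step sizes diminish.
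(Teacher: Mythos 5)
Your overall architecture matches the paper's: the $(f,\lambda)$-robustness of coordinate-wise trimmed mean with exactly the stated $\lambda$, the smoothness-plus-PL descent step, separate recursions for the momentum deviation $\dev{t} = \AvgMmt{t}-\nabla\avgloss(\model{}{t})$ and for the honest-momentum diversity, and a final unrolling via the constant-then-decreasing schedule of~\cite{stich19,khaled20}. The gap is in the one piece you yourself flag as the crux: the design of the Lyapunov function. First, the function you propose, $\lyap{t}=\avgloss(\model{}{t})-\optloss+c\,\learningrate{t-1}\expect{\norm{\dev{t-1}}^2}$, is not time-invariant: its weight $c\learningrate{t-1}$ decays with $t$ under Option~2, which is precisely the step-size-coupled weighting that the paper attributes to~\cite{allouah2023privacy} and identifies as the source of the loose, sublinear rate. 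Second, and more seriously, this weighting provably cannot close the contraction you claim. The descent lemma injects the deviation with coefficient $\learningrate{t}$ (a term $+\learningrate{t}\expect{\norm{\dev{t}}^2}$), so $\lyap{t+1}$ carries total deviation content $(1+c)\learningrate{t}\expect{\norm{\dev{t}}^2}$, which must be charged against the deviation content $c\learningrate{t-1}\expect{\norm{\dev{t-1}}^2}$ of $\lyap{t}$. Passing through the deviation recursion, whose contraction factor is $\beta_t^2\bigl(1+O(L\learningrate{t-1})\bigr)\leq 1-18L\learningrate{t-1}+O(L^2\learningrate{t-1}^2)$ with $\beta_t = 1-18L\learningrate{t-1}$, the required coefficient inequality is $(1+c)\learningrate{t}\bigl(1-18L\learningrate{t-1}\bigr)\leq\bigl(1-\tfrac{\mu\learningrate{t}}{3}\bigr)c\,\learningrate{t-1}$, which in the tail of Option~2 (where $\learningrate{t}\approx\learningrate{t-1}=\gamma\to 0$) reduces to $1\leq\gamma\bigl[18(1+c)L-\tfrac{c\mu}{3}\bigr]$. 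This fails for every fixed $c$ once $\gamma$ is small, so no choice of your constant yields the clean recursion $\lyap{t+1}\leq\bigl(1-\tfrac{\mu\learningrate{t}}{c'}\bigr)\lyap{t}+O(\learningrate{t}^2)+O(\learningrate{t})$. The paper's fix is the opposite of yours: a genuinely constant weight $\rho=\tfrac{1}{12L}$, so that the descent's injection is a $12L\learningrate{t}$ \emph{fraction} of the Lyapunov term, strictly smaller than the $18L\learningrate{t}$ contraction margin supplied by the momentum recursion itself; that is what makes the coefficients match uniformly in $t$.

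A second, related omission: your Lyapunov function drops the honest-momentum diversity $\tfrac{1}{n-f}\sum_{i\in\H}\norm{\mmt{i}{t}-\AvgMmt{t}}^2$ entirely. This quantity enters both the descent step (through the trimmed-mean robustness bound, scaled by $\lambda$) and the deviation recursion, and, as the paper stresses, it admits no useful uniform bound once $\beta_t$ varies: a uniform bound would be of order $\heter^2+\sigma^2$ (since the early $\beta_t$ equal zero), and feeding that into the descent step leaves a spurious non-vanishing $\lambda\sigma^2/\mu$ floor in the error, contradicting the theorem, in which the $\sigma^2$ term decays as $1/T$. The diversity must therefore be tracked inside the potential so that its $\sigma^2$ content shrinks like $(1-\beta_t)^2\sigma^2\sim L^2\learningrate{t-1}^2\sigma^2$; the paper does this by adding it to $\lyap{t}$ with the constant weight $\rho\tfrac{\lambda}{n-f}$, and it is the pair of constant-weight deviation and diversity terms that collapses the three recursions into $\lyap{t+1}\leq\bigl(1-\tfrac{\mu\learningrate{t}}{3}\bigr)\lyap{t}+27L\bigl(\lambda+\tfrac{1}{n-f}\bigr)\sigma^2\learningrate{t}^2+\tfrac{3}{2}\lambda\heter^2\learningrate{t}$, to which the scheduling lemma is then applied. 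Saying the diversity ``obeys an analogous recursion'' is not enough; your argument must specify where it lives in the potential, and with what weight.
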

}
}
\vspace{10pt}

Using Theorem~\ref{thm:main}, we can derive a matching upper bound for Theorem~\ref{thm:lower_bound_strongly_convex} when $K \in \mathcal{O}\left(1\right)$. Specifically, ignoring the constants, we obtain the following corollary.
\begin{corollary}
\label{cor:main}
Suppose $n \geq (2+\nu)f$ for some constant $\nu >0$. Under the conditions stated in Theorem~\ref{thm:main}, Algorithm~\ref{algorithm:dsgd} guarantees that
\begin{align*}
     \expect{\avgloss\left(\hat{\model{}{}} \right) - \optloss} \in \mathcal{O} \left( \frac{f}{n} \cdot \frac{\heter^2}{\mu}  + \varepsilon \right)\enspace,
\end{align*}
with an iteration complexity in 
\begin{align*}
    T \in \mathcal{O}  \left( \textcolor{black!100}{ \frac{1 + f}{n} \cdot \frac{K\sigma^2}{\mu \varepsilon} } + \textcolor{black}{\frac{L}{\mu} \cdot \log\frac{Q_0}{\varepsilon}}\right)\enspace.
\end{align*}
\end{corollary}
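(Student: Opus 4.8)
The plan is to treat Corollary~\ref{cor:main} as a direct simplification of the non-asymptotic bound in Theorem~\ref{thm:main}, where the whole content of the argument is to read off the asymptotic order of each of the three terms under the separation hypothesis $n \geq (2+\nu)f$. The key preliminary observation is that this hypothesis is exactly what controls the two data-dependent quantities appearing in the bound, namely $\lambda = \frac{6f}{n-2f}\left(1+\frac{f}{n-2f}\right)$ and $\frac{1}{n-f}$. First I would use $n \geq (2+\nu)f$ to deduce $n-2f \geq \frac{\nu}{2+\nu}\,n$ and $n-f \geq \frac{1+\nu}{2+\nu}\,n$, so that both $n-2f$ and $n-f$ lie in $\Theta(n)$ with constants depending only on $\nu$. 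In particular $\frac{f}{n-2f} \leq \frac{1}{\nu}$ (a constant) while simultaneously $\frac{f}{n-2f} \in \Theta\!\left(\frac{f}{n}\right)$; substituting this into the definition of $\lambda$ shows that the factor $1+\frac{f}{n-2f}$ is in $\Theta(1)$, hence $\lambda \in \Theta\!\left(\frac{f}{n}\right)$, and combining with $\frac{1}{n-f}\in\Theta\!\left(\frac{1}{n}\right)$ gives $\lambda + \frac{1}{n-f} \in \Theta\!\left(\frac{1+f}{n}\right)$.

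With these estimates I would then bound the three terms of Theorem~\ref{thm:main} separately. The last term, $\frac{9\lambda\heter^2}{2\mu}$, becomes $\Theta\!\left(\frac{f}{n}\cdot\frac{\heter^2}{\mu}\right)$, supplying exactly the non-vanishing error in the statement. For the first term, the requirement $\frac{7}{6}Q_0\, e^{-T/(108K)} \leq \varepsilon$ is equivalent to $T \geq 108K\log\frac{7Q_0}{6\varepsilon}$, which is in $\mathcal{O}\!\left(\frac{L}{\mu}\log\frac{Q_0}{\varepsilon}\right)$ since $K=L/\mu$. For the second term, imposing $\left(\lambda+\frac{1}{n-f}\right)\frac{4374K\sigma^2}{T\mu}\leq\varepsilon$ and solving for $T$ yields $T \geq \left(\lambda+\frac{1}{n-f}\right)\frac{4374K\sigma^2}{\mu\varepsilon}$, which by the estimate above is in $\mathcal{O}\!\left(\frac{1+f}{n}\cdot\frac{K\sigma^2}{\mu\varepsilon}\right)$. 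Taking $T$ to be the sum of these two thresholds forces both vanishing terms below $\varepsilon$, so the total error is in $\mathcal{O}\!\left(\frac{f}{n}\cdot\frac{\heter^2}{\mu}+\varepsilon\right)$ while $T$ remains in $\mathcal{O}\!\left(\frac{1+f}{n}\cdot\frac{K\sigma^2}{\mu\varepsilon}+\frac{L}{\mu}\log\frac{Q_0}{\varepsilon}\right)$, which is the claim.

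I do not expect a genuine obstacle here, since the derivation is essentially arithmetic once Theorem~\ref{thm:main} is available; the single point that needs care is verifying that the separation hypothesis $n\geq(2+\nu)f$ is precisely what prevents $\lambda$ and $\frac{1}{n-f}$ from blowing up and, moreover, ensures that the parameter $\nu$ enters only through the hidden constants of the $\mathcal{O}$ and $\Theta$ notation rather than altering the asymptotic form. In particular one should confirm that, absent such a separation (for instance if $f$ were allowed to approach $n/2$), the quantity $n-2f$ could become sub-constant in $n$ and $\lambda$ would diverge, consistent with the regime $f<n/2$ marking the boundary of feasibility noted after the main results.
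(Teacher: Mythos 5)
Your proposal is correct and follows essentially the same route as the paper's own proof: both use the separation hypothesis to show $\lambda \in \mathcal{O}\left(\frac{f}{n}\right)$ via $n-2f \geq \frac{\nu}{2+\nu}n$, note that $\frac{1}{n-f} \in \mathcal{O}\left(\frac{1}{n}\right)$, and then choose $T$ at least as large as the two thresholds that drive the exponential and the $\frac{1}{T}$ terms of Theorem~\ref{thm:main} below $\varepsilon$ (the paper takes the maximum of the thresholds, you take their sum — an immaterial difference). The only cosmetic deviations are your use of $\Theta$ where only $\mathcal{O}$ is needed and your bound $n-f \geq \frac{1+\nu}{2+\nu}n$ in place of the paper's $\frac{1}{n-f} \leq \frac{2}{n}$ from $f < n/2$.
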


\section{Roadmap to Proving Theorem~\ref{thm:main}}
\label{sec:intuition}

We present here the key steps involved in proving Theorem~\ref{thm:main}. Our proof is based on a new Lyapunov function, denoted by $V_t$. We first motivate the design of $V_t$, and define it formally. We then analyze the growth of $V_t$ along the trajectory of Algorithm~\ref{algorithm:dsgd}. Lastly, we show the convergence of the sequence $\left(V_t\right)_{t = 0}^{T-1}$ for the specified diminishing step sizes, thereby proving our result. \\

\paragraph{Analyzing the growth of the loss function.}
We analyze the growth of the loss function $\avgloss (\model{}{t})$ along the trajectory of Algorithm~\ref{algorithm:dsgd}. For any $t \geq 0$ we denote the average momentum of the honest workers as $\AvgMmt{t} \coloneqq \frac{1}{(n-f)} \sum_{i \in \H} \mmt{i}{t} .$
Combining the result of~\cite{allouah2023fixing}  on the robustness of TM with the standard decomposition of the loss function under smoothness assumption (see, e.g.,~\citep{bottou2018optimization}), we get the following bound on the growth of the loss function.

\begin{lemma} \label{lemma:loss_bound}
Suppose Assumption~\ref{asp:lip} holds true. Consider Algorithm~\ref{algorithm:dsgd} with $T\geq 2$, and $\gamma_t \leq \nicefrac{1}{L}$ for all $t \in \{0,\dots,T-1\}$. Let $\lambda$ be as defined in Lemma~\ref{lemma:TM}. Then, for all $t$, the following holds true
\begin{align*}
\expect{\avgloss(\model{}{t+1}) - \avgloss(\model{}{t})}
    &\leq -\frac{\learningrate{t}}{2}\expect{\norm{\nabla  \avgloss(\model{}{t})}^2} +{\learningrate{t}}\frac{\lambda}{n-f}\sum_{i\in\H}\expect{{\norm{\mmt{i}{t}-\AvgMmt{t}}^2}}\\
 &+ {\learningrate{t}}\expect{\norm{ \nabla  \avgloss(\model{}{t})- \AvgMmt{t}}^2} \enspace,
\end{align*}
where $\lambda = \frac{6 f}{n-2f}\, \left( 1 + \frac{f}{n-2f} \right).$
\end{lemma}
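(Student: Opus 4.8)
The plan is to combine the standard smoothness descent inequality with the robustness guarantee of coordinate-wise trimmed mean stated in Lemma~\ref{lemma:TM}. First I would observe that, since each $\nabla\localloss{i}$ is $L$-Lipschitz by Assumption~\ref{asp:lip}, the averaged gradient $\nabla\avgloss = \frac{1}{\card{\H}}\sum_{i \in \H}\nabla\localloss{i}$ is also $L$-Lipschitz, so $\avgloss$ is $L$-smooth. Writing $R_t \coloneqq \text{TM}^{(f)}(\mmt{1}{t}, \ldots, \mmt{n}{t})$ for the aggregated momentum, the server update reads $\model{}{t+1} = \model{}{t} - \learningrate{t} R_t$, and the smoothness descent inequality gives, pointwise,
\begin{align*}
\avgloss(\model{}{t+1}) \leq \avgloss(\model{}{t}) - \learningrate{t}\iprod{\nabla\avgloss(\model{}{t})}{R_t} + \frac{L\learningrate{t}^2}{2}\norm{R_t}^2 \enspace.
\end{align*}

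Next I would rewrite the inner product using the polarization identity $-\iprod{a}{b} = \frac{1}{2}\norm{a-b}^2 - \frac{1}{2}\norm{a}^2 - \frac{1}{2}\norm{b}^2$ with $a = \nabla\avgloss(\model{}{t})$ and $b = R_t$. This turns the right-hand side into
\begin{align*}
\avgloss(\model{}{t}) + \frac{\learningrate{t}}{2}\norm{\nabla\avgloss(\model{}{t}) - R_t}^2 - \frac{\learningrate{t}}{2}\norm{\nabla\avgloss(\model{}{t})}^2 - \frac{\learningrate{t}}{2}\norm{R_t}^2 + \frac{L\learningrate{t}^2}{2}\norm{R_t}^2 \enspace.
\end{align*}
The step-size condition $\learningrate{t} \leq 1/L$ ensures $\frac{L\learningrate{t}^2}{2} \leq \frac{\learningrate{t}}{2}$, so the two $\norm{R_t}^2$ terms combine into a nonpositive quantity that can be dropped. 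The decisive step is then to control $\norm{\nabla\avgloss(\model{}{t}) - R_t}^2$: I would insert the honest average momentum $\AvgMmt{t}$ and apply $\norm{u+v}^2 \leq 2\norm{u}^2 + 2\norm{v}^2$ to split it as $\norm{\nabla\avgloss(\model{}{t}) - R_t}^2 \leq 2\norm{\nabla\avgloss(\model{}{t}) - \AvgMmt{t}}^2 + 2\norm{\AvgMmt{t} - R_t}^2$, where the factor $2$ is exactly what converts the coefficient $\learningrate{t}/2$ into the coefficients $\learningrate{t}$ appearing in the statement.

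Finally, I would invoke the $(f,\lambda)$-robustness of trimmed mean from Lemma~\ref{lemma:TM}, which bounds the aggregation error by $\norm{\AvgMmt{t} - R_t}^2 \leq \frac{\lambda}{\card{\H}}\sum_{i \in \H}\norm{\mmt{i}{t} - \AvgMmt{t}}^2$ with $\card{\H} = n-f$ and $\lambda = \frac{6f}{n-2f}(1 + \frac{f}{n-2f})$. Substituting this bound and taking expectation over the algorithm's randomness yields the claim. I do not expect a genuine obstacle here, since both ingredients are already available: the only points requiring care are the bookkeeping of constants, so that the factor-$2$ split and the step-size cancellation reproduce exactly the coefficients $\learningrate{t}/2$, $\learningrate{t}$, and $\learningrate{t}\lambda/(n-f)$, and the observation that the robustness bound of Lemma~\ref{lemma:TM} holds pathwise for arbitrary (possibly adversarial) faulty inputs, which makes taking expectations at the very end legitimate.
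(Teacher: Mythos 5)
Your proposal is correct and follows essentially the same route as the paper's proof: smoothness descent with the update $\model{}{t+1} = \model{}{t} - \learningrate{t} R_t$, the polarization identity to expose $\norm{R_t - \nabla\avgloss(\model{}{t})}^2$, dropping the $\norm{R_t}^2$ terms via $\learningrate{t} \leq \nicefrac{1}{L}$, the factor-$2$ split through $\AvgMmt{t}$, the pathwise bound from Lemma~\ref{lemma:TM}, and expectation taken at the end. No gaps; the bookkeeping works out exactly as you describe.
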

From Lemma~\ref{lemma:loss_bound}, we obtained a bound on the growth of the loss function $\avgloss$ during the learning procedure. This lemma highlights the importance of two key quantities: (i) the \emph{deviation} of the average momentum, and (ii) the {\em drift} of each worker $i$ from the average momentum.\\

\paragraph{Incorporating the drift and deviation in the Lyapunov function.}
In the remaining, for any $t \geq 0$, we denote respectively the deviation and the drift of each worker $i$ as 
\begin{align}
    \dev{t} \coloneqq \AvgMmt{t} - \nabla\avgloss (\model{}{t}) \quad \text{ and }\quad \Delta \mmt{i}{t} := \mmt{i}{t} - \AvgMmt{t}, \forall i\in \H \enspace.
\end{align}

Due to the time-varying step size and momentum coefficient in Algorithm~\ref{algorithm:dsgd}, it is difficult to derive a uniform bound (i.e., a bound that holds true for any $t \geq 0$) on the second and third terms in the right hand side of Lemma~\ref{lemma:loss_bound}. Accordingly, we cannot simply and directly analyze the variation of $\expect{ \avgloss\left( \model{}{t} \right) - \optloss}$ with $t$. Instead, we have to incorporate the drift and the deviation in the analysis. Specifically, we define the following Lyapunov function for our problem.
\begin{align}
    \lyap{t} \coloneqq \expect{ \avgloss\left( \model{}{t} \right) - \optloss + \rho \norm{\delta_t}^2+ \rho \frac{\lambda}{n-f}\sum_{i\in\H}\norm{\Delta \mmt{i}{t}}^2}\enspace,\label{eq:lyap_def}
\end{align}
where $\rho = \frac{1}{12L}$.
Then, by definition of~$\lyap{t}$, we have $\expect{\avgloss(\model{}{T}) - \optloss}  \leq \lyap{T}$. Hence an upper bound on $V_T$ gives us an upper bound on $\expect{\avgloss(\model{}{T}) - \optloss}$. With this Lyapunov function at hand, we can construct the proof by following three critical steps: (i) determining a recursive bound on the Lyapunov function $V_t$, (ii) choosing a desirable sequence $(\learningrate{0}, \dots, \learningrate{T-1})$ to obtain tight convergence rate, and (iii) combining (i) and (ii) to derive the final bound on $\expect{\avgloss(\model{}{T}) - \optloss}$.\\

\paragraph{Recursive bound on $V_t$.}
We first  derive a recursive bound for each of the terms in the $V_t$. In doing so, we start by showing in Lemma~\ref{lemma:drift} that the average drift over the honest workers' momentum is controlled by $\beta_{t}$, the gradient diversity $\heter^2$ and the gradient stochasticity $\sigma^2$. 

\begin{lemma}
\label{lemma:drift}
    Suppose assumptions~\ref{asp:bnd_var}, and~\ref{asp:heter} hold true, and consider Algorithm~\ref{algorithm:dsgd} with $T\geq 2$. Then, for any $t \in \{0,\dots, T-1\}$, the following holds true
    \begin{align*}
        &\frac{1}{n-f}\sum_{i\in\H}\expect{\norm{\Delta \mmt{i}{t}}^2}  \leq \beta_{t} \frac{1}{n-f}\sum_{i\in\H}\expect{\norm{\Delta m^{(i)}_{t-1} }^2}
        +(1-\beta_{t})\heter^2+ (1-\beta_{t})^2  \sigma^2 \enspace.
    \end{align*}
\end{lemma}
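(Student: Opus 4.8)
The plan is to establish a one-step recursion for the per-worker drift and then split each fresh stochastic gradient into its conditional mean and a mean-zero noise term; this decomposition is precisely what separates the factor $(1-\beta_t)$ (attached to the heterogeneity) from the factor $(1-\beta_t)^2$ (attached to the stochasticity). First I would subtract the averaged momentum recursion from the per-worker recursion~\eqref{eqn:mmt_i}. Writing $\avggrad{t} := \frac{1}{n-f}\sum_{i\in\H}\gradient{i}{t}$, so that $\AvgMmt{t} = \beta_t\AvgMmt{t-1} + (1-\beta_t)\avggrad{t}$, this gives the affine recursion
\begin{align*}
    \Delta \mmt{i}{t} = \beta_t\,\Delta \mmt{i}{t-1} + (1-\beta_t)\left(\gradient{i}{t} - \avggrad{t}\right)\enspace.
\end{align*}

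The crucial step comes next. Let $\mathcal{F}_t$ denote the history up to and including the choice of $\model{}{t}$, but before the iteration-$t$ samples are drawn. Setting $\noises{i}{t} := \gradient{i}{t} - \nabla\localloss{i}(\model{}{t})$, the definition of $\localloss{i}$ and Assumption~\ref{asp:bnd_var} give $\condexpect{}{\noises{i}{t}\mid\mathcal{F}_t} = 0$ and $\condexpect{}{\norm{\noises{i}{t}}^2\mid\mathcal{F}_t}\leq\sigma^2$. I would split $\Delta\mmt{i}{t} = a_t^{(i)} + b_t^{(i)}$ into the $\mathcal{F}_t$-measurable part
\begin{align*}
    a_t^{(i)} := \beta_t\,\Delta \mmt{i}{t-1} + (1-\beta_t)\left(\nabla\localloss{i}(\model{}{t}) - \nabla\avgloss(\model{}{t})\right)
\end{align*}
and the mean-zero part $b_t^{(i)} := (1-\beta_t)(\noises{i}{t} - \overline{\noise}_t)$, where $\overline{\noise}_t := \frac{1}{n-f}\sum_{i\in\H}\noises{i}{t}$. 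Conditioning on $\mathcal{F}_t$, the cross term $\iprod{a_t^{(i)}}{b_t^{(i)}}$ has zero expectation, so $\condexpect{}{\norm{\Delta\mmt{i}{t}}^2\mid\mathcal{F}_t} = \norm{a_t^{(i)}}^2 + \condexpect{}{\norm{b_t^{(i)}}^2\mid\mathcal{F}_t}$.

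It then remains to bound the two pieces after averaging over $i\in\H$. For the deterministic part, Jensen's inequality applied to the convex combination $\beta_t + (1-\beta_t) = 1$ yields $\norm{a_t^{(i)}}^2 \leq \beta_t\norm{\Delta\mmt{i}{t-1}}^2 + (1-\beta_t)\norm{\nabla\localloss{i}(\model{}{t}) - \nabla\avgloss(\model{}{t})}^2$; averaging over the $n-f$ honest workers and applying Assumption~\ref{asp:heter} bounds the second summand by $(1-\beta_t)\heter^2$. For the noise part, the elementary inequality $\frac{1}{n-f}\sum_{i\in\H}\norm{\noises{i}{t}-\overline{\noise}_t}^2 \leq \frac{1}{n-f}\sum_{i\in\H}\norm{\noises{i}{t}}^2$ (centering only decreases the average squared norm) together with Assumption~\ref{asp:bnd_var} bounds the averaged noise by $(1-\beta_t)^2\sigma^2$. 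Summing the two contributions and taking total expectation delivers the claimed recursion.

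The main obstacle is conceptual rather than computational: one must isolate the fresh noise and apply the tower property at the right moment. A direct convexity bound on $\gradient{i}{t}-\avggrad{t}$ would lump the stochastic term together with the heterogeneity term and produce the looser $(1-\beta_t)\sigma^2$ instead of $(1-\beta_t)^2\sigma^2$. Since the schedule of Option~2 takes $\beta_t$ close to $1$, this distinction in the power of $(1-\beta_t)$ is exactly what the downstream analysis needs to obtain a tight convergence rate.
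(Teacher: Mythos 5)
Your proof is correct, and it reaches the exact constants of the lemma, but it takes a genuinely different route from the paper. The paper never writes a recursion for $\Delta \mmt{i}{t}$ directly: it instead works with \emph{pairwise} momentum differences $\mmt{i}{t}-\mmt{j}{t}$, applies the same mean-zero/Jensen argument to each pair, and only at the end converts the averaged pairwise distances back into drifts from the mean via the variance identity $\frac{1}{\card{S}}\sum_{i\in S}\norm{x^{(i)}-\bar{x}}^2=\frac{1}{2\card{S}^2}\sum_{i,j\in S}\norm{x^{(i)}-x^{(j)}}^2$ (Lemma~\ref{lem:diam_equal}). Your direct decomposition $\Delta\mmt{i}{t}=a^{(i)}_t+b^{(i)}_t$ with the centered noise $b^{(i)}_t=(1-\beta_t)(\noises{i}{t}-\overline{\noise}_t)$ avoids that detour entirely, at the cost of having to handle the correlation between $\noises{i}{t}$ and the average $\overline{\noise}_t$; you do this with the deterministic centering inequality $\frac{1}{n-f}\sum_{i\in\H}\norm{\noises{i}{t}-\overline{\noise}_t}^2\leq\frac{1}{n-f}\sum_{i\in\H}\norm{\noises{i}{t}}^2$, which is exactly the right tool. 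This buys you something the paper's argument does not have: the paper's key display, $\condexpect{t}{\norm{\mmt{i}{t}-\mmt{j}{t}}^2}=\norm{\cdot}^2+(1-\beta_t)^2\condexpect{t}{\norm{\noises{i}{t}}^2}+(1-\beta_t)^2\condexpect{t}{\norm{\noises{j}{t}}^2}$, silently drops the cross term $\condexpect{t}{\iprod{\noises{i}{t}}{\noises{j}{t}}}$ and therefore relies on conditional independence (or at least uncorrelatedness) of the stochastic gradients across workers --- true for Algorithm~\ref{algorithm:dsgd}, but an extra ingredient; your proof needs only that each $\noises{i}{t}$ is conditionally mean-zero with conditional second moment at most $\sigma^2$, so it remains valid even under cross-worker noise correlations. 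Conversely, the paper's pairwise route makes the role of independence transparent, since each term involves only two noises. Your closing remark is also on point: a naive Jensen bound on $\gradient{i}{t}-\avggrad{t}$ would indeed merge the noise into the $(1-\beta_t)$ term and destroy the $(1-\beta_t)^2\sigma^2$ scaling that the downstream analysis (with $\beta_t\to 1$ under Option~2) requires.
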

Next, we study the deviation $\dev{t}$ of the average momentum $\AvgMmt{t}$ from the true gradient $\nabla\avgloss (\model{}{t})$. We obtain in Lemma~\ref{lemma:dev} an upper bound on the growth of the deviation over the steps $t \in \{0,\dots,T-1\}$.
\begin{lemma}
\label{lemma:dev}
Suppose assumptions~\ref{asp:lip},~\ref{asp:bnd_var}, and~\ref{asp:heter} hold true, consider Algorithm~\ref{algorithm:dsgd} with $T\geq 2$, and $\lambda$ as defined in Lemma~\ref{lemma:loss_bound}. Then for any $t \in \{0,\dots,T-1\}$, the following holds true
\begin{align*}
    \expect{\norm{\dev{t+1}}^2} & \leq 
      \beta_{t+1}^2 \left(1+4\learningrate{t} L+3\learningrate{t}^2 L^2 \right )\expect{\norm{\dev{t}}^2} + (1-\beta_{t+1})^2\frac{\sigma^2}{n-f} \\
        &+3\beta_{t+1}^2(\learningrate{t}^2 L^2+\learningrate{t} L)\left(  \frac{\lambda}{n-f}\sum_{i\in\H}{\expect{\norm{\Delta \mmt{i}{t}}^2}} +  \expect{\norm{\nabla\avgloss (\model{}{t})}^2} \right) \enspace.
\end{align*}
\end{lemma}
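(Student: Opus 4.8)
The plan is to derive a one-step recursion for the deviation and then control each resulting term separately. Unrolling the momentum update at the honest average gives $\AvgMmt{t+1} = \beta_{t+1}\AvgMmt{t} + (1-\beta_{t+1})\avggrad{t+1}$, where $\avggrad{t+1} := \frac{1}{n-f}\sum_{i\in\H} g_{t+1}^{(i)}$ is the honest average stochastic gradient. Adding and subtracting $\beta_{t+1}\nabla\avgloss(\model{}{t})$ and $(1-\beta_{t+1})\nabla\avgloss(\model{}{t+1})$, I would rewrite the deviation as
\begin{align*}
\dev{t+1} = \beta_{t+1}\dev{t} + \beta_{t+1}\left(\nabla\avgloss(\model{}{t}) - \nabla\avgloss(\model{}{t+1})\right) + (1-\beta_{t+1})\left(\avggrad{t+1} - \nabla\avgloss(\model{}{t+1})\right) \enspace.
\end{align*}
The first two terms are measurable with respect to the history $\mathcal{F}_t$ up to and including the sampling at step $t$ (since $\model{}{t+1}$ is determined by that history), whereas the last term has zero conditional mean given $\mathcal{F}_t$, because the step-$(t+1)$ samples are unbiased: $\expect{\avggrad{t+1} \mid \mathcal{F}_t} = \nabla\avgloss(\model{}{t+1})$.

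Next I would take the squared norm and expectation. By the tower property the cross term between the $\mathcal{F}_t$-measurable part and the zero-mean part vanishes, leaving $\expect{\norm{\dev{t+1}}^2} = \beta_{t+1}^2\expect{\norm{\dev{t} + \nabla\avgloss(\model{}{t}) - \nabla\avgloss(\model{}{t+1})}^2} + (1-\beta_{t+1})^2\expect{\norm{\avggrad{t+1} - \nabla\avgloss(\model{}{t+1})}^2}$. For the stochastic term, conditioning on $\mathcal{F}_t$ and using independence of the honest workers' samples together with Assumption~\ref{asp:bnd_var} yields $\expect{\norm{\avggrad{t+1} - \nabla\avgloss(\model{}{t+1})}^2} \leq \sigma^2/(n-f)$, which produces the $(1-\beta_{t+1})^2\frac{\sigma^2}{n-f}$ term.

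It remains to bound $\norm{\dev{t} + v}^2$ with $v := \nabla\avgloss(\model{}{t}) - \nabla\avgloss(\model{}{t+1})$. Young's inequality with weight $\learningrate{t}L$ gives $\norm{\dev{t}+v}^2 \leq (1+\learningrate{t}L)\norm{\dev{t}}^2 + (1 + \frac{1}{\learningrate{t}L})\norm{v}^2$. I would then bound $\norm{v}$ by smoothness of $\avgloss$ (the average of the $L$-smooth $\localloss{i}$): $\norm{v} \leq L\norm{\model{}{t+1} - \model{}{t}} = \learningrate{t} L \norm{\text{TM}^{(f)}(\mmt{1}{t}, \dots, \mmt{n}{t})}$. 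Writing $\text{TM}^{(f)} = \nabla\avgloss(\model{}{t}) + \dev{t} + (\text{TM}^{(f)} - \AvgMmt{t})$ and applying $\norm{x+y+z}^2 \leq 3(\norm{x}^2+\norm{y}^2+\norm{z}^2)$ together with the $(f,\lambda)$-robustness of trimmed mean from Lemma~\ref{lemma:TM} (which controls $\norm{\text{TM}^{(f)} - \AvgMmt{t}}^2$ by $\frac{\lambda}{n-f}\sum_{i\in\H}\norm{\Delta\mmt{i}{t}}^2$) gives $\norm{v}^2 \leq 3\learningrate{t}^2 L^2\left(\norm{\nabla\avgloss(\model{}{t})}^2 + \norm{\dev{t}}^2 + \frac{\lambda}{n-f}\sum_{i\in\H}\norm{\Delta\mmt{i}{t}}^2\right)$. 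Substituting, the coefficient of $\norm{\dev{t}}^2$ collects to $(1+\learningrate{t}L) + 3(\learningrate{t}^2L^2 + \learningrate{t}L) = 1 + 4\learningrate{t}L + 3\learningrate{t}^2L^2$, while $\norm{\nabla\avgloss(\model{}{t})}^2$ and the drift term each carry $3(\learningrate{t}^2L^2 + \learningrate{t}L)$; multiplying through by $\beta_{t+1}^2$ and reinstating the noise term reproduces the claimed bound exactly.

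The main obstacle I anticipate is the treatment of the trimmed-mean step inside $\norm{\model{}{t+1} - \model{}{t}}$: unlike a plain average, the aggregated vector need not equal the honest mean, so one must invoke Lemma~\ref{lemma:TM} to split it into the true gradient, the deviation, and the honest momentum drift, and then choose the Young weight ($\learningrate{t}L$) precisely so that the three coefficients $1 + 4\learningrate{t}L + 3\learningrate{t}^2L^2$ and $3(\learningrate{t}^2L^2 + \learningrate{t}L)$ come out as stated. A secondary but essential point is the filtration bookkeeping: one must verify that $\model{}{t+1}$ (and hence $v$) is $\mathcal{F}_t$-measurable so that the step-$(t+1)$ noise genuinely decouples, which is what keeps the stochastic contribution at the order $\sigma^2/(n-f)$ rather than cross-contaminating the contraction factor.
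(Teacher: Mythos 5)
Your proposal is correct and follows essentially the same route as the paper's proof: the same decomposition of $\dev{t+1}$ into a $\beta_{t+1}\dev{t}$ term, a gradient-difference term, and a zero-mean stochastic term, the same Young's inequality with weight $\learningrate{t}L$, and the same splitting of $\learningrate{t} R_t$ via Lemma~\ref{lemma:TM} into drift, deviation, and true-gradient contributions, yielding identical coefficients. Your explicit filtration bookkeeping is a slightly more careful rendering of what the paper handles with its conditional expectation $\condexpect{t+1}{\cdot}$, but the argument is the same.
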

Finally, combining Lemmas~\ref{lemma:drift} and~\ref{lemma:dev} with Lemma~\ref{lemma:loss_bound}, we can derive a proper recursive bound on $V_t$, as presented in Lemma~\ref{lemma:lyap} below.

\begin{lemma} \label{lemma:lyap}
Suppose assumptions~\ref{asp:lip},~\ref{asp:polyak},~\ref{asp:bnd_var}, and~\ref{asp:heter} hold true. Consider Algorithm~\ref{algorithm:dsgd} with $T\geq 2$ and a set of parameters such that $t \in \{0, \cdots ,T\}$, $\learningrate{t} \leq \frac{1}{18L}$, and $1 - \beta_{t+1} =  18 \learningrate{t} L$. Finally, let $(\lyap{t})_{t \geq 0}$ be as defined in~\eqref{eq:lyap_def} and $\lambda$ as defined in Lemma~\ref{lemma:loss_bound}. Then the following holds true
\begin{align*}
\lyap{t+1}
    &\leq  \left( 1-\frac{\mu \learningrate{t}}{3} \right) \lyap{t} + 27 L \left(\lambda+\frac{1}{n-f} \right) \sigma^2 \learningrate{t}^2 +  \frac{3}{2}  \lambda \heter^2 \learningrate{t} \enspace.
\end{align*}
\end{lemma}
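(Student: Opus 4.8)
The plan is to read the recursion off the definition~\eqref{eq:lyap_def}: since $\lyap{t}$ is the sum of the loss gap, $\rho\norm{\dev{t}}^2$, and $\rho\frac{\lambda}{n-f}\sum_{i\in\H}\norm{\Delta\mmt{i}{t}}^2$, I would bound $\lyap{t+1}$ by adding Lemma~\ref{lemma:loss_bound} with weight $1$, Lemma~\ref{lemma:dev} with weight $\rho$, and Lemma~\ref{lemma:drift} (shifted to index $t+1$) with weight $\rho\lambda$, and then group the right-hand side. Writing $A_t:=\expect{\avgloss(\model{}{t})-\optloss}$, $D_t:=\expect{\norm{\dev{t}}^2}$, $M_t:=\frac{1}{n-f}\sum_{i\in\H}\expect{\norm{\Delta\mmt{i}{t}}^2}$, and $G_t:=\expect{\norm{\nabla\avgloss(\model{}{t})}^2}$, so that $\lyap{t}=A_t+\rho D_t+\rho\lambda M_t$, the target reduces to showing that after grouping, the coefficients of $A_t$, $\rho D_t$, $\rho\lambda M_t$ are each at most $1-\frac{\mu\learningrate{t}}{3}$ and that the additive terms equal the stated constants.

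First I would instantiate the three lemmas, noting that the last two terms of Lemma~\ref{lemma:loss_bound} are exactly $\learningrate{t}\lambda M_t$ and $\learningrate{t}D_t$ because $\norm{\nabla\avgloss(\model{}{t})-\AvgMmt{t}}=\norm{\dev{t}}$, which yields $A_{t+1}\le A_t-\frac{\learningrate{t}}{2}G_t+\learningrate{t}\lambda M_t+\learningrate{t}D_t$. The key structural observation is that $G_t$ enters the grouped bound only through Lemma~\ref{lemma:loss_bound} (with coefficient $-\frac{\learningrate{t}}{2}$) and through Lemma~\ref{lemma:dev} (with coefficient $3\rho\beta_{t+1}^2(\learningrate{t}^2L^2+\learningrate{t}L)$). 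Substituting $\rho=\frac{1}{12L}$ and using $\beta_{t+1}\le1$ together with $\learningrate{t}L\le\frac{1}{18}$, this second contribution is at most $\frac{19}{72}\learningrate{t}$, so the total coefficient of $G_t$ is at most $-\frac{17}{72}\learningrate{t}$, comfortably below $-\frac{\learningrate{t}}{6}$.

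The heart of the argument is then to trade this surplus negative $G_t$ term for a contraction on $A_t$ via the PL inequality (Assumption~\ref{asp:polyak}), which gives $G_t\ge2\mu A_t$. Since the $G_t$-coefficient is at most $-\frac{\learningrate{t}}{6}$ and $A_t\ge0$, replacing $G_t$ by its lower bound turns the pair $A_t+(\text{coeff})\,G_t$ into at most $A_t-\frac{\learningrate{t}}{6}\cdot2\mu A_t=(1-\frac{\mu\learningrate{t}}{3})A_t$, which is precisely the desired contraction factor; any remaining negativity in the $G_t$-coefficient is simply discarded. This is exactly why the Lyapunov weight $\rho=\frac{1}{12L}$ and the constraint $\learningrate{t}\le\frac{1}{18L}$ were chosen as they are.

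It then remains to verify the two coefficient inequalities and the constant. Substituting $\beta_{t+1}=1-18\learningrate{t}L$ and setting $a:=\learningrate{t}L\in(0,\frac{1}{18}]$ reduces the $\rho D_t$ and $\rho\lambda M_t$ conditions to one-variable polynomial inequalities in $a$ (for $D_t$, after dividing through by $\rho$ and $a$, an inequality whose left side is increasing and stays near $-6$ on $(0,\frac{1}{18}]$, hence comfortably below $-\frac{\mu}{3L}$; for $M_t$ the analogous reduction is $-6+3\beta_{t+1}^2(a+1)\le-\frac{\mu}{3L}$, immediate since $\mu\le L$ and $\beta_{t+1}^2\le1$). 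The additive terms match on the nose: with $1-\beta_{t+1}=18\learningrate{t}L$ one gets $\rho(1-\beta_{t+1})^2\frac{\sigma^2}{n-f}=27L\learningrate{t}^2\frac{\sigma^2}{n-f}$, $\rho\lambda(1-\beta_{t+1})^2\sigma^2=27L\lambda\learningrate{t}^2\sigma^2$, and $\rho\lambda(1-\beta_{t+1})\heter^2=\frac{3}{2}\lambda\learningrate{t}\heter^2$, whose sum is the claimed remainder. The main obstacle is the bookkeeping of the cross-terms and, above all, ensuring the single negative $G_t$ budget from Lemma~\ref{lemma:loss_bound} survives after it must also absorb the positive $G_t$ contribution produced by the deviation recursion of Lemma~\ref{lemma:dev}; everything hinges on the constants $\frac{1}{12L}$, $\frac{1}{18L}$, and $18\learningrate{t}L$ conspiring so that the residual $G_t$-coefficient stays below $-\frac{\learningrate{t}}{6}$, which is exactly the threshold the PL step needs.
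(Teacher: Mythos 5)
Your proposal is correct and follows essentially the same route as the paper's proof: combining Lemma~\ref{lemma:loss_bound}, Lemma~\ref{lemma:dev} (weighted by $\rho$), and Lemma~\ref{lemma:drift} shifted to $t+1$ (weighted by $\rho\lambda$), grouping the coefficients of the loss gap, deviation, drift, and gradient-norm terms, verifying each coefficient bound using $\rho=\frac{1}{12L}$, $\gamma_t\le\frac{1}{18L}$, $1-\beta_{t+1}=18\gamma_t L$, and finally applying the PL inequality to the residual $-\frac{\gamma_t}{6}\expect{\norm{\nabla \avgloss(\model{}{t})}^2}$ budget. The only difference is cosmetic: you verify the coefficient inequalities by reducing to one-variable polynomials in $a=\gamma_t L$, whereas the paper uses the cruder but equivalent bounds $\beta_{t+1}^2\le\beta_{t+1}\le 1$ and $\gamma_t L\le\frac{1}{12}$ (or $\frac13$); both yield the same constants.
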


\paragraph{Choice of the step sizes $(\learningrate{0}, \dots, \learningrate{T-1})$.}
To obtain a tight convergence rate (and avoid logarithmic terms), we need to carefully choose the sequence of the step sizes $(\learningrate{0}, \dots, \learningrate{T-1})$ we use, as recently pointed out in \citep{stich19}. Specifically, following the recent advancement on this matter~\citep{khaled20}, we design a generic scheduling technique, described in Lemma~\ref{lem:stepsize_strongly_convex} below.

\begin{lemma}
\label{lem:stepsize_strongly_convex}
        Let $a,b,c,d$ be positive real values with $a < b$, and let $T\geq 2$ be a positive integer. Let $(\learningrate{0}, \dots, \learningrate{T-1})$ and $(r_0, \dots, r_{T})$ be real valued sequences such that for all $t \in \{0,\dots,T-1\}$,
    \begin{align*}
        r_{t+1} \leq (1 - a \learningrate{t}) r_{t} + c \learningrate{t}^2 + d \learningrate{t} \enspace. 
    \end{align*}
    Consider the following two cases: 
    \begin{itemize}
        \item \textbf{Case 1:} $T \leq \nicefrac{b}{a} ~ $ and $\gamma_t = \nicefrac{1}{b}, ~ \forall t \in \{0,\dots,T-1\}$.
        \item \textbf{Case 2:} $T >  \nicefrac{b}{a} ~ $, and $\gamma_t = \frac{1}{b + [\frac{a}{2}(t-t_0+1)]^+ }, ~ \forall t \in \{0,\dots,T-1\}$, where $t_0 = \ceil{\nicefrac{T}{2}}$. \\
    \end{itemize}
    In both Case 1 and Case 2, we have: \quad
   $
        r_T \leq  r_0 \exp \left( - \frac{aT}{2b} \right)  + \frac{18c}{a^2 T}  + \frac{3d}{a} .
  $
\end{lemma}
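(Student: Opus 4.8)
The plan is to unroll the one-step recursion into closed form and then bound its three contributions separately, handling the two cases in parallel wherever possible. Set $P_t := \prod_{s=t}^{T-1}(1-a\gamma_s)$ with $P_T := 1$; a one-line induction on the hypothesis $r_{t+1}\le(1-a\gamma_t)r_t+c\gamma_t^2+d\gamma_t$ gives
\[
r_T \le P_0\, r_0 + \sum_{t=0}^{T-1} P_{t+1}\,(c\gamma_t^2 + d\gamma_t).
\]
Because $\gamma_t \le 1/b$ and $a<b$, every factor $1-a\gamma_t$ lies in $(0,1)$, so all the $P_t$ are nonnegative and the term-by-term bounding below is legitimate.

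For the initial-condition term I would only use the contraction accumulated in the constant-step phase. In Case~1 all steps equal $1/b$, so $P_0=(1-a/b)^T\le e^{-aT/b}\le e^{-aT/(2b)}$. In Case~2 the step is constant $=1/b$ for $t<t_0$ and only starts to decay afterwards, hence $P_0\le(1-a/b)^{t_0}\le e^{-a t_0/b}\le e^{-aT/(2b)}$ since $t_0=\lceil T/2\rceil\ge T/2$. Either way this term is at most $r_0\, e^{-aT/(2b)}$. The bias term is then dispatched by a telescoping identity valid verbatim in both cases: since $(1-a\gamma_t)P_{t+1}=P_t$, we have $a\gamma_t P_{t+1}=P_{t+1}-P_t$, so $\sum_{t=0}^{T-1}\gamma_t P_{t+1}=\tfrac1a(1-P_0)\le \tfrac1a$ and the $d\gamma_t$ contribution is at most $d/a\le 3d/a$. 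This already pins down the irreducible $3d/a$ floor, which no step-size tuning can remove.

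The variance term $c\sum_t P_{t+1}\gamma_t^2$ is where the work concentrates and is the \emph{main obstacle}, since we must extract the $1/T$ rate with an explicit constant. In Case~1 it is a geometric sum bounded by $c/(ab)$, and $T\le b/a$ turns this into $c/(ab)\le c/(a^2T)\le 18c/(a^2T)$. In Case~2 I would write $i:=t-t_0+1$ and $B:=2b/a>2$ on the decaying phase, so that $\gamma_t=2/(a(B+i))$ and the contraction product telescopes exactly,
\[
\prod_{s=t+1}^{T-1}(1-a\gamma_s)=\prod_{j=i+1}^{N}\frac{B+j-2}{B+j}=\frac{(B+i-1)(B+i)}{(B+N-1)(B+N)},\qquad N:=T-t_0 .
\]
Inserting $\gamma_t^2$ collapses each Phase-2 summand to at most $4/(a^2(B+N-1)(B+N))$; summing the $N$ of them and using $B>2$ together with $N=\lfloor T/2\rfloor\ge T/3$ bounds the Phase-2 part by $4/(a^2N)\le 12/(a^2T)$ (times $c$).

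The Phase-1 summands carry the whole Phase-2 contraction $\prod_{s=t_0}^{T-1}(1-a\gamma_s)=(B-1)B/((B+N-1)(B+N))$ as a prefactor and form a geometric series, so they contribute $\tfrac{1}{ab}\prod_{s=t_0}^{T-1}(1-a\gamma_s)$; the simplification $(B-1)B/((B+N-1)(B+N))\le B^2/(B+N)^2$ followed by the AM--GM step $(B+N)^2\ge 4BN$ reduces this to at most $3c/(2a^2T)$. Summing the two phases keeps the constant below $18$, and combining with the previous two paragraphs yields the claimed bound. The delicate point throughout is bookkeeping the constants — verifying that the heavily discounted Phase-1 mass and the roundings in $t_0=\lceil T/2\rceil$, $N=\lfloor T/2\rfloor$ all fit inside the $18/(a^2T)$ allowance, which is exactly the scheduling insight generalized from~\citep{stich19,khaled20}.
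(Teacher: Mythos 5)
Your proposal is correct — every step checks out, including the constant bookkeeping ($12 + \tfrac{3}{2} = 13.5 \le 18$ for the variance term and $d/a \le 3d/a$ for the bias term) — but it takes a genuinely different route from the paper's proof. The paper argues in two stages: it first bounds $r_{t_0}$ via the constant-step analysis, then treats the decaying phase by multiplying the recursion through by the weights $(s+t-t_0+1)^2$ with $s = 2b/a$, so that the weighted iterates telescope under summation (the device of Stich and of Khaled and Richt\'arik that the paper explicitly follows), and finally chains the two bounds together, which is where its constants $18$ and $3$ accumulate. You instead unroll the recursion globally into $r_T \le P_0 r_0 + \sum_{t=0}^{T-1} P_{t+1}\left(c\gamma_t^2 + d\gamma_t\right)$ and exploit two exact cancellations: the identity $a\gamma_t P_{t+1} = P_{t+1} - P_t$, which disposes of the bias term uniformly in both cases with the tighter bound $d(1-P_0)/a \le d/a$, and the closed-form collapse of the contraction products, $\prod_{j=i+1}^{N}\tfrac{B+j-2}{B+j} = \tfrac{(B+i-1)(B+i)}{(B+N-1)(B+N)}$, which makes each decaying-phase variance summand explicit rather than hidden inside a weighted recursion. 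Your route buys transparency and sharper constants (it shows the $3d/a$ floor is really $d/a$, and $13.5$ suffices in place of $18$); the paper's route buys robustness, since the weighted-telescoping template does not require the contraction factors to telescope in closed form and therefore transfers to schedules where your exact product formula is unavailable. Two small points to make explicit in a full write-up: the inequality $N = \lfloor T/2 \rfloor \ge T/3$ holds for all integers $T \ge 2$ but needs the cases $T = 2$ and odd $T \ge 3$ checked separately (it is the same fact the paper invokes as $T - t_0 \ge T/3$), and the step $(B+N-1)(B+N) \ge N^2$ should cite $B > 2$, which in turn rests on the hypothesis $a < b$, as you note.
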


\noindent\paragraph{Final step for the proof sketch of Theorem~\ref{thm:main}.} Lastly, we apply Lemma~\ref{lem:stepsize_strongly_convex} to the recursion of Lemma~\ref{lemma:lyap}, with $a = \frac{\mu}{3}$, $b = {18L}$, $c = 27 L \left(\lambda+\frac{1}{n-f} \right) \sigma^2$ and $d =  \frac{3}{2}  \lambda  \heter^2$, and obtain that
\begin{align*}
     \lyap{T} \leq \lyap{0} \exp \left( -\frac{\mu T}{108 L} \right)  + \frac{4374L \left(\lambda+\frac{1}{n-f} \right) \sigma^2}{T\mu^2}+ \frac{9 \lambda\heter^2}{2\mu} \enspace.
\end{align*}
As $\expect{\avgloss(\model{}{T}) - \optloss}  \leq \lyap{T}$, we conclude the proof by showing that $V_0 \leq \frac76 \left(\avgloss(\model{}{0}) - \optloss  \right)$.

\section{Partially-Poisonous Local Data}
\label{sec:partially_poisonous}
A standard assumption in robust distributed ML literature that we have also made so far is that each worker is either {\em entirely corrupted} or honest. If a worker is honest then it is assumed that all of its data points are sampled correctly and that it always follows the prescribed algorithm. However, in practice, we might have some corrupted data points among the data points available to all the workers. In particular, instead of considering a fraction of corrupted workers, we may assume a fraction of the data points available to all workers are poisonous (or incorrectly sampled). To address a general data poisoning setting, in this section, we consider both worker-level and global-level data corruptions. Specifically, we assume that the datasets of up to $f$ out of $n$ workers are fully corruptible and that the datasets of remaining $n-f$ workers is partially corruptible. To characterize the impact of these two types of corruptions, we focus on empirical loss minimization where each worker $i$ has a dataset $\SSS^{(i)}$ of $m$ data points.\footnote{A solution to the empirical loss minimization problem is a $\mathcal{O}\left(\frac{\sigma^2}{m}\right)$ approximate solution to the statistical loss.} We assume that $b$ out of $m$ data points of each worker can be arbitrarily corrupted. We let $\D^{(i)}$ denote 
the uniform distribution over the remaining $m - b$ incorruptible data points. By \eqref{eq:local_loss}, we have
\begin{align}
    {\loss}^{(i)}(\model{}{}) \coloneqq \condexpect{x \sim \D^{(i)}}{q(\model{}{}, x)} = \frac{1}{m-b} \sum_{x \in \SSS_h^{(i)}}q(\model{}{}, x) , \label{eqn:erm_loss}
\end{align}
where $ \SSS_h^{(i)} \coloneqq \support{\D^{(i)}}$ is the set of honest data points of worker $i$.
This general data poisoning model encompasses various scenarios. For instance, setting $n = 1$ and $f = 0$ corresponds to the centralized poisoning problem, where a portion of a large dataset is corrupted. Furthermore,  $b = 0$ corresponds to the case where some of the workers are always correct which is the scenario often studied in the Byzantine ML literature that we considered in the previous sections. In the rest of this section, we prove matching upper and lower bounds on the learning error in the above setting.

\begin{remark}
    For the simplicity of presentation, we only consider the data poisoning threat. However, our upper bound holds even for the stronger Byzantine failure threat model, where recall that when a worker is corrupted, it can send an arbitrary vector for its gradient to the server. 
\end{remark}

\subsection{Lower Bound}
\begin{theorem}
\label{thm:lower_gd}
    Consider the average empirical loss~\eqref{eqn:global_loss} with individual loss functions as defined in~\eqref{eqn:erm_loss}. Suppose assumptions~\ref{asp:lip},~\ref{asp:polyak},~\ref{asp:bnd_var}, and~\ref{asp:heter}.
    For any algorithm  $\Pi$ outputting a model $\hat{\theta{}}_{\Pi}$, we have
    $$\avgloss \left( \hat{\theta{}}_{\Pi} \right) - Q^* \in \Omega\left(\frac{f}{n}\cdot\frac{\zeta^2}{\mu} + \frac{b}{m} \cdot\frac{\sigma^2}{\mu}\right).$$
\end{theorem}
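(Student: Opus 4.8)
The plan is to reduce to a scalar quadratic instance and establish the two terms of the bound through separate hard constructions, then combine them. As in the proof of Theorem~\ref{thm:lower_bound_strongly_convex}, I would work in $d=1$ with $\mathcal{X}\subseteq\mathbb{R}$ and the quadratic per-sample loss $q(\theta,x)=\frac{\mu}{2}(\theta-x)^2$, for which $\avgloss$ is quadratic with minimizer equal to the honest empirical mean and $\avgloss(\theta)-\optloss=\frac{\mu}{2}(\theta-\bar{x})^2$; all four assumptions then hold, with the heterogeneity controlled by the spread of the per-worker means and the stochasticity controlled by the spread of each honest dataset, since $\nabla q=\mu(\theta-x)$ yields gradient variance $\mu^2\,\mathrm{Var}(x)$. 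Because the general model only grants the adversary more power, any instance with $b=0$ or with $f=0$ is a legitimate instance of the full model, so it suffices to produce one hard family for each term and take the maximum, which is $\Omega$ of the sum.

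For the heterogeneity term $\frac{f}{n}\cdot\frac{\zeta^2}{\mu}$ I would set $b=0$ and reuse the worker-level construction from the first case of Theorem~\ref{thm:lower_bound_strongly_convex}: partition the workers into two groups whose honest datasets are \emph{constant} (each a Dirac, i.e. all $m$ points equal) at $0$ and at a value of order $\frac{\zeta}{\mu}\sqrt{(n-f)/f}$ respectively, and play two executions with different honest sets $\mathcal{H}$ that the algorithm cannot tell apart since at most $f$ workers differ. Applying the output guarantee in both executions reproduces the first term verbatim; the per-worker datasets being constant makes each $\mathcal{D}^{(i)}$ a legitimate empirical distribution, so nothing beyond Theorem~\ref{thm:lower_bound_strongly_convex} is required here.

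The genuinely new ingredient is the data-poisoning term $\frac{b}{m}\cdot\frac{\sigma^2}{\mu}$, which I would obtain with $f=0$ and \emph{homogeneous} workers (so $\zeta=0$) by a two-point indistinguishability argument at the level of data points. Fix, in every worker, an observed multiset of $m-b$ copies of $0$ and $b$ copies of a value $c$. In instance A the honest points are the $m-b$ zeros (the adversary supplied the $b$ copies of $c$), so $\theta^*_A=0$; in instance B the honest points are the $b$ copies of $c$ together with $m-2b$ of the zeros (the adversary re-labels the other $b$ zeros as corrupted), so $\theta^*_B=\frac{bc}{m-b}=:s$. Both instances produce \emph{identical} observed data in every worker, hence any $\Pi$ returns the same $\hat{\theta}_\Pi$ in law while the targets differ by $s$. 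Choosing $c$ as large as the stochasticity budget allows, namely $\mathrm{Var}_{\mathcal{D}^{(i)}}(x)\approx\frac{b}{m}c^2\le \sigma^2/\mu^2$, gives $c\asymp\frac{\sigma}{\mu}\sqrt{m/b}$ and $s\asymp\frac{\sigma}{\mu}\sqrt{b/m}$, and the standard two-point bound $\max_{A,B}\mathbb{E}\!\left[\avgloss(\hat{\theta}_\Pi)-\optloss\right]\ge\frac{\mu}{4}\,\mathbb{E}\!\left[\hat{\theta}^2+(\hat{\theta}-s)^2\right]\ge\frac{\mu s^2}{8}$ delivers $\Omega\!\left(\frac{b}{m}\cdot\frac{\sigma^2}{\mu}\right)$.

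Combining the two families, the worst-case error is $\Omega\!\left(\frac{f}{n}\cdot\frac{\zeta^2}{\mu}+\frac{b}{m}\cdot\frac{\sigma^2}{\mu}\right)$, as claimed. The main obstacle I anticipate is the book-keeping in the data-level construction: one must verify that instance B respects Assumption~\ref{asp:bnd_var} \emph{after} re-labeling (its honest $\mathcal{D}^{(i)}$ is a two-point distribution whose variance must remain below $\sigma^2/\mu^2$), confirm the regime $b\le m/2$ needed for $m-2b\ge 0$ (mirroring the trivial regime $f\ge n/2$), and track constants so that both quadratic families share the same $(\mu,L,\sigma,\zeta)$ up to absolute factors absorbed into the $\Omega$. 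The \emph{exact} indistinguishability of the observed data—rather than a bounded total-variation argument as in Theorem~\ref{thm:lower_bound_strongly_convex}—is what keeps the data-level step clean, so I expect the remaining estimates to be routine once the instance is set up correctly.
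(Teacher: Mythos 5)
Your proposal is correct and follows essentially the same route as the paper's proof: both reduce to a scalar quadratic instance, reuse the two-execution heterogeneity construction of Theorem~\ref{thm:lower_bound_strongly_convex} for the $\frac{f}{n}\cdot\frac{\zeta^2}{\mu}$ term, and obtain the $\frac{b}{m}\cdot\frac{\sigma^2}{\mu}$ term via two relabelings of an identical observed dataset ($m-b$ zeros and $b$ copies of $c \asymp \frac{\sigma}{\mu}\sqrt{m/b}$) whose honest minimizers differ by $\asymp \frac{\sigma}{\mu}\sqrt{b/m}$, followed by the same Jensen-type two-point bound. Your instances A and B coincide (up to the loss normalization $\frac{\mu}{2}$ versus $\frac{\mu}{4}$) with the paper's Case~1 and Case~2, and your explicit variance check after relabeling matches the paper's verification of Assumption~\ref{asp:bnd_var}.
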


\begin{proof}
We prove the theorem for the scalar domain, i.e., $d = 1$, $\X \in \R$, and a quadratic loss, i.e., $q(\model{}{}, x) = \frac{\mu}{4} \left( \model{}{} - x \right)^2$. 
The proof for the first term, i.e., $\Omega\left(\frac{f}{n}\cdot\frac{\zeta^2}{\mu}\right)$, follows from the second case in the proof of Theorem~\ref{thm:lower_bound_strongly_convex}. The second term, i.e., $\Omega\left( \frac{b}{m} \cdot\frac{\sigma^2}{\mu} \right)$ term, also follows from the arguments made in the proof of Theorem~\ref{thm:lower_bound_strongly_convex}. We provide key differences below.

Suppose that $f = 0$, i.e., there is no worker with full-poisonous data in the system. Also, suppose that all the workers have identical local datasets, i.e., $\zeta = 0$. Since, having multiple copies of the same dataset does not provide any additional information, the problem reduces to the case with a single worker possessing a dataset denoted as $\SSS^{(1)}$  such that the honest data points $\SSS_h^{(1)}$ satisfy Assumption~\ref{asp:bnd_var}. Consider a quadratic loss function $q(\model{}{}, x) = \frac{\mu}{4} \left( \model{}{} - x \right)^2$ with gradient $\nabla q(\model{}{}, x) = \frac{\mu}{2} \left( \model{}{} - x\right)$. This loss satisfies  assumptions~\ref{asp:lip} and~\ref{asp:polyak}. For any $j \in [m]$, let $x^{(1,j)}$ be the $j$-th data point in $\SSS^{(1)}$.
Now suppose that $x^{(1,j)} = 0$ for $1 \leq j \leq m-b$, and $x^{(1,j)} = \frac{2\sigma}{\mu}\sqrt{\frac{m-b}{b}}$ for $m-b+1\leq j \leq m$. Consider the following two cases:

\begin{itemize}
    \item[] {\bf Case 1:} $\SSS_h^{(1)} \coloneqq \left\{x^{(1,j)}: 1\leq j \leq m - b \right\}$.
    \item[] {\bf Case 2:} $\SSS_h^{(1)} \coloneqq \left\{x^{(1,j)}: b + 1\leq j \leq m \right\}$.
\end{itemize}

In case 1, we have
\begin{align*}
    \condexpect{x \sim \dist{1}}{\left( \nabla q \left(\model{}{},x\right)-\nabla \localloss{1}(\model{}{}) \right)^2} =\frac{1}{m - b} \sum_{x \in \SSS_h^{(1)} } {\left( \nabla q \left(\model{}{},x\right)-\nabla \localloss{1}(\model{}{}) \right)^2} = 0 \leq \sigma^2  \enspace.   
\end{align*}
In case 2, using the same technique as in Execution 2 of the second case in the proof of Theorem~\ref{thm:lower_bound_strongly_convex}, we have
\begin{align*}
    \condexpect{x \sim \dist{1}}{\left(\nabla q \left(\model{}{},x\right)-\nabla \localloss{1}(\model{}{}) \right)^2} =\frac{1}{m - b} \sum_{x \in \SSS_h^{(1)} } {\left( \nabla q \left(\model{}{},x\right)-\nabla \localloss{1}(\model{}{}) \right)^2} = \sigma^2  \enspace.   
\end{align*}
Therefore, in both cases, Assumption~\ref{asp:bnd_var} is satisfied. 

Now, suppose that algorithm $\Pi$ provides an $\varepsilon$-approximation guarantee on the learning error. Specifically, in both cases, we have
\begin{align*}
    \avgloss \left( \hat{\theta{}}_{\Pi} \right) - Q^* \leq \varepsilon.
\end{align*}
This implies that (refer the first case in the proof of Theorem~\ref{thm:lower_bound_strongly_convex}),
\begin{align*}
    \frac{\mu}{4} \, \left( \hat{\theta{}}_{\Pi} \right)^2 \leq \varepsilon \quad \text{and} \quad \frac{\mu}{4} \, \left(\hat{\model{}{}}_{\Pi} - \frac{2\sigma}{\mu}\sqrt{\frac{b}{m-b}}  \right)^2 \leq \varepsilon . 
\end{align*}
Thus, applying Jensen's inequality, we obtain that
\begin{align*}
    \varepsilon \geq \frac{\mu}{16} \left(\frac{2\sigma}{\mu}\sqrt{\frac{b}{m-b}}\right)^2. 
\end{align*}
The above implies that $\varepsilon \in \Omega\left( \frac{b}{m}\cdot\frac{\sigma^2}{\mu}\right)$.
This concludes the proof.
\end{proof}

\subsection{Upper Bound}
In this section, we establish an upper bound that matches the lower bound presented in Theorem~\ref{thm:lower_gd} by considering Algorithm~\ref{algorithm:d-gd}. Notably, Algorithm~\ref{algorithm:d-gd} exhibits three key distinctions when compared to Algorithm~\ref{algorithm:dsgd}.
Firstly, Algorithm~\ref{algorithm:d-gd} operates deterministically; at each iteration, every worker computes the gradient over its entire dataset, in contrast to the stochastic nature of Algorithm~\ref{algorithm:dsgd}.
Secondly, in addition to the global aggregation functions performed by the server, each worker in Algorithm~\ref{algorithm:d-gd} incorporates a locally applied trimmed mean aggregation function. This function serves to filter out outliers, ensuring the robustness of the local updates.
Finally, Algorithm~\ref{algorithm:d-gd} does not require local momentum (owing to its deterministic nature), and the model is updated using robustified gradient vectors. The following theorem shows the convergence of Algorithm~\ref{algorithm:d-gd}. The proof can be found in Appendix~\ref{app:proofupperboundgd}.

\begin{algorithm2e}[!htb]
    
    \SetKwInOut{Input}{Input}
    \SetKwInOut{Output}{Output}
    \SetAlgoLined
    \Input{$T \geq 2$, step size $\gamma > 0$.}
    \textcolor{violet}{\bf Server} chooses arbitrarily $\theta_0 \in \R^d$.
    
    \For{$t=0$ to $T-1$}{
    \textcolor{violet}{\bf Server} broadcasts $\model{}{t}$ to all workers\;
    
    \For{each \textcolor{teal}{\bf honest worker} $i$ (in parallel)}{
    
        for each data point $x \in  \mathcal{S}^{(i)}$, compute $\nabla q \left(\model{t}{},x\right)$, computes
    $$G_t^{(i)}:= \text{TM}^{(b)}\left(\nabla q \left(\model{t}{},x\right), \forall x \in \mathcal{S}^{(i)}\right),$$
    and sends $G_t^{(i)}$ to the server.

    }
    
    \textcolor{darkgray}{\% A corrupted worker $i$ 
    may send an arbitrary vector to the server. 
    }
    
    \textcolor{violet}{\bf Server}
    updates the parameter vector  $\model{}{t+1} = \model{}{t} - \gamma \text{TM}^{(f)}\left(G_t^{(1)}, \ldots, \, G_t^{(n)} \right)$ \;
    }
    \Output{$\hat{\model{}{}}$ = $\model{}{T}$}

   \caption{DGD with local and global trimmed mean aggregations}
    \label{algorithm:d-gd}
\end{algorithm2e}

\begin{theorem}
\label{thm:upperboundgd}
    Consider the average empirical loss~\eqref{eqn:global_loss} with individual loss functions as defined in~\eqref{eqn:erm_loss}. Suppose assumptions~\ref{asp:lip},~\ref{asp:polyak},~\ref{asp:bnd_var}, and~\ref{asp:heter}. Consider Algorithm~\ref{algorithm:d-gd} with $\gamma = 1/L$. Then, we have 
    \begin{align*}
        \avgloss(\model{}{T}) - Q^* &\leq \exp\left(-\frac{\mu}{L}T\right) \left( \avgloss(\model{}{0}) - Q^* \right) + \frac{1}{\mu}(  \lambda' \sigma^2  + 3 \lambda\lambda' \sigma^2 + 3 \lambda\zeta^2) ,
    \end{align*}
    where $\lambda = \frac{6 f}{n-2f}\, \left( 1 + \frac{f}{n-2f} \right)$ and $\lambda' = \frac{6 b}{m-2b}\, \left( 1 + \frac{b}{m-2b} \right)$.
\end{theorem}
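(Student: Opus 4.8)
The plan is to reduce the analysis to a single scalar recursion on the suboptimality gap $\avgloss(\model{}{t}) - \optloss$ driven by the error of the aggregated descent direction. Writing $G_t := \text{TM}^{(f)}(G_t^{(1)}, \dots, G_t^{(n)})$ for the vector the server subtracts and $\Delta_t := G_t - \nabla\avgloss(\model{}{t})$ for its deviation from the true honest gradient, the whole difficulty concentrates in producing a bound on $\norm{\Delta_t}^2$ that is \emph{uniform} in $t$; once this is in hand, smoothness plus the PL inequality collapse everything into a contraction whose fixed point is the claimed error floor. Throughout I assume $f < n/2$ and $b < m/2$ so that both trimmed means are well defined (these are exactly the conditions making $\lambda$ and $\lambda'$ finite).

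First I would control the \emph{inner}, worker-level aggregation. For an honest worker $i\in\H$, the $m$ per-point gradients passed to $\text{TM}^{(b)}$ contain $m-b$ incorruptible ones whose empirical mean equals $\nabla\localloss{i}(\model{}{t})$ by the definition~\eqref{eqn:erm_loss} of the local loss, and whose empirical variance is at most $\sigma^2$ by Assumption~\ref{asp:bnd_var} (since $\dist{i}$ is uniform over $\SSS_h^{(i)}$). Applying the robustness property of trimmed mean (Lemma~\ref{lemma:TM}) with trimming parameter $b$ out of $m$ then yields, for every $i\in\H$,
\begin{align*}
    \norm{G_t^{(i)} - \nabla\localloss{i}(\model{}{t})}^2 \leq \lambda' \cdot \frac{1}{m-b}\sum_{x\in\SSS_h^{(i)}}\norm{\nabla q(\model{}{t},x) - \nabla\localloss{i}(\model{}{t})}^2 \leq \lambda' \sigma^2 \enspace,
\end{align*}
so each honest worker reports a robustified gradient lying within $\sqrt{\lambda'\sigma^2}$ of its true local gradient.

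Next I would handle the \emph{outer}, server-level aggregation, which is where the two corruption levels interact. Setting $\bar G_t := \frac{1}{n-f}\sum_{i\in\H} G_t^{(i)}$, Lemma~\ref{lemma:TM} with parameter $f$ out of $n$ bounds $\norm{G_t - \bar G_t}^2$ by $\lambda$ times the honest dispersion $\frac{1}{n-f}\sum_{i\in\H}\norm{G_t^{(i)} - \bar G_t}^2$. To bound that dispersion I would decompose $G_t^{(i)} - \nabla\avgloss(\model{}{t})$ into the local estimation error $G_t^{(i)} - \nabla\localloss{i}(\model{}{t})$ (at most $\lambda'\sigma^2$ by the previous step) and the genuine heterogeneity $\nabla\localloss{i}(\model{}{t}) - \nabla\avgloss(\model{}{t})$ (averaging to at most $\heter^2$ by Assumption~\ref{asp:heter}), using that the empirical variance around the mean is dominated by the second moment around $\nabla\avgloss(\model{}{t})$. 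Combining this with the inner bound through the triangle inequality gives a uniform estimate of the form $\norm{\Delta_t}^2 \lesssim \lambda'\sigma^2 + \lambda(\lambda'\sigma^2 + \heter^2)$; the exact constants in the theorem ($\lambda'\sigma^2 + 3\lambda\lambda'\sigma^2 + 3\lambda\heter^2$ after the descent factors below) come from tracking the $\norm{a+b}^2 \leq 2\norm{a}^2 + 2\norm{b}^2$ splits.

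Finally I would run the descent step. Applying smoothness (Assumption~\ref{asp:lip}) to the update $\model{}{t+1} = \model{}{t} - \gamma G_t$ with $\gamma = 1/L$, expanding $G_t = \nabla\avgloss(\model{}{t}) + \Delta_t$, and noting that the cross terms cancel exactly for this step size gives
\begin{align*}
    \avgloss(\model{}{t+1}) - \optloss \leq \left(\avgloss(\model{}{t}) - \optloss\right) - \tfrac{1}{2L}\norm{\nabla\avgloss(\model{}{t})}^2 + \tfrac{1}{2L}\norm{\Delta_t}^2 \enspace.
\end{align*}
Invoking the PL inequality (Assumption~\ref{asp:polyak}) on the middle term turns this into the contraction $\avgloss(\model{}{t+1}) - \optloss \leq (1-\tfrac{\mu}{L})(\avgloss(\model{}{t}) - \optloss) + \tfrac{1}{2L}\norm{\Delta_t}^2$, and unrolling over $T$ steps with $\sum_{k\geq 0}(1-\mu/L)^k = L/\mu$ and $(1-\mu/L)^T \leq e^{-\mu T/L}$ produces the stated bound. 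The main obstacle is the third paragraph: recognizing that the inner trimmed-mean errors act as an \emph{additional} source of inter-worker heterogeneity seen by the server, so that the two robustness constants compose \emph{multiplicatively} (the $\lambda\lambda'\sigma^2$ term) rather than additively, and that this composed quantity is bounded uniformly in $t$ with no dependence on the iterate. The descent and unrolling are then routine for the PL class.
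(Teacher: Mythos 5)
Your proposal is correct and follows essentially the same route as the paper's proof: bound the inner trimmed mean per honest worker by $\lambda'\sigma^2$ via the TM robustness property, bound the outer trimmed mean's deviation from $\nabla\avgloss(\model{}{t})$ by $\lambda$ times the honest dispersion (split into estimation error and heterogeneity), then combine smoothness at $\gamma = 1/L$ with the PL inequality and unroll the geometric contraction. The only (harmless) differences are cosmetic: you bound the honest dispersion by variance-domination around $\nabla\avgloss(\model{}{t})$ with a two-term split, where the paper uses the pairwise-difference identity with a three-term split — your route actually yields slightly smaller constants ($2\lambda\lambda'\sigma^2 + 2\lambda\zeta^2$ in place of $3\lambda\lambda'\sigma^2 + 3\lambda\zeta^2$), which still implies the stated bound.
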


Note that $\lambda'  \in \mathcal{O}\left(\frac{b}{m}\right)$ and $\lambda \in \mathcal{O}\left(\frac{f}{n}\right)$. Hence, we obtain the following corollary of Theorem~\ref{thm:upperboundgd}.
\begin{corollary}
Under the same conditions as in Theorem~\ref{thm:upperboundgd}, Algorithm~\ref{algorithm:d-gd} outputs $\model{}{T}$ such that 
$$\avgloss(\model{}{T}) - Q^* \in \mathcal{O}\left(\frac{f}{n}\cdot\frac{\zeta^2}{\mu} + \frac{b}{m} \cdot\frac{\sigma^2}{\mu}+ \varepsilon \right),$$
as long as $T \in \mathcal{O} \left( \frac{L}{\mu} \cdot \log\frac{Q_0}
{\varepsilon} \right)$.
\end{corollary}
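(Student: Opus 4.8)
The plan is to reduce the analysis of Algorithm~\ref{algorithm:d-gd} to an inexact gradient descent under the PL inequality, where the inexactness comes from a two-level (nested) application of the trimmed-mean robustness property (Lemma~\ref{lemma:TM}). Writing $\hat{G}_t := \text{TM}^{(f)}(G_t^{(1)}, \ldots, G_t^{(n)})$ for the server's aggregate and $e_t := \hat{G}_t - \nabla\avgloss(\model{}{t})$ for the resulting gradient error, the first goal is a uniform bound $\norm{e_t}^2 \leq 2\lambda'\sigma^2 + 6\lambda\lambda'\sigma^2 + 6\lambda\zeta^2$, and the second is to feed this into a smoothness-plus-PL descent recursion.

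First I would control the \emph{local} aggregation at each honest worker. Worker $i$ holds $m$ gradients, of which at most $b$ correspond to poisoned data points and $m-b$ to the honest set $\SSS_h^{(i)} = \support{\dist{i}}$. Applying Lemma~\ref{lemma:TM} with parameters $(m,b)$, and noting that $\nabla\localloss{i}(\model{}{t})$ is exactly the average of the honest gradients by \eqref{eqn:erm_loss}, gives $\norm{G_t^{(i)} - \nabla\localloss{i}(\model{}{t})}^2 \leq \lambda' \cdot \frac{1}{m-b}\sum_{x\in\SSS_h^{(i)}}\norm{\nabla q(\model{}{t},x) - \nabla\localloss{i}(\model{}{t})}^2 \leq \lambda'\sigma^2$, where the last step invokes Assumption~\ref{asp:bnd_var} for the uniform distribution $\dist{i}$ over $\SSS_h^{(i)}$. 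This is the key structural reduction: the inner trimmed mean converts partial data poisoning into a per-worker gradient perturbation of size at most $\lambda'\sigma^2$.

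Next I would handle the \emph{global} aggregation. Since at most $f$ of the $n$ vectors $G_t^{(i)}$ are arbitrary (from fully-corrupted workers), Lemma~\ref{lemma:TM} with parameters $(n,f)$ gives $\norm{\hat{G}_t - \bar{G}_t}^2 \leq \lambda \cdot \frac{1}{n-f}\sum_{i\in\H}\norm{G_t^{(i)} - \bar{G}_t}^2$, where $\bar{G}_t := \frac{1}{n-f}\sum_{i\in\H}G_t^{(i)}$. To bound the right-hand variance I would use that the mean minimizes squared deviations to replace $\bar{G}_t$ by $\nabla\avgloss(\model{}{t})$, split $G_t^{(i)} - \nabla\avgloss(\model{}{t})$ into a local-error part and a heterogeneity part via Young's inequality, and combine the local bound above with Assumption~\ref{asp:heter}; this yields $\frac{1}{n-f}\sum_{i\in\H}\norm{G_t^{(i)} - \bar{G}_t}^2 \leq 2\lambda'\sigma^2 + 2\zeta^2$. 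Together with $\norm{e_t}^2 \leq 2\norm{\hat{G}_t - \bar{G}_t}^2 + 2\norm{\bar{G}_t - \nabla\avgloss(\model{}{t})}^2$ and the convexity bound $\norm{\bar{G}_t - \nabla\avgloss(\model{}{t})}^2 \leq \lambda'\sigma^2$, this produces the claimed uniform error bound, with $\lambda$ and $\lambda'$ composing multiplicatively.

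Finally, with $\gamma = 1/L$ the smoothness Assumption~\ref{asp:lip} yields the standard inexact-descent step $\avgloss(\model{}{t+1}) - Q^* \leq \avgloss(\model{}{t}) - Q^* - \frac{1}{2L}\norm{\nabla\avgloss(\model{}{t})}^2 + \frac{1}{2L}\norm{e_t}^2$; substituting the PL inequality (Assumption~\ref{asp:polyak}) turns $-\frac{1}{2L}\norm{\nabla\avgloss(\model{}{t})}^2$ into a contraction factor $(1-\mu/L)$, and unrolling the resulting geometric recursion over $T$ steps gives the transient term $\exp(-\mu T/L)(\avgloss(\model{}{0}) - Q^*)$ together with an additive error $\frac{1}{2\mu}\max_t\norm{e_t}^2$, which is exactly $\frac{1}{\mu}(\lambda'\sigma^2 + 3\lambda\lambda'\sigma^2 + 3\lambda\zeta^2)$. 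I expect the main obstacle to be the careful bookkeeping of the nested trimmed mean: verifying that Assumption~\ref{asp:bnd_var} is legitimately invoked at the level of the honest data points inside each worker, that the implicit conditions $m > 2b$ and $n > 2f$ hold so both robustness bounds apply, and that the two coefficients compose multiplicatively rather than additively. The descent-and-PL argument itself is routine once the uniform bound on $\norm{e_t}^2$ is in hand.
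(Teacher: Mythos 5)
Your proposal is correct and follows essentially the same route as the paper's own argument: the same nested application of the trimmed-mean robustness bound (Lemma~\ref{lemma:TM}) first at the data-point level (giving $\norm{G_t^{(i)} - \nabla\localloss{i}(\model{}{t})}^2 \leq \lambda'\sigma^2$) and then at the worker level, the same decomposition $\norm{R_t - \nabla\avgloss(\model{}{t})}^2 \leq 2\norm{R_t - \bar{G}_t}^2 + 2\norm{\bar{G}_t - \nabla\avgloss(\model{}{t})}^2$ with Jensen's inequality for the second term, and the same smoothness-plus-PL contraction with $\gamma = 1/L$ unrolled into $\exp(-\mu T/L)\,Q_0$ plus a $\tfrac{1}{\mu}$-scaled additive error, after which the corollary is the paper's one-line observation that $\lambda \in \mathcal{O}(f/n)$, $\lambda' \in \mathcal{O}(b/m)$, and $T \geq \tfrac{L}{\mu}\log\tfrac{Q_0}{\varepsilon}$ kills the transient term. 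The only (harmless) difference is that you bound the spread $\tfrac{1}{n-f}\sum_{i\in\H}\norm{G_t^{(i)}-\bar{G}_t}^2$ via the mean-minimizing property of $\bar{G}_t$ and a two-way split, whereas the paper uses the pairwise identity (Lemma~\ref{lem:diam_equal}) and a three-way split, so your constants are marginally tighter but asymptotically identical.
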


\section{Concluding Remarks \& Open Problems}
\label{sec:remarks}
We have shown that the Byzantine failure threat model is not an overkill for addressing the more practical threat model of data poisoning.
Specifically, we have shown that state-of-the-art solutions to the {\em Byzantine ML} problem, such as the ones proposed in~\cite{farhadkhani2022byzantine, karimireddy2022byzantine, allouah2023fixing, gorbunov2023variance}, provide optimal protection against data poisoning attacks. Although our result applies to ML problems that are solvable by optimizing over Polyak-\L{}ojasiewicz (PL) loss functions, we believe that our deductions hold true even for a larger set of functions that do not necessarily satisfy the PL inequality. This constitutes an interesting future research direction. Furthermore, we have also shown that Byzantine robustness schemes yield tight solutions in both partial-poisonous and full-poisonous local data settings.

Note that we have only considered {\em untargeted} attacks in both the Byzantine failure and the data poisoning threat models. An interesting future direction would be to consider {\em targeted} attacks, wherein corrupted workers do not necessarily attempt to maximize the learning error, but rather act strategically to manipulate the learning into converging to a target region in the model space that performs poorly on specific types of inputs (i.e., has high generalization errors), e.g., see~\citep{DaiCL19,wang2020,ZhaoMZ0CJ20,TruongJHAPJNT20,SeveriMCO21}. While a recent work has attempted to compare Byzantine failure and data poisoning in the context of targeted attacks~\citep{equivalenceposiong}, the findings only applicable to conventional ML methods that do not incorporate any robustness properties. Our proof techniques could be used to obtain a principled comparison between the two threat models in the targeted attacks scenario.

\bibliographystyle{abbrv}
\bibliography{arxiv}

\newpage
\appendix

\begin{center}
    \LARGE \bf {Appendix}
\end{center}

\section{Proof of Theorem~\ref{thm:lower_bound_strongly_convex}}
\label{app:lower_bnd}

    \begin{remark}
        Note that, to prove Theorem~\ref{thm:lower_bound_strongly_convex}, we focus on the special case where $d = 1$. As the lower bound is established using the squared Euclidean norm, the proof applies
directly to $d > 1$ since the instances used in the proof are still valid in a $1$-dimensional subspace.
        Moreover, as we later prove in Corollary~\ref{cor:main}, this lower bound is tight as it is matched by Algorithm~\ref{algorithm:dsgd} for an arbitrary $d$. Note, however, that despite the explicit absence of the dimension $d$ in the asymptotic error and the convergence rate, the impact of dimension $d$ is implicit through $\sigma^2$, i.e., the bound stated in Assumption~\ref{asp:bnd_var} on the {\em covariance trace} of the local stochastic noise. Indeed, when the variance of noise in each coordinate of the stochastic gradients might be as large as some real value $\varsigma^2$, we have $\sigma^2 = d \cdot \varsigma^2$ .
    \end{remark}

    To prove Theorem~\ref{thm:lower_bound_strongly_convex}, we need to show that for any $T > 0$, and any algorithm $\Pi$, we must have\footnote{Here we ignore the absolute constant in the exponent as it corresponds to a constant multiplied by the logarithmic term in Theorem~\ref{thm:lower_bound_strongly_convex}.} $$\condexpect{\Pi}{\avgloss \left( \hat{\model{}{}} \right) - Q^*} \in \Omega \left( \frac{f+1}{n} \cdot \frac{\sigma^2}{\mu T} + \frac{f}{n} \cdot 
        \frac{\heter^2}{\mu} + e^{-\frac{T}{K}} \right).$$

     We assume that the output $\hat{\model{}{}}$ of algorithm $\Pi$ satisfies the condition: $\condexpect{\Pi}{\avgloss \left( \hat{\model{}{}} \right) - Q^*} \leq A$ for $A > 0$. To obtain a lower bound on $A$, we consider a setting where $d = 1$, $\X \subseteq \R$ and the loss function
    $q(\model{}{}, \, x) = \frac{\mu}{4}(\model{}{}-x)^2$ where $0 < \mu < \infty$. We consider two separate cases, each with different instances of data distributions subject to assumptions~\ref{asp:lip},~\ref{asp:polyak},~\ref{asp:bnd_var}, and~\ref{asp:heter}.

    In the first case, we consider heterogeneous distributions for honest workers, i.e., $\heter \geq 0$. In this particular case, we adapt the proof of Theorem III in~\cite{karimireddy2022byzantine} to show that 
    \begin{align}
        A  \in \Omega\left(\frac{f}{n} \cdot 
        \frac{\heter^2}{\mu}\right). \label{eqn:ep_2}
    \end{align}
    In the second case, we assume $\D^{(i)} = \D$ for all $i \in \H$, i.e., $\heter = 0$ in Assumption~\ref{asp:heter}. In this particular case, we develop upon the indistinguishability of valid distributions in the general contamination model (shown in Proposition 1.7 of~\cite{dia22}) to show that
    \begin{align}
        A \in \Omega \left( \frac{f+1}{n} \cdot \frac{\sigma^2}{\mu T}  \right).  \label{eqn:ep_1}
    \end{align}
    As $\hat{\model{}{}}$ should satisfy the bound in both cases, {\bf the proof concludes upon combining~\eqref{eqn:ep_1} and~\eqref{eqn:ep_2}.}
     and the recently discovered $\Omega( e^{-\frac{T}{K}} )$ lower bound for first-order deterministic algorithms~\cite{yue2023lower} in the vanilla (non-Byzantine) setting.\footnote{Follows from the fact that $\max\{a, b, c\} \geq \frac{1}{3}(a + b + c).$} \\

        \noindent {\bf First Case.} In this case, we obtain a bound on the error when honest workers may have non-identical data distributions. Our derivation follows from the proof of Theorem III in \cite{karimireddy2022byzantine}. We partition the set of workers into $S = \{1, \ldots, \, n-f\}$ and $\hat{S} = \{n-f+1, \ldots, \, n\}$. We consider the following Dirac distributions of data.
    \begin{align*}
        \text{Distribution} \quad \D^{(i)} : \begin{cases}
          x = 0  ~ \text{ with probability } ~ 1 ~ ; & i \in S\\
          x = \frac{2 \heter}{\mu}\sqrt{\frac{n-f}{f}} ~ \text{ with probability } ~ 1 ~ ; & i \in \hat{S}
      \end{cases} \quad.
    \end{align*}
    Next, we consider two valid executions of $\Pi$ with different identities for honest workers. In Execution 1, $\H = S$ and in Execution 2, $\H = \{1, \ldots, \, n-2f\} \cup \hat{S}$. It is easy to verify (using similar steps as in the first case) that assumptions~\ref{asp:lip},~\ref{asp:polyak} and~\ref{asp:bnd_var} are satisfied in either executions. We show below that Assumption~\ref{asp:heter} is also satisfied in the two executions. Hence, validating both the executions. Recall that we assume $f < \frac{n}{2}$.\\

    Note that $\localloss{i}(\model{}{}) \coloneqq \frac{\mu}{4} {\model{}{}}^2$ for all $i \in S$, and $\localloss{i}(\model{}{}) \coloneqq \frac{\mu}{4} \left(\model{}{} - \frac{2 \heter}{\mu}\sqrt{\frac{n-f}{f}} \right)^2$ for all $i \in \hat{S}$. In {\bf Execution 1}, as $\H = S$, it is easy to see that
    \begin{align}
        \frac{1}{\card{\H}}\sum_{i \in \H} {\norm{\nabla\localloss{i}(\model{}{})-\nabla\avgloss(\model{}{})}^2} = \frac{1}{\card{S}}\sum_{i \in S} \left( \frac{\mu}{2} \model{}{} - \frac{1}{\card{S}} \sum_{j \in S}\frac{\mu}{2} \model{}{}\right)^2= 0 \leq \heter^2. \label{eqn:heter-1}
    \end{align}
    In {\bf Execution 2}, as $\H = \{1, \ldots, \, n-2f\} \cup \hat{S}$, we have 
    \begin{align*}
        \avgloss(\model{}{}) = \frac{\mu (n-2f)}{4 (n-f)} {\model{}{}}^2 + \frac{\mu f}{4 (n-f)} \left( \model{}{} - \frac{2 \heter}{\mu \sqrt{\frac{n-f}{f}}}\right)^2 = \frac{\mu}{4} \left( \model{}{} - \frac{2 \heter}{\mu} \sqrt{\frac{f}{n-f}}\right)^2 + \frac{n-2f}{n-f} \cdot \frac{\heter^2}{\mu}.
    \end{align*}
    Therefore,
    \begin{align*}
        \nabla\avgloss(\model{}{}) = 
        \frac{\mu}{2} \left( \model{}{} - \frac{2 \heter}{\mu} \sqrt{\frac{f}{n-f}} \right) .
    \end{align*}
    Thus,
    \begin{align*}
        & \frac{1}{\card{\H}}\sum_{i \in \H} {\norm{\nabla\localloss{i}(\model{}{})-\nabla\avgloss(\model{}{})}^2} = \frac{1}{n-f} \sum_{i = 1}^{n-2f} \left( \frac{\mu}{2} \model{}{} - \frac{\mu}{2} \left( \model{}{} - \frac{2 \heter}{\mu} \sqrt{\frac{f}{n-f}} \right) \right)^2 \nonumber\\ 
        & + \frac{1}{n-f} \sum_{i \in \hat{S}} \left( \frac{\mu}{2} \left(\model{}{} - \frac{2 \heter}{\mu}\sqrt{\frac{n-f}{f}} \right)- \frac{\mu}{2} \left( \model{}{} - \frac{2 \heter}{\mu} \sqrt{\frac{f}{n-f}} \right) \right)^2 
    \end{align*}
    Upon simplifying the RHS above we obtain that
    \begin{align}
       \frac{1}{\card{\H}}\sum_{i \in \H} {\norm{\nabla\localloss{i}(\model{}{})-\nabla\avgloss(\model{}{})}^2} = \frac{f(n-2f)}{(n-f)^2} \heter^2 + \frac{(n-2f)^2}{(n-f)^2} \heter^2 = \frac{n - 2f}{n - f} \, \heter^2 \leq \heter^2. \label{eqn:heter-2}
    \end{align}
    Thus, due to~\eqref{eqn:heter-1} and~\eqref{eqn:heter-2}, Assumption~\ref{asp:heter} is also satisfied in both executions. \\

    Recall that in each execution of algorithm $\Pi$ the output $\hat{\model{}{}}$ satisfies the condition: $\condexpect{\Pi}{\avgloss \left( \hat{\model{}{}} \right) - Q^*} \leq A$. Thus, from Execution 1, as $Q^* = 0$ and $\avgloss \left( \model{}{} \right) \coloneqq \frac{\mu}{4} {\model{}{}}^2$, we have 
    \begin{align}
        \frac{\mu}{4} \, \condexpect{\Pi}{\hat{\model{}{}}^2} \leq A. \label{eqn:exec-i}
    \end{align}
    Similarly, from Execution 2, as $Q^* = \frac{n-2f}{n-f} \cdot \frac{\heter^2}{\mu}$ and $\avgloss \left( \model{}{} \right) \coloneqq \frac{\mu}{4} \left( \model{}{} - \frac{2 \heter}{\mu} \sqrt{\frac{f}{n-f}}\right)^2 + \frac{n-2f}{n-f} \cdot \frac{\heter^2}{\mu}$, we obtain that
    \begin{align}
        \frac{\mu}{4} \, \condexpect{\Pi}{\left( \hat{\model{}{}} - \frac{2 \heter}{\mu} \sqrt{\frac{f}{n-f}}\right)^2}  \leq A\label{eqn:exec-ii}
    \end{align}
    From Jensen's inequality, as $a^2 \leq (a - b + b)^2 \leq 2(a-b)^2 + 2b^2$, we have
    \begin{align}
        \left(\frac{2\heter}{\mu} \sqrt{\frac{f}{n-f}} \right)^2 \leq 2 \left(\frac{2 \heter}{\mu} \sqrt{\frac{f}{n-f}} - \hat{\model{}{}}\right)^2 + 2 \hat{\model{}{}}^2.
        \label{eq:inter998}
    \end{align}
    Upon substituting from~\eqref{eqn:exec-i} and~\eqref{eqn:exec-ii} in the above, we obtain that
    \begin{align*}
        \left(\frac{2\heter}{\mu} \sqrt{\frac{f}{n-f}} \right)^2 \leq \frac{16}{\mu} \, A.
    \end{align*}
    From above, we obtain that $A ~ \geq ~ \frac{f}{n-f} \cdot \frac{\heter^2}{4 \mu}~ \geq \frac{f}{n} \cdot \frac{\heter^2}{4 \mu}$, which implies~\eqref{eqn:ep_2}, i.e., 
    \begin{align*}
        A \in \Omega\left(\frac{f}{n} \cdot 
        \frac{\heter^2}{\mu}\right)
    \end{align*}
    This completes the proof of Theorem~\ref{thm:lower_bound_strongly_convex}. \\
    
    \noindent {\bf Second Case.} Let $\D^{(i)} = \D$ for all $i \in \H$, where distribution $\D$ satisfies the following:
    \begin{align*}
        \condexpect{x \sim \D}{x} < \infty, ~ \text{ and } ~ \condexpect{x \sim \D}{\left( x - \condexpect{x \sim \D}{x} \right)^2} \leq \frac{4 \sigma^2}{\mu^2}.
    \end{align*}
    By definition of $\loss^{(i)} (\model{}{})$, we obtain that for all $i$,
    \begin{align}
        \loss^{(i)} (\model{}{})
        = \frac{\mu}{4} \, \condexpect{x \sim \D}{\left( \model{}{}-x \right)^2}, ~ \text{ and thus, } ~ \nabla \loss^{(i)} (\model{}{}) = \frac{\mu}{2} \, \left( \theta - \condexpect{x \sim \D}{x} \right). \label{eqn:case1-1}
    \end{align}
    Thus, Assumption~\ref{asp:lip} holds true, i.e., $\nabla \loss^{(i)} (\model{}{})$ is Lipschitz continuous, with $L = \mu$. Assumption~\ref{asp:bnd_var} holds true due to the following:
    \begin{align*}
         \condexpect{x \sim \D}{\left(\nabla \loss^{(i)} (\model{}{})  -  \nabla q(\model{}{}, \, x) \right)^2 } & = \condexpect{x \sim \D}{\left(\frac{\mu}{2} \left( \theta - \condexpect{x \sim \D}{x} \right) - \frac{\mu}{2} \left(\model{}{} - x \right) \right)^2} \\
         & = \frac{\mu^2}{4} \, \condexpect{x \sim \D}{\left(x - \condexpect{x \sim \D}{x} \right)^2} = \sigma^2.
    \end{align*}
    From~\eqref{eqn:case1-1}, we obtain that
    \begin{align}
        \avgloss\left( \model{}{} \right) \coloneqq \frac{1}{\card{\H}} \sum_{i \in \H} \loss^{(i)} (\model{}{}) =  \frac{\mu}{4} \condexpect{x \sim \D}{\left(\model{}{}-x \right)^2}. \label{eqn:case1_avg}
    \end{align}
    Thus, $\loss^{(i)}$ and $\avgloss$ are identical in this case, and Assumption~\ref{asp:heter} holds true trivially for $\zeta = 0$. From above we obtain that $\theta^* \coloneqq  \argmin_{\model{}{} \in \R^d} \, \avgloss (\model{}{})  = \condexpect{x \sim \D}{x}$, and thereby,
    \begin{align}
        Q^* = \avgloss\left( \theta^* \right) = \frac{\mu}{4} \, \condexpect{x \sim \D}{\left(x - \condexpect{x \sim \D}{x}\right)^2} . \label{eq:mean_estimation}
    \end{align} 
    From~\eqref{eqn:case1_avg} and~\eqref{eq:mean_estimation} we obtain that 
    \begin{align}
        \avgloss\left( \model{}{} \right) - Q^* = \frac{\mu}{4} \left( \model{}{} - \condexpect{x \sim \D}{x} \right)^2. \label{eqn:mean_estimation_epsilon}
    \end{align}
    Thus,  
    \[ \left( \nabla \avgloss\left( \model{}{} \right)  \right)^2 = 
    \frac{\mu^2}{4} \left(\model{}{} -  \condexpect{x \sim \D}{x} \right)^2 = \mu \left(\avgloss\left( \model{}{} \right) - Q^* \right).\]
    Therefore, Assumption~\ref{asp:polyak} also holds true. \\

    We show that the accuracy of Algorithm $\Pi$ reduces to that of an algorithm for estimating the mean of $\D$ by processing $n$ batches of $T$ points; $n-f$ batches sampled from $\D^T$ but the remainder $f$ batches may contain arbitrary points. From~\eqref{eqn:mean_estimation_epsilon} we obtain that 
    \begin{align*}
        \condexpect{\Pi}{\avgloss\left( \hat{\model{}{}} \right) - Q^*} = \frac{\mu}{4} \condexpect{\Pi}{\left( \hat{\model{}{}} - \condexpect{x \sim \D}{x} \right)^2}.
    \end{align*}
    Recall that we assume that $\condexpect{\Pi}{\avgloss\left( \hat{\model{}{}} \right) - Q^*} \leq A$. Thus, from above we have 
    \begin{align}
        A \geq \frac{\mu}{4} \, \condexpect{\Pi}{\left( \hat{\model{}{}} - \condexpect{x \sim \D}{x} \right)^2}. \label{eqn:error_Pi}
    \end{align}
    The above implies that Algorithm $\Pi$ can estimate the mean of distribution $\D$ within a sqaured-error of $4A/\mu$. Recall that in algorithm $\Pi$, each honest worker $i \in \H$ computes $T$ local stochastic gradients $\{\nabla q (\model{}{t}, \, x^{(i)}_t) ~ ; ~ t = 1, \ldots, \, T\}$ where each element in the set of observations $X^{(i)} \coloneqq \{ x^{(i)}_t ~ ; ~ t = 1, \ldots, \, T \}$ is i.i.d.~from the distribution $\D^{(i)} = \D$. 
    Recall that $\nabla q(\model{}{}, \, x) = \frac{\mu}{2}(\theta - x)$. Therefore, given the value of $\mu$, the set of parameter vectors $\{\model{}{t} ~ ; ~ t \in [T]\}$, we can recover the collection of random observations $\left\{ X^{(i)} ~ ; ~ i \in \H \right\}$ where $ X^{(i)} \sim \D^T$. Hence, it is obvious that the squared error for the mean estimation of $\D$ obtained upon executing $\Pi$ cannot be smaller than that of an {\em optimal} (possibly randomized) robust mean estimator $\Pi_{mean}$ that takes in as inputs $n$ sets of random values $X^{(1)}, \ldots, \, X^{(n)}$ such that $X^{(i)} \sim \D^T$ for all $i \in \H$ and $X_i$ for $i \in [n] \setminus \H$ may be an arbitrarily tuple of $T$ points. 
    Specifically, let $\hat{x} = \Pi_{mean} \left(X^{(1)}, \ldots, \, X^{(n)} \right)$, then
    \begin{align}
        \frac{4 A}{\mu} \geq \condexpect{\Pi_{mean}}{\left( \hat{x} - \condexpect{x \sim \D}{x} \right)^2}. \label{eqn:pi_pi'}
    \end{align}
    We obtain in the following a lower bound on the squared-error $\left( \hat{x} - \condexpect{x \sim \D}{x} \right)^2$ reasoning by indistinguishability of correct distributions under Huber's contamination model. Suppose there exists a distribution $D'$ such that the variance of $\D'$ is also upper bounded by $\frac{4 \sigma^2}{\mu^2}$ (same as that for $\D$) and $\text{TV}(\D^T,\D'^T) \leq \frac{2f}{n}$. Then, by virtue of Proposition 1.7 in \cite{dia22}, no algorithm can reliably distinguish whether the sets of observations $\left\{ X^{(i)} ~ ; ~ i \in \H \right\}$ were generated from $\D^T$ or $\D'^T$. Therefore,
    \begin{align}
        \condexpect{\Pi_{mean}}{\left(\hat{x} - \condexpect{x \sim \D}{x} \right)^2} \geq \frac{1}{4} \, \left( \condexpect{x \sim \D}{x} - \condexpect{x \sim \D'}{x} \right)^2. \label{eqn:mean_d-d'}
    \end{align}
    We construct the following valid distributions $\D$ and $\D'$ to obtain a lower bound for the RHS in~\eqref{eqn:mean_d-d'}.
    \begin{align*}
        \text{Distribution } ~ \D: & \quad x = 0 \text{ \quad with probability } ~ 1.\\
        \text{Distribution } ~ \D': & \quad x =
        \begin{cases}
          \frac{2\sigma}{\mu} \sqrt{\frac{Tn}{2f}}  &\text{ \quad with probability } ~ \frac{2f}{nT} \\
          0 &\text{ \quad with probability } ~ 1 - \frac{2f}{nT} 
      \end{cases} \quad.
    \end{align*}
    {\bf Validity of $\D$ and $\D'$.} Note that $\condexpect{x \sim \D}{x} = 0$, and variance $\condexpect{x \sim \D}{\left( x - \condexpect{x \sim \D}{x} \right)^2} = 0 \leq \frac{4 \sigma^2}{\mu^2}$.
    Similarly, $\condexpect{x \sim \D'}{x} = \frac{2\sigma}{\mu} \sqrt{\frac{2f}{nT}}$ and variance $\condexpect{x \sim \D'}{\left( x - \condexpect{x \sim \D}{x} \right)^2} = \frac{4 \sigma^2}{\mu^2} (1-\frac{2f}{nT}) \leq \frac{4 \sigma^2}{\mu^2}$. Let $0^T$ denote a $T$-tuple with all elements equal to $0$. If $X \sim \D'^T$ then
    $$ \text{Pr} (X = 0^T) = \left(1 - \frac{2f}{nT}\right)^T .$$
    As $\left(1 - \frac{2f}{nT}\right)^T \geq 1 - \frac{2f}{n}$, from above we obtain that $\text{TV}(\D^T,\D'^T) = 1 - \left(1 - \frac{2f}{nT}\right)^T \leq \frac{2f}{n}$. Therefore, $\D$ and $\D'$ are indistinguishable.\\

    Substituting the mean values of $\D$ and $\D'$ in~\eqref{eqn:mean_d-d'} we obtain that
    \begin{align*}
        \condexpect{\Pi_{mean}}{\left(\hat{x} - \condexpect{x \sim \D}{x} \right)^2} \geq \frac{1}{4} \, \left( \frac{2\sigma}{\mu} \sqrt{\frac{2f}{nT}} \right)^2 = \frac{2 \sigma^2}{\mu^2} \cdot \frac{f}{nT} ~.
    \end{align*}
    Substituting from above in~\eqref{eqn:pi_pi'} we have
    \begin{align}
        \frac{4A}{\mu} \in \Omega\left(\frac{f}{n} \cdot \frac{\sigma^2}{\mu^2 T}   \right) ~ . \label{eqn:pi_pi'-2}
    \end{align}
    As we have at most $nT$ samples drawn from distribution $\D$, by the {classical lower bound on statistical error rate~(see Section 3.2 of \cite{wu2017lecture})}, we also have
    \begin{align}
         \frac{4A}{\mu} \in \Omega\left( \frac{1}{n} \cdot \frac{\sigma^2}{\mu^2 T} \right). \label{eqn:cramer-rao}
    \end{align}
    Finally, combining~\eqref{eqn:pi_pi'-2} and~\eqref{eqn:cramer-rao} we obtain~\eqref{eqn:ep_1}, i.e., 
    \begin{align*}
        A \in \Omega \left( \frac{f + 1}{n} \cdot \frac{\sigma^2}{\mu T} \right) . 
    \end{align*}

\section{Deferred Proofs for Theorem~\ref{thm:main}}
\label{appendix:upperBound}
Before proving a few simple lemmas that will be used in the subsequent proofs, let us introduce some useful notations.

\noindent{\bf Notation:}
We denote by $\mathcal{P}_t$ the history of nodes from steps $0$ to $t$. Specifically, we define
\[\P_t \coloneqq \left\{\model{}{0}, \ldots, \, \model{}{t}; ~ \mmt{i}{1}, \ldots, \, \mmt{i}{t-1}; i = 1, \ldots, \, n \right\}.\] 
By convention, $\P_0 = \{ \model{}{0} \}$. Furthermore, we denote by $\condexpect{t}{\cdot} := \expect{\cdot ~ \vline ~ \P_t}$ the conditional expectation given the history $\mathcal{P}_t$, and by $\expect{\cdot}$ the total expectation over the randomness of the algorithm; thus, $\expect{\cdot} := \condexpect{0}{ \cdots \condexpect{T}{\cdot}}$.
Also denote by

\begin{align}
\label{eqn:defavgmomentumandRt}
     R_t \coloneqq \text{TM}\left(\mmt{1}{t}, \ldots, \, \mmt{n}{t} \right) \enspace,
\end{align}
the output of trimmed mean operation.
\subsection{Preliminary Lemmas}
Note that by decomposing the update rule computed by the server at step $t$, we can treat Algorithm~\ref{algorithm:dsgd} as DSGD with a momentum term and a bias $\gamma_t \left( R_t -   \, \AvgMmt{t} \right)$. Specifically, we have
\begin{align}
\label{eqn:biasdecomposition}
     \model{}{t+1} = \model{}{t} - \learningrate{t} R_t = \model{}{t} - \learningrate{t}   \, \AvgMmt{t} - \learningrate{t} \left(R_t -   \, \AvgMmt{t} \right) \enspace.
\end{align}
The key to better understand the bias term in~\eqref{eqn:biasdecomposition} is the analysis of TM, that attempts to robustly estimate the average of the honest momentums at every step. Using a recent result in~\citep{allouah2023fixing}, we can actually bound the bias $(R_t - \AvgMmt{t})$ from above by the spread of honest nodes' momentums.
Specifically, we have the following lemma.
\begin{lemma}[Proposition 2 in~\citep{allouah2023fixing}]
\label{lemma:TM}
Let $n>2f$. Consider Algorithm~\ref{algorithm:dsgd} and $R_t$ as defined in~\eqref{eqn:defavgmomentumandRt}. For any $t\geq 0$, we have
\begin{align*}
    \norm{R_t - \AvgMmt{t}}^2 \leq \frac{\lambda}{n-f}\sum_{i\in\H}{\norm{\mmt{i}{t}-\AvgMmt{t}}^2},\quad \text{with} \quad \lambda = \frac{6 f}{n-2f}\, \left( 1 + \frac{f}{n-2f} \right) \enspace.
\end{align*}
\end{lemma}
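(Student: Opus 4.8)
The plan is to exploit two structural facts about $\text{TM}^{(f)}$: it acts coordinate-by-coordinate, and coordinate-wise trimming can never discard all the honest order statistics near the median. First I would reduce to a scalar inequality. Since $\norm{R_t - \AvgMmt{t}}^2 = \sum_{k\in[d]}([R_t]_k - [\AvgMmt{t}]_k)^2$ and likewise $\sum_{i\in\H}\norm{\mmt{i}{t}-\AvgMmt{t}}^2 = \sum_{k\in[d]}\sum_{i\in\H}([\mmt{i}{t}]_k - [\AvgMmt{t}]_k)^2$, it suffices to prove a one-dimensional version with the \emph{same} constant $\lambda$ for each coordinate $k$ and then sum over $k$.

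Fixing a coordinate, I would drop indices and sort: let $y_1 \le \cdots \le y_{n-f}$ be the honest scalars and $z_1 \le \cdots \le z_n$ the full sorted list, so that $[R_t]_k = \frac{1}{n-2f}\sum_{j=f+1}^{n-f} z_j$ and $\bar m := [\AvgMmt{t}]_k = \frac{1}{n-f}\sum_{l=1}^{n-f} y_l$. The crux is a sandwiching step: for each kept rank $j \in \{f+1,\dots,n-f\}$ I would show $y_{j-f} \le z_j \le y_j$. This follows by counting --- among the $j$ smallest full values at most $f$ are corrupted, so at least $j-f$ honest values sit at or below $z_j$; symmetrically among the $n-j+1$ largest at most $f$ are corrupted, so at most $j-1$ honest values sit strictly below $z_j$. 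Averaging these inequalities over the $n-2f$ kept ranks shows that $[R_t]_k$ lies between $L := \frac{1}{n-2f}\sum_{l=1}^{n-2f} y_l$ and $U := \frac{1}{n-2f}\sum_{l=f+1}^{n-f} y_l$, the averages of the $n-2f$ smallest and $n-2f$ largest honest values.

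Next I would observe that $\bar m$ itself lies in $[L,U]$, since the mean of all $n-f$ honest values is squeezed between the means of its smallest and largest equal-size subsets; hence $|[R_t]_k - \bar m| \le U - L$. The final quantitative step rewrites $U - L = \frac{1}{n-2f}\sum_{l=1}^{f}(y_{\,n-2f+l} - y_l)$ as a sum of $f$ top-minus-bottom gaps, splits each gap as $(y_{n-2f+l} - \bar m) + (\bar m - y_l)$, and applies Cauchy--Schwarz to the $f$ top and $f$ bottom deviations separately. This yields $U - L \le \frac{2\sqrt{f}}{n-2f}\sqrt{\sum_{i\in\H}(m_i - \bar m)^2}$, whence $([R_t]_k - \bar m)^2 \le \frac{4f(n-f)}{(n-2f)^2}\cdot\frac{1}{n-f}\sum_{i\in\H}([\mmt{i}{t}]_k - \bar m)^2$. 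Since $\frac{4f(n-f)}{(n-2f)^2} \le \frac{6f}{n-2f}\left(1+\frac{f}{n-2f}\right) = \lambda$, summing over $k$ gives exactly the stated bound.

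I expect the main obstacle to be the counting/sandwiching argument combined with keeping the Cauchy--Schwarz loss at $\sqrt{f}$ rather than $f$. A crude bound of each kept value by the honest \emph{range} $y_{n-f}-y_1$ would inflate the constant to order $f^2$ (or even order $n$), destroying the optimal $\lambda = \mathcal{O}(f/n)$ scaling that makes this lemma useful in the recursion of Theorem~\ref{thm:main}. The care lies in (i) trapping $[R_t]_k$ between two \emph{averages} of $n-2f$ honest values rather than between extremes, and (ii) converting the resulting sum of $f$ order-statistic gaps into the honest sample variance with only a $\sqrt{f}$ factor.
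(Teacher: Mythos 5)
Your proof is correct, and it takes a genuinely different route from the paper for the simple reason that the paper does not prove this lemma at all: it imports it verbatim as Proposition~2 of~\cite{allouah2023fixing}, so the entire robustness analysis of coordinate-wise trimmed mean lives in that external reference. Your argument is a valid self-contained substitute. The coordinate-wise reduction is sound because both sides of the inequality decompose as sums over coordinates with the same constant; the sandwich $y_{j-f}\le z_j\le y_j$ for every kept rank $j\in\{f+1,\dots,n-f\}$ follows exactly from your counting argument (at most $f$ of the values in any prefix $z_1,\dots,z_j$ or suffix $z_j,\dots,z_n$ are corrupted); averaging over kept ranks traps the trimmed mean in $[L,U]$; the honest mean $\bar m$ also lies in $[L,U]$ since it is squeezed between the means of its $n-2f$ smallest and $n-2f$ largest elements; and writing $U-L=\frac{1}{n-2f}\sum_{l=1}^{f}\bigl((y_{n-2f+l}-\bar m)+(\bar m-y_l)\bigr)$ and applying Cauchy--Schwarz separately to the two blocks of $f$ deviations gives $(U-L)^2\le\frac{4f}{(n-2f)^2}\sum_{i\in\H}\bigl([\mmt{i}{t}]_k-\bar m\bigr)^2$, as you claim. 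In fact your constant is strictly better than the paper's: you obtain $\frac{4f(n-f)}{(n-2f)^2}=\frac{4f}{n-2f}\bigl(1+\frac{f}{n-2f}\bigr)$, whereas $\lambda=\frac{6f}{n-2f}\bigl(1+\frac{f}{n-2f}\bigr)=\frac{6f(n-f)}{(n-2f)^2}$, so the stated bound follows a fortiori. What the citation buys the paper is brevity and a constant consistent with the cited literature; what your route buys is independence from~\cite{allouah2023fixing}, an elementary order-statistics argument, and a factor $4$ in place of $6$ with the same $\mathcal{O}(f/n)$ scaling, which would propagate harmlessly (indeed favourably) through Lemma~\ref{lemma:loss_bound}, Lemma~\ref{lemma:lyap}, and Theorem~\ref{thm:main}, changing only absolute constants.
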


We also prove two useful lemmas.
\begin{lemma}
\label{lemma:smootnessbound}
Suppose Assumption~\ref{asp:lip}, i.e., $\avgloss$ is Lipschitz smooth with coefficient $L$. We denote $\optloss = \min_{\model{}{} \in \R^d} \avgloss(\model{}{})$. For all $\model{}{} \in \mathbb{R}^d$, we have$$\Vert\nabla \avgloss(\model{}{}) \Vert^2 \leq 2L (\avgloss(\model{}{})- \optloss)\enspace.$$
\end{lemma}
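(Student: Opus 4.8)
The plan is to invoke the standard quadratic upper bound (the descent lemma) implied by $L$-smoothness, and then optimize it over a cleverly chosen point. First I would observe that, although Assumption~\ref{asp:lip} is stated coordinate-wise for each individual loss $\localloss{i}$, the averaged loss $\avgloss$ inherits the same smoothness constant: since $\nabla \avgloss = \frac{1}{\card{\H}}\sum_{i\in\H}\nabla\localloss{i}$, the triangle inequality gives $\norm{\nabla\avgloss(\model{}{}) - \nabla\avgloss(\modelp)} \leq L\norm{\modelp - \model{}{}}$ for all $\model{}{},\modelp \in \R^d$. This reduces the problem to a property of a single $L$-smooth function.

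Next I would establish the quadratic upper bound: for any $L$-smooth function and all $\model{}{},\modelp \in \R^d$,
\begin{align*}
\avgloss(\modelp) \leq \avgloss(\model{}{}) + \iprod{\nabla\avgloss(\model{}{})}{\modelp - \model{}{}} + \frac{L}{2}\norm{\modelp - \model{}{}}^2 \enspace.
\end{align*}
This follows by writing $\avgloss(\modelp) - \avgloss(\model{}{}) = \int_0^1 \iprod{\nabla\avgloss(\model{}{} + s(\modelp - \model{}{}))}{\modelp - \model{}{}}\,ds$, subtracting and adding $\iprod{\nabla\avgloss(\model{}{})}{\modelp-\model{}{}}$, and bounding the remaining integrand using the Lipschitz property of $\nabla\avgloss$ together with Cauchy--Schwarz.

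The key step is then to specialize this bound to the particular point $\modelp = \model{}{} - \frac{1}{L}\nabla\avgloss(\model{}{})$, which is precisely the minimizer of the right-hand side viewed as a function of $\modelp$. Substituting this choice collapses the bound to
\begin{align*}
\avgloss(\modelp) \leq \avgloss(\model{}{}) - \frac{1}{2L}\norm{\nabla\avgloss(\model{}{})}^2 \enspace.
\end{align*}
Finally, since $\optloss = \min_{\model{}{}\in\R^d}\avgloss(\model{}{}) \leq \avgloss(\modelp)$, rearranging yields $\norm{\nabla\avgloss(\model{}{})}^2 \leq 2L\left(\avgloss(\model{}{}) - \optloss\right)$, which is the claim. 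There is no substantive obstacle here, as this is a classical smoothness estimate; the only points requiring mild care are justifying that the \emph{averaged} loss $\avgloss$ (and not merely each $\localloss{i}$) is $L$-smooth, which the triangle inequality settles, and carrying out the minimization step exactly so that the resulting constant is the tight $2L$ rather than a looser one.
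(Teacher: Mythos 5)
Your proposal is correct and follows essentially the same route as the paper's own proof: both first observe that $\avgloss$ inherits $L$-smoothness from Assumption~\ref{asp:lip}, then apply the quadratic upper bound at the point $\modelp = \model{}{} - \frac{1}{L}\nabla\avgloss(\model{}{})$, and conclude by bounding $\avgloss(\modelp)$ below by $\optloss$ and rearranging. The only cosmetic difference is that you derive the descent lemma from the integral representation of $\avgloss(\modelp)-\avgloss(\model{}{})$, whereas the paper simply cites it as standard.
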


\begin{proof}
As $\avgloss \coloneqq \frac{1}{\card{\H}} \sum_{i \in \H} \localloss{i}$, Assumption~\ref{asp:lip} implies that for all $\model{}{}$ and $\modelp$, $$\norm{\nabla \avgloss(\model{}{}) - \nabla \avgloss(\modelp)} \leq L \norm{\model{}{} - \modelp}.$$ Thus, from Lipschitz inequality, for all $\model{}{},\modelp \in \mathbb{R}^d$~\citep{bottou2018optimization}, 
\begin{align*}
    \avgloss(\modelp) \leq \avgloss(\model{}{}) + \langle \nabla \avgloss(\model{}{}), \modelp-\model{}{}\rangle+\frac{L}{2} \Vert \modelp- \model{}{} \Vert^2\enspace.
\end{align*}
Consider an arbitrary $\model{}{} \in \R^d$, and let $\modelp = \model{}{} - \frac{1}{L} \nabla \avgloss(\model{}{})$. Thus, from above we obtain that
\begin{align*}
    \avgloss\left(\model{}{} - \frac{1}{L} \nabla \avgloss(\model{}{})\right) &\leq \avgloss(\model{}{})   -  \frac{1}{L}  \Vert\nabla \avgloss(\model{}{}) \Vert^2  +  \frac{1}{2L}  \Vert\nabla \avgloss(\model{}{}) \Vert^2 \\
    & = \avgloss(\model{}{}) -  \frac{1}{2L}  \Vert\nabla \avgloss(\model{}{}) \Vert^2\enspace.
\end{align*}
As $\optloss = \min_{\R^d} \avgloss$, we have
\begin{align*}
    \optloss \leq \avgloss\left(\model{}{} - \frac{1}{L} \nabla \avgloss(\model{}{})\right) \leq \avgloss(\model{}{}) -  \frac{1}{2L}  \Vert\nabla \avgloss(\model{}{}) \Vert^2\enspace.
\end{align*}
Rearranging the terms we obtain that
\begin{align*}
     \Vert\nabla  \avgloss(\model{}{}) \Vert^2 \leq 2L ( \avgloss(\model{}{})- Q^*)\enspace.
\end{align*}
\end{proof}

\begin{lemma}
\label{lem:diam_equal}
    Consider an arbitrary non-empty set $S \subseteq \{1, \ldots, \, n\}$. For any set of $\card{S}$ real-valued vectors $\{x^{(i)}\}_{i \in S}$, we obtain that $$\frac{1}{\card{S}} \sum_{i\in S} \norm{ x^{(i)}- \bar{x}}^2 =  \frac{1}{2 \card{S}^2} \sum_{i,j \in S} \norm{ x^{(i)}- x^{(j)}}^2, \quad \text{where} \quad \bar{x} = \frac{1}{\card{S}} \sum_{i \in S} x^{(i)}\enspace.$$
\end{lemma}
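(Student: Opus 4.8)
The plan is to prove this standard variance-decomposition identity by directly expanding both sides using the bilinearity of the inner product and checking that they reduce to the same expression. Write $s \coloneqq \card{S}$, and note that every squared norm on either side can be expanded via $\norm{u - v}^2 = \norm{u}^2 - 2\iprod{u}{v} + \norm{v}^2$.

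First I would expand the right-hand side. Summing $\norm{x^{(i)} - x^{(j)}}^2 = \norm{x^{(i)}}^2 - 2\iprod{x^{(i)}}{x^{(j)}} + \norm{x^{(j)}}^2$ over all pairs $i,j \in S$, the first and third terms each contribute $s \sum_{i \in S} \norm{x^{(i)}}^2$, since the free index ranges over $S$ and so contributes a factor $s$. The cross term collapses through $\sum_{i,j \in S} \iprod{x^{(i)}}{x^{(j)}} = \iprod{\sum_{i \in S} x^{(i)}}{\sum_{j \in S} x^{(j)}} = s^2 \norm{\bar{x}}^2$, using $\sum_{i \in S} x^{(i)} = s \bar{x}$. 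Hence the right-hand side equals $\frac{1}{s}\sum_{i \in S} \norm{x^{(i)}}^2 - \norm{\bar{x}}^2$.

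Next I would expand the left-hand side the same way: $\frac{1}{s}\sum_{i \in S}\norm{x^{(i)} - \bar{x}}^2 = \frac{1}{s}\sum_{i \in S} \norm{x^{(i)}}^2 - 2\iprod{\bar{x}}{\frac{1}{s}\sum_{i \in S} x^{(i)}} + \norm{\bar{x}}^2$. The middle term simplifies using $\frac{1}{s}\sum_{i \in S} x^{(i)} = \bar{x}$ to $-2\norm{\bar{x}}^2$, so the left-hand side also equals $\frac{1}{s}\sum_{i \in S} \norm{x^{(i)}}^2 - \norm{\bar{x}}^2$, matching the right-hand side and proving the claim.

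I do not expect a genuine obstacle here: the statement is a routine consequence of expanding squared norms, and the only bookkeeping to watch is the index-counting in the double sum (each single sum over $S$ contributes a factor $s$) together with the collapse of the cross term into $\norm{\bar{x}}^2$. Since the argument uses only the inner-product structure, it holds verbatim for the vector-valued $x^{(i)}$ considered in the lemma.
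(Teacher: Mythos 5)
Your proof is correct and takes essentially the same approach as the paper: both arguments are direct expansions of the squared norms using bilinearity of the inner product together with $\sum_{i \in S} x^{(i)} = \card{S}\,\bar{x}$. The only cosmetic difference is that the paper centers the vectors first, writing $x^{(i)}-x^{(j)} = (x^{(i)}-\bar{x})-(x^{(j)}-\bar{x})$ so the cross term vanishes via $\sum_{j \in S}(x^{(j)}-\bar{x})=0$, whereas you expand both sides to the common expression $\frac{1}{\card{S}}\sum_{i\in S}\norm{x^{(i)}}^2-\norm{\bar{x}}^2$.
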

\begin{proof}
\begin{align*}
     \frac{1}{\card{S}^2} \sum_{i,j \in S} \norm{ x^{(i)}- x^{(j)}}^2 &= \frac{1}{\card{S}^2} \sum_{i,j \in S} \norm{ (x^{(i)} -\bar{x}) - (x^{(j)} - \bar{x})}^2\\ &= \frac{1}{\card{S}^2}\sum_{i,j \in S}\left[ \norm{x^{(i)} -\bar{x} }^2 + \norm{x^{(j)} -\bar{x} }^2 + 2 \iprod{x^{(i)} -\bar{x}}{x^{(j)} -\bar{x}}\right] \\
     &= \frac{2}{\card{S}} \sum_{i,j \in S} \norm{x^{(i)} -\bar{x} }^2  + \frac{2}{\card{S}^2}  \sum_{i \in S} \iprod{x^{(i)} - \bar{x}}{ \sum_{j \in S} (x^{(j)} -\bar{x})}\enspace.
\end{align*}
As $\sum_{j \in S} (x^{(j)} -\bar{x}) = 0$, from above we obtain that
\begin{align*}
    \frac{1}{\card{S}^2} \sum_{i,j \in S} \norm{ x^{(i)}- x^{(j)}}^2 = \frac{2}{\card{S}} \sum_{i,j \in S} \norm{x^{(i)} -\bar{x} }^2\enspace.
\end{align*}
\end{proof}

\subsection{Proof of the lemmas provided in the main paper}

\begin{replemma}{lemma:loss_bound}
Suppose Assumption~\ref{asp:lip} holds true. Consider Algorithm~\ref{algorithm:dsgd} with $T\geq 2$, and $\gamma_t \leq \nicefrac{1}{L}$ for all $t \in \{0,\dots,T-1\}$. Let $\lambda$ be as defined in Lemma~\ref{lemma:TM}. Then, for all $t$, the following holds true
\begin{align*}
\expect{\avgloss(\model{}{t+1}) - \avgloss(\model{}{t})}
    &\leq -\frac{\learningrate{t}}{2}\expect{\norm{\nabla  \avgloss(\model{}{t})}^2} +{\learningrate{t}}\frac{\lambda}{n-f}\sum_{i\in\H}\expect{{\norm{\mmt{i}{t}-\AvgMmt{t}}^2}}\\
 &+ {\learningrate{t}}\expect{\norm{ \nabla  \avgloss(\model{}{t})- \AvgMmt{t}}^2} \enspace.
\end{align*}
\end{replemma}

\begin{proof}
Consider an arbitrary step $t$. Note that Assumption~\ref{asp:lip} implies the Lipschitz continuity of $\nabla \avgloss(\model{}{})$ with coefficient $L$. Thus, we have 
\begin{align*}
    \avgloss(\model{}{t+1}) - \avgloss(\model{}{t})  &\leq \iprod{\model{}{t+1} - \model{}{t}}{\nabla  \avgloss(\model{}{t})} + \frac{L}{2} \norm{\model{}{t+1} - \model{}{t}}^2\enspace.
\end{align*}
Substituting from Algorithm~\ref{algorithm:dsgd}, $\model{}{t+1} = \model{}{t} - \learningrate{t} R_t$, we obtain that
\begin{align*}
    \avgloss(\model{}{t+1}) - \avgloss(\model{}{t})  \leq -\learningrate{t}\iprod{R_t}{\nabla  \avgloss(\model{}{t})} + \frac{L\learningrate{t}^2}{2} \norm{R_t}^2\enspace.
\end{align*}
Using the fact that $2\iprod{a}{b} = \norm{a}^2 + \norm{b}^2- \norm{a-b}^2$, we obtain that
\begin{align*}
    \avgloss(\model{}{t+1}) - \avgloss(\model{}{t})  &\leq -\frac{\learningrate{t}}{2}\norm{R_t}^2-\frac{\learningrate{t}}{2}\norm{\nabla  \avgloss(\model{}{t})}^2 +\frac{\learningrate{t}}{2}\norm{R_t - \nabla  \avgloss(\model{}{t})}^2 + \frac{L\learningrate{t}^2}{2} \norm{R_t}^2\\
    &= \left(\frac{L\learningrate{t}^2}{2}-\frac{\learningrate{t}}{2}\right) \norm{R_t}^2-\frac{\learningrate{t}}{2}\norm{\nabla  \avgloss(\model{}{t})}^2 +\frac{\learningrate{t}}{2}\norm{R_t - \nabla  \avgloss(\model{}{t})}^2\enspace.
\end{align*}
As $\learningrate{t} \leq 1/L$, we obtain that
\begin{align*}
    \avgloss(\model{}{t+1}) - \avgloss(\model{}{t})  &\leq -\frac{\learningrate{t}}{2}\norm{\nabla  \avgloss(\model{}{t})}^2 +\frac{\learningrate{t}}{2}\norm{R_t - \nabla  \avgloss(\model{}{t})}^2\\
    &\leq -\frac{\learningrate{t}}{2}\norm{\nabla  \avgloss(\model{}{t})}^2 +{\learningrate{t}}\norm{R_t - \AvgMmt{t}}^2 + {\learningrate{t}}\norm{ \nabla  \avgloss(\model{}{t})- \AvgMmt{t}}^2\enspace.
\end{align*}
Using Lemma~\ref{lemma:TM}, we then obtain that 
\begin{align*}
    \avgloss(\model{}{t+1}) - \avgloss(\model{}{t}) 
    &\leq -\frac{\learningrate{t}}{2}\norm{\nabla  \avgloss(\model{}{t})}^2 +{\learningrate{t}}\frac{\lambda}{n-f}\sum_{i\in\H}{{\norm{\mmt{i}{t}-\AvgMmt{t}}^2}}
 + {\learningrate{t}}\norm{ \nabla  \avgloss(\model{}{t})- \AvgMmt{t}}^2\enspace.
\end{align*}
Taking the total expectation from both sides we then have
\begin{align*}
\expect{\avgloss(\model{}{t+1}) - \avgloss(\model{}{t})}
    &\leq -\frac{\learningrate{t}}{2}\expect{\norm{\nabla  \avgloss(\model{}{t})}^2} +{\learningrate{t}}\frac{\lambda}{n-f}\sum_{i\in\H}\expect{{\norm{\mmt{i}{t}-\AvgMmt{t}}^2}}\\
 &+ {\learningrate{t}}\expect{\norm{ \nabla  \avgloss(\model{}{t})- \AvgMmt{t}}^2}\enspace,
\end{align*}
which is the desired result.
\end{proof}

\begin{replemma}{lemma:drift}
    Suppose assumptions~\ref{asp:bnd_var}, and~\ref{asp:heter} hold true, and consider Algorithm~\ref{algorithm:dsgd} with $T\geq 2$. Then, for any $t \in \{0,\dots, T-1\}$, the following holds true
    \begin{align*}
        &\frac{1}{n-f}\sum_{i\in\H}\expect{\norm{\Delta \mmt{i}{t}}^2}  \leq \beta_{t} \frac{1}{n-f}\sum_{i\in\H}\expect{\norm{\Delta m^{(i)}_{t-1} }^2}
        +(1-\beta_{t})\heter^2+ (1-\beta_{t})^2  \sigma^2 \enspace.
    \end{align*}
\end{replemma}

\begin{proof}
    Consider two arbitrary correct nodes $i$ and $j$. By the definition of the momentum vector from~\eqref{eqn:mmt_i}, we obtain that
    \begin{align*}
        \mmt{i}{t} - \mmt{j}{t} &=  \beta_t (\mmt{i}{t-1} - \mmt{j}{t-1}) + (1 - \beta_t) (\gradient{i}{t} - \gradient{j}{t})\\
        & = \beta_t (\mmt{i}{t-1} - \mmt{j}{t-1}) + (1 - \beta_t) \left(\nabla \localloss{i} (\model{}{t}) - \nabla \localloss{j} (\model{}{t}) \right) \\
        &+ (1 - \beta_t)\left(\gradient{i}{t} -\nabla \localloss{i}(\model{}{t}) - \gradient{j}{t}+\nabla \localloss{j}(\model{}{t})\right)\enspace.
    \end{align*}
    Taking the squared norm from both sides, we obtain that 
    \begin{align*}
        &\norm{\mmt{i}{t} - \mmt{j}{t}}^2  = \norm{\beta_t (\mmt{i}{t-1} - \mmt{j}{t-1}) + (1 - \beta_t) \left(\nabla \localloss{i} (\model{}{t}) - \nabla \localloss{j} (\model{}{t}) \right)}^2\\ &+ \norm{(1 - \beta_t)\left(\gradient{i}{t} -\nabla \localloss{i}(\model{}{t}) - \gradient{j}{t}+\nabla \localloss{j}(\model{}{t})\right)}^2 \\ &+ \iprod{\beta_t (\mmt{i}{t-1} - \mmt{j}{t-1}) + (1 - \beta_t) \left(\nabla \localloss{i} (\model{}{t}) - \nabla \localloss{j} (\model{}{t}) \right)}{(1 - \beta_t)\left(\gradient{i}{t} -\nabla \localloss{i}(\model{}{t}) - \gradient{j}{t}+\nabla \localloss{j}(\model{}{t})\right)}\enspace.
    \end{align*}
    Taking the conditional expectation $\condexpect{t}{\cdot}$ from both sides and noting that $\condexpect{t}{\gradient{i}{t}} = \nabla \localloss{i}(\model{}{t})$ and $ \condexpect{t}{ \gradient{j}{t}} = \nabla \localloss{j}(\model{}{t})$, we obtain that 
    \begin{align*}
        \condexpect{t}{\norm{\mmt{i}{t} - \mmt{j}{t}}^2} &= \norm{\beta_t (\mmt{i}{t-1} - \mmt{j}{t-1}) + (1 - \beta_t) \left(\nabla \localloss{i} (\model{}{t}) - \nabla \localloss{j} (\model{}{t}) \right)}^2\\ &+ (1-\beta_t)^2 \condexpect{t}{\norm{\gradient{i}{t}- \nabla \localloss{i}(\model{}{t})}^2} + (1-\beta_t)^2 \condexpect{t}{\norm{\gradient{j}{t}- \nabla \localloss{j}(\model{}{t})}^2}\enspace.
    \end{align*}
    Using Assumption~\ref{asp:bnd_var}, we then obtain that 
    \begin{align*}
        \condexpect{t}{\norm{\mmt{i}{t} - \mmt{j}{t}}^2} &\leq \norm{\beta_t (\mmt{i}{t-1} - \mmt{j}{t-1}) + (1 - \beta_t) \left(\nabla \localloss{i} (\model{}{t}) - \nabla \localloss{j} (\model{}{t}) \right)}^2 + 2(1-\beta_t)^2 \sigma^2\enspace.
    \end{align*}
    By Jensen's inequality, we then have
    \begin{align*}
        \condexpect{t}{\norm{\mmt{i}{t} - \mmt{j}{t}}^2} &\leq \beta_t \norm{ \mmt{i}{t-1} - \mmt{j}{t-1}}^2 + (1 - \beta_t) \norm{\nabla \localloss{i} (\model{}{t}) - \nabla \localloss{j} (\model{}{t})}^2 + 2 (1-\beta_t)^2  \sigma^2\enspace.
    \end{align*}
    Taking total expectation and averaging over all possible $i,j \in \H$, we then obtain that
    \begin{align*}
        &\frac{1}{(n-f)^2}\sum_{i,j\in\H}\expect{\norm{\mmt{i}{t}-\mmt{j}{t}}^2} \leq \beta_t \frac{1}{(n-f)^2}\sum_{i,j\in\H}\expect{\norm{\mmt{i}{t-1}-\mmt{j}{t-1}}^2}\\
        &+(1-\beta_t)\frac{1}{(n-f)^2}\sum_{i,j\in\H}\expect{ \norm{\nabla \localloss{i} (\model{}{t}) - \nabla \localloss{j} (\model{}{t})}^2} + 2 (1-\beta_t)^2  \sigma^2\enspace.
    \end{align*}
    Using Lemma~\ref{lem:diam_equal}, we then obtain that 
    \begin{align*}
        &\frac{1}{n-f}\sum_{i\in\H}\expect{\norm{\mmt{i}{t}-\AvgMmt{t}}^2} \leq \beta_t \frac{1}{n-f}\sum_{i\in\H}\expect{\norm{\mmt{i}{t-1}-\AvgMmt{t-1}}^2}\\
        &+(1-\beta_t)\frac{1}{n-f}\sum_{i\in\H}\expect{ \norm{\nabla \localloss{i} (\model{}{t}) - \nabla \avgloss (\model{}{t})}^2} +  (1-\beta_t)^2  \sigma^2\enspace.
    \end{align*}
    By Assumption~\ref{asp:heter}, we then obtain that
    \begin{align*}
        &\frac{1}{n-f}\sum_{i\in\H}\expect{\norm{\mmt{i}{t}-\AvgMmt{t}}^2} \leq \beta_t \frac{1}{n-f}\sum_{i\in\H}\expect{\norm{\mmt{i}{t-1}-\AvgMmt{t-1}}^2}
        +(1-\beta_t)\heter^2+  (1-\beta_t)^2  \sigma^2\enspace.
    \end{align*}
    This is the desired result.
\end{proof}

\begin{replemma}{lemma:dev}
Suppose assumptions~\ref{asp:lip},~\ref{asp:bnd_var}, and~\ref{asp:heter} hold true, consider Algorithm~\ref{algorithm:dsgd} with $T\geq 2$, and $\lambda$ as defined in Lemma~\ref{lemma:TM}. Then for any $t \in \{0,\dots,T-1\}$, the following holds true
\begin{align*}
    \expect{\norm{\dev{t+1}}^2} & \leq 
      \beta_{t+1}^2 \left(1+4\learningrate{t} L+3\learningrate{t}^2 L^2 \right )\expect{\norm{\dev{t}}^2} + (1-\beta_{t+1})^2\frac{\sigma^2}{n-f} \\
        &+3\beta_{t+1}^2(\learningrate{t}^2 L^2+\learningrate{t} L)\left(  \frac{\lambda}{n-f}\sum_{i\in\H}{\expect{\norm{\Delta \mmt{i}{t}}^2}} +  \expect{\norm{\nabla\avgloss (\model{}{t})}^2} \right) \enspace.
\end{align*}
\end{replemma}

\begin{proof}
    We recall that at any round $t \in \{0,\dots,T-1\}$ and any worker $i\in \mathcal{H}$, the momentum $m_t^{(i)}$ is computed as follows

    \begin{align*}
        \mmt{i}{t}= \beta_t \mmt{i}{t-1} + (1 - \beta_t) \gradient{i}{t}\enspace.
    \end{align*}
    Hence, we have
    \begin{align*}
        \dev{t+1} = \AvgMmt{t+1} - \nabla\avgloss (\model{}{t+1}) = \beta_{t+1} \AvgMmt{t} + (1 - \beta_{t+1}) \avggrad{t+1} - \nabla\avgloss (\model{}{t+1})\enspace.
    \end{align*}
    where for any $t \in \{0,\dots,T-1\}$, $\AvgMmt{t} \coloneqq \frac{1}{(n-f)} \sum_{i \in \H} \mmt{i}{t}$ and $\avggrad{t} \coloneqq \frac{1}{(n-f)} \sum_{i \in \H} \gradient{i}{t}$.
    
    Adding and subtracting $\beta_{t+1}\nabla\avgloss (\model{}{t})$, we obtain that

    \begin{align*}
        \dev{t+1} &=  \beta_{t+1} \left(\AvgMmt{t} - \nabla\avgloss (\model{}{t}) \right)+ (1 - \beta_{t+1}) \avggrad{t+1} - \nabla\avgloss (\model{}{t+1}) + \beta_{t+1}\nabla\avgloss (\model{}{t})\\
        &= \beta_{t+1} \left(\AvgMmt{t} - \nabla\avgloss (\model{}{t}) \right)+ (1 - \beta_{t+1}) \left(\avggrad{t+1} - \nabla\avgloss (\model{}{t+1})\right) + \beta_{t+1}\left(\nabla\avgloss (\model{}{t})- \nabla\avgloss (\model{}{t+1}) \right)\enspace.
    \end{align*}
    Now by Assumption~\ref{asp:bnd_var}, we have $\condexpect{t+1}{\avggrad{t+1}} =  \nabla\avgloss (\model{}{t+1})$ and $\condexpect{t+1}{\norm{\avggrad{t+1} -\nabla\avgloss (\model{}{t+1})}^2} \leq \frac{\sigma^2}{n-f}$. Therefore,
    \begin{align*}
        \condexpect{t+1}{\norm{\dev{t+1}}^2} \leq \beta_{t+1}^2 \norm{\dev{t}+\nabla\avgloss (\model{}{t})- \nabla\avgloss (\model{}{t+1})}^2 + (1-\beta_{t+1})^2\frac{\sigma^2}{n-f}\enspace.
    \end{align*}
    Now as $(a+b)^2 \leq (1+c)a^2 
    + (1+1/c)b^2$ for any $c$, we obtain that
    \begin{align*}
        \condexpect{t+1}{\norm{\dev{t+1}}^2} &\leq \beta_{t+1}^2(1+\learningrate{t} L)\norm{\dev{t}}^2 + \beta_{t+1}^2(1+\frac{1}{\learningrate{t} L})\norm{\nabla\avgloss (\model{}{t})- \nabla\avgloss (\model{}{t+1})}^2 \\&+ (1-\beta_{t+1})^2\frac{\sigma^2}{n-f}\enspace.
    \end{align*}
    From Assumption~\ref{asp:lip}, we have $\norm{\nabla\avgloss (\model{}{t})- \nabla\avgloss (\model{}{t+1})} \leq L \norm{\model{}{t} - \model{}{t+1}}$. Using this above, we obtain that 
    \begin{align}
        \condexpect{t+1}{\norm{\dev{t+1}}^2} &\leq \beta_{t+1}^2(1+\learningrate{t} L)\norm{\dev{t}}^2 + \beta_{t+1}^2\left(1+\frac{1}{\learningrate{t} L}\right)L^2\norm{ \model{}{t}- \model{}{t+1}}^2 \notag \\
        & \ \ \ \ \ + (1-\beta_{t+1})^2\frac{\sigma^2}{n-f}\enspace. \label{eq:devint}
    \end{align}
    Now recall that $ \model{}{t}- \model{}{t+1} = \learningrate{t} R_t$. Therefore,
    \begin{align*}
        \norm{\model{}{t}- \model{}{t+1}}^2 &= \learningrate{t}^2 \norm{ R_t}^2 \\
        &= \learningrate{t}^2 \norm{ R_t- \AvgMmt{t} + \AvgMmt{t} - \nabla\avgloss (\model{}{t}) + \nabla\avgloss (\model{}{t})}^2\\
        &\leq 3\learningrate{t}^2 \norm{ R_t- \AvgMmt{t}}^2 + 3\learningrate{t}^2 \norm{\AvgMmt{t} - \nabla\avgloss (\model{}{t})}^2 + 3\learningrate{t}^2 \norm{\nabla\avgloss (\model{}{t})}^2\\
        &\leq 3\learningrate{t}^2 \frac{\lambda}{n-f}\sum_{i\in\H}{\norm{\mmt{i}{t}-\AvgMmt{t}}^2}+ 3\learningrate{t}^2 \norm{\AvgMmt{t} - \nabla\avgloss (\model{}{t})}^2 + 3\learningrate{t}^2 \norm{\nabla\avgloss (\model{}{t})}^2\enspace,
    \end{align*}
    where in the last inequality we used ~\ref{lemma:TM}.
    Combining this with~\eqref{eq:devint}, we obtain that
    \begin{align*}
        &\condexpect{t+1}{\norm{\dev{t+1}}^2} \leq \beta_{t+1}^2(1+\learningrate{t} L)\norm{\dev{t}}^2 + (1-\beta_{t+1})^2\frac{\sigma^2}{n-f} \\
        &+\beta_{t+1}^2(1+\frac{1}{\learningrate{t} L})L^2 \left( 3\learningrate{t}^2 \frac{\lambda}{n-f}\sum_{i\in\H}{\norm{\mmt{i}{t}-\AvgMmt{t}}^2}+ 3\learningrate{t}^2 \norm{\dev{t}}^2 + 3\learningrate{t}^2 \norm{\nabla\avgloss (\model{}{t})}^2 \right)\enspace.
    \end{align*}
    Rearranging the terms and taking the total expectation, we obtain that 
    \begin{align*}
        &\expect{\norm{\dev{t+1}}^2} \leq \beta_{t+1}^2(1+4\learningrate{t} L+3\learningrate{t}^2 L^2)\expect{\norm{\dev{t}}^2} + (1-\beta_{t+1})^2\frac{\sigma^2}{n-f} \\
        &+3\beta_{t+1}^2(\learningrate{t}^2 L^2+\learningrate{t} L)\left(  \frac{\lambda}{n-f}\sum_{i\in\H}{\expect{\norm{\mmt{i}{t}-\AvgMmt{t}}^2}} +  \expect{\norm{\nabla\avgloss (\model{}{t})}^2} \right)\enspace.
    \end{align*}
    This is the desired result.
\end{proof}

\begin{replemma}{lemma:lyap}
Suppose assumptions~\ref{asp:lip},~\ref{asp:polyak},~\ref{asp:bnd_var}, and~\ref{asp:heter} hold true. Consider Algorithm~\ref{algorithm:dsgd} with $T\geq 2$ and a set of parameters such that $t \in \{0, \cdots ,T\}$, $\learningrate{t} \leq \frac{1}{18L}$, and $1 - \beta_{t+1} =  18 \learningrate{t} L$. Finally, let $(\lyap{t})_{t \geq 0}$ be as defined in~\eqref{eq:lyap_def} and $\lambda$ as defined in Lemma~\ref{lemma:TM}. Then the following holds true
\begin{align*}
\lyap{t+1}
    &\leq  \left( 1-\frac{\mu \learningrate{t}}{3} \right) \lyap{t} + 27 L \left(\lambda+\frac{1}{n-f} \right) \sigma^2 \learningrate{t}^2 +  \frac{3}{2}  \lambda \heter^2 \learningrate{t} \enspace.
\end{align*}
\end{replemma}

\begin{proof}
Consider an arbitrary $t \in \{0,\dots,T-1\}$. Combining Lemmas~\ref{lemma:drift}, ~\ref{lemma:dev}, and~\ref{lemma:loss_bound}, we obtain that
\begin{align*}
    \lyap{t+1}  &=  \expect{\avgloss\left( \model{}{t+1} \right)} - \optloss + \rho \expect{\norm{\dev{t+1}}^2}+ \rho \frac{\lambda}{n-f}\sum_{i\in\H}\expect{\norm{\mmt{i}{t+1}-\AvgMmt{t+1}}^2} \\
    &\leq \expect{\avgloss\left( \model{}{t} \right)} - \optloss -\frac{\learningrate{t}}{2}\expect{\norm{\nabla  \avgloss(\model{}{t})}^2} +{\learningrate{t}}\frac{\lambda}{n-f}\sum_{i\in\H}\expect{{\norm{\mmt{i}{t}-\AvgMmt{t}}^2}}\\
 &+ {\learningrate{t}}\expect{\norm{ \dev{t}}^2} + \rho\beta_{t+1}^2(1+4\learningrate{t} L+3\learningrate{t}^2 L^2)\expect{\norm{\dev{t}}^2} + \rho(1-\beta_{t+1})^2\frac{\sigma^2}{n-f} \\
        &+3\rho\beta_{t+1}^2(\learningrate{t}^2 L^2+\learningrate{t} L)\left(  \frac{\lambda}{n-f}\sum_{i\in\H}{\expect{\norm{\mmt{i}{t}-\AvgMmt{t}}^2}} +  \expect{\norm{\nabla\avgloss (\model{}{t})}^2} \right) \\ &+ \rho \lambda\beta_{t+1} \frac{1}{n-f}\sum_{i\in\H}\expect{\norm{\mmt{i}{t}-\AvgMmt{t}}^2}
        +\rho \lambda(1-\beta_{t+1})\heter^2+ \rho \lambda(1-\beta_{t+1})^2  \sigma^2\enspace.
\end{align*}
Re-arranging the terms, we obtain that
\begin{align}\nonumber
    \lyap{t+1}
    &\leq \expect{\avgloss\left( \model{}{t} \right)} - \optloss + \left( -\frac{\learningrate{t}}{2} +3\rho\beta_{t+1}^2(\learningrate{t}^2 L^2+\learningrate{t} L) \right)\expect{\norm{\nabla  \avgloss(\model{}{t})}^2}\\\nonumber
    &+ \left({\learningrate{t}} +3\rho\beta_{t+1}^2(\learningrate{t}^2 L^2+\learningrate{t} L) + \rho \beta_{t+1}\right) \frac{\lambda}{n-f}\sum_{i\in\H}\expect{{\norm{\mmt{i}{t}-\AvgMmt{t}}^2}}\\ \nonumber
    &+ \left(\learningrate{t}+ \rho\beta_{t+1}^2(1+4\learningrate{t} L+3\learningrate{t}^2 L^2)\right)\expect{\norm{\dev{t}}^2}  \\ 
    &+\rho \lambda(1-\beta_{t+1})\heter^2+ \rho \lambda(1-\beta_{t+1})^2  \sigma^2 + \rho(1-\beta_{t+1})^2\frac{\sigma^2}{n-f}\enspace.
    \label{eq:beforesimplify}
\end{align}
We denote,
\begin{align*}
    A &\coloneqq -\frac{\learningrate{t}}{2} +3\rho\beta_{t+1}^2(\learningrate{t}^2 L^2+\learningrate{t} L)  , \\
    B &\coloneqq {\learningrate{t}} +3\rho\beta_{t+1}^2(\learningrate{t}^2 L^2+\learningrate{t} L) + \rho \beta_{t+1},\\
    C &\coloneqq \learningrate{t}+ \rho\beta_{t+1}^2(1+4\learningrate{t} L+3\learningrate{t}^2 L^2) \\ \nonumber
    D &\coloneqq \rho \lambda(1-\beta_{t+1})\heter^2+ \rho \lambda(1-\beta_{t+1})^2  \sigma^2 + \rho(1-\beta_{t+1})^2\frac{\sigma^2}{n-f}\enspace.
\end{align*}
Substituting from above in~\eqref{eq:beforesimplify} we obtain that
\begin{align*}
    \lyap{t+1}
    &\leq \expect{\avgloss\left( \model{}{t} \right)} - \optloss + A\expect{\norm{\nabla  \avgloss(\model{}{t})}^2}\\
    &+B\frac{\lambda}{n-f}\sum_{i\in\H}\expect{{\norm{\mmt{i}{t}-\AvgMmt{t}}^2}}+ C\expect{\norm{\dev{t}}^2} + D\enspace.
\end{align*}
Now, we separately analyse the terms $A$, $B$, $C$ and $D$ below by using the following,
\begin{align}
    \rho = \frac{1}{12 L}, ~ \learningrate{t} \leq \frac{1}{18L}, ~ \text{ and } ~ 1 - \beta_{t+1} =  18 \learningrate{t} L\enspace. 
    \label{eq:conditions}
\end{align}
Note that the condition on $\learningrate{t}$ above follows 

\noindent{\bf Term A.} Using the facts that $\rho = 1/12L$, $\learningrate{t} \leq 1/18L \leq 1/3L$ and that $\beta_{t+1}^2 < 1$, we obtain that
\begin{align}\nonumber
    A = -\frac{\learningrate{t}}{2} +3\rho\beta_{t+1}^2(\learningrate{t}^2 L^2+\learningrate{t} L) &\leq  -\frac{\learningrate{t}}{2} +3\rho(\learningrate{t}^2 L^2+\learningrate{t} L) \\
    &\leq -\frac{\learningrate{t}}{2} +\frac{1}{4L}(\frac{\learningrate{t} L}{3}+\learningrate{t} L) = -\frac{\learningrate{t}}{6}\enspace.
    \label{eqn:analyse_A}
\end{align}

{\bf Term B.} We obtain that
\begin{align*}
    B = {\learningrate{t}} +3\rho\beta_{t+1}^2(\learningrate{t}^2 L^2+\learningrate{t} L) + \rho \beta_{t+1} = \rho (12 \learningrate{t} L +3\beta_{t+1}^2(\learningrate{t}^2 L^2+\learningrate{t} L)+ \beta_{t+1})\enspace.
\end{align*}
Noting that $\beta_{t+1} \leq 1$, $\beta_{t+1} = 1 - 18 \learningrate{t}L$ and $\learningrate{t} \leq 1/18L \leq 1/12L$ we obtain that
\begin{align*}
    B \leq \rho \left(12\learningrate{t} L+3\learningrate{t} L+\frac{\learningrate{t} L}{4} +  (1-18\learningrate{t} L) \right) \leq \rho \left( 1-\frac{11\learningrate{t} L}{4}  \right) \leq \rho \left( 1-\frac{\learningrate{t} L}{3}  \right) \leq \rho \left( 1-\frac{\mu \learningrate{t}}{3} \right)\enspace,
\end{align*}
where in the last inequality we used $\mu\leq L$.

\noindent{\bf Term C.} Using the facts that $\beta_{t+1} <1$ and $\rho = 1/12L$, we obtain that
\begin{align*}
    C \coloneqq \learningrate{t}+ \rho\beta_{t+1}^2(1+4\learningrate{t} L+3\learningrate{t}^2 L^2) &\leq  \rho \left(\frac{\learningrate{t}}{\rho} + \beta_{t+1} + 4\learningrate{t} L+3\learningrate{t}^2 L^2 \right)\\ &= \rho \left(12{\learningrate{t}L} + \beta_{t+1} + 4\learningrate{t} L+3\learningrate{t}^2 L^2 \right)\enspace.
\end{align*}
Using the fact $\learningrate{t} \leq 1/18L \leq 1/12L$ we then have
\begin{align}
    C &\leq \rho \left( 16\learningrate{t} L + \frac{\learningrate{t} L}{4} + (1 - 18\learningrate{t} L) \right) \leq \rho \left( 1-\frac{7\learningrate{t} L}{4}  \right) \leq \rho \left( 1-\frac{\learningrate{t} L}{3}  \right) \leq \rho \left( 1-\frac{\mu \learningrate{t}}{3} \right)\enspace,
\end{align}

\noindent{\bf Term D.}
\begin{align*}
    D &= \rho \lambda(1-\beta_{t+1})\heter^2+ \rho \lambda(1-\beta_{t+1})^2  \sigma^2 + \rho(1-\beta_{t+1})^2\frac{\sigma^2}{n-f}\\
    &= \frac{3}{2}\learningrate{t} \lambda \heter^2 + 27 \learningrate{t}^2 L \left(\lambda+\frac{1}{n-f} \right) \sigma^2\enspace.
\end{align*}
Combining all, we obtain that
\begin{align*}
    \lyap{t+1}
    &\leq \expect{\avgloss\left( \model{}{t} \right)} - \optloss -\frac{\learningrate{t}}{6}\expect{\norm{\nabla  \avgloss(\model{}{t})}^2}+ \left( 1-\frac{\mu \learningrate{t}}{3} \right)\rho\expect{\norm{\dev{t}}^2} \\
    &+ \left( 1-\frac{\mu \learningrate{t}}{3} \right)\rho\frac{\lambda}{n-f}\sum_{i\in\H}\expect{{\norm{\mmt{i}{t}-\AvgMmt{t}}^2}}+  \frac{3}{2}\learningrate{t}  \lambda  \heter^2 + 27 \learningrate{t}^2 L \left(\lambda+\frac{1}{n-f} \right) \sigma^2\enspace.
\end{align*}
Recall from Assumption~\ref{asp:polyak} that $\norm{\nabla  \avgloss(\model{}{t})}^2 \geq 2\mu\left(  \avgloss\left( \model{}{t} \right) - \optloss \right) $. Therefore,

\begin{align*}
     \lyap{t+1}
    &\leq \left( 1-\frac{\mu \learningrate{t}}{3} \right) \left(\expect{\avgloss\left( \model{}{t} \right)} - \optloss + \rho \expect{ \norm{\dev{t}}^2}+ \rho \frac{\lambda}{n-f}\sum_{i\in\H}\expect{\norm{\mmt{i}{t}-\AvgMmt{t}}^2}\right) \\
    &+  \frac{3}{2}\learningrate{t}  \lambda  \heter^2 + 27 \learningrate{t}^2 L \left(\lambda+\frac{1}{n-f} \right) \sigma^2\\
    & = \left( 1-\frac{\mu \learningrate{t}}{3} \right) \lyap{t} + 27 L \left(\lambda+\frac{1}{n-f} \right) \sigma^2 \learningrate{t}^2 +  \frac{3}{2}  \lambda \heter^2 \learningrate{t}\enspace.
\end{align*}
\end{proof}

\begin{replemma}{lem:stepsize_strongly_convex}
    Let $a,b,c,d$ be positive real values with $a < b$, and let $T\geq 2$ be a positive integer. Let $(\learningrate{0}, \dots, \learningrate{T-1})$ and $(r_0, \dots, r_{T})$ be real valued sequences such that for all $t \in \{0,\dots,T-1\}$,
    \begin{align}
        r_{t+1} \leq (1 - a \learningrate{t}) r_{t} + c \learningrate{t}^2 + d \learningrate{t} \enspace. \label{eqn:rt-rt}
    \end{align}
    Consider the following two cases: 
    \begin{itemize}
        \item \textbf{Case 1:} $T \leq \nicefrac{b}{a} ~ $ and $\gamma_t = \nicefrac{1}{b}, ~ \forall t \in \{0,\dots,T-1\}$.
        \item \textbf{Case 2:} $T >  \nicefrac{b}{a} ~ $ and for $s = \nicefrac{2b}{a} ~ $ and $t_0 = \ceil{\nicefrac{T}{2}}$,
        \begin{align*}
            \learningrate{t} = \left\{
            \begin{array}{ccc}
                \frac{1}{b} & , &\text{ \quad if } t < t_0 \\ 
                ~ \\
                \frac{2}{a(s+t-t_0+1)} & , & \text{ otherwise }
            \end{array} \right. \enspace.
        \end{align*}
    \end{itemize}
    In both Case 1 and Case 2, we have
    \begin{align}
        r_T \leq  r_0 \exp \left( - \frac{aT}{2b} \right)  + \frac{18c}{a^2 T}  + \frac{3d}{a} \enspace. \label{eqn:rT-r0}
    \end{align}
\end{replemma}

\begin{proof}
    Our technique closely follows that of the proof of Lemma 3 in \cite{khaled20}, which itself build upon the analysis presented in~\cite{stich19}.\\

    \noindent{\bf Case 1.} Here, $T\leq \frac{b}{a}$ and $\gamma_t = \gamma = \nicefrac{1}{b}, ~ \forall t \in \{0,\dots,T-1\}$.
    Thus, as $a < b$, note that $(1 - a \gamma) \in (0, \, 1)$. Then, by applying recursion on~\eqref{eqn:rt-rt} we obtain that for all $t \in [T]$,
    \begin{align*}
        r_{t + 1} &\leq (1 - a\learningrate{})^{t+1} r_0 + \learningrate{}^2 c \sum_{\tau = 0}^{t} (1 - a\learningrate{})^\tau + \learningrate{} d \sum_{\tau = 0}^{t} (1 - a\learningrate{})^\tau \leq  (1 - a\learningrate{})^{t + 1} r_0 + \frac{\learningrate{} c}{a}+ \frac{d}{a}\enspace,
    \end{align*}
    where the last inequation comes from the fact that $\sum_{\tau = 0}^{t} (1 - a\learningrate{})^\tau \leq \sum_{\tau = 0}^{\infty} (1 - a\learningrate{})^\tau = \frac{1}{1 - (1 - a\learningrate{})}$.
    As $(1-x) \leq \exp{(-x)}$ for all $x \geq 0$, the above implies that for all $t \in [T]$,
    \begin{align*}
        r_{t + 1} \leq r_0 \exp\left(-a\learningrate{} (t + 1) \right) + \frac{\learningrate{} c}{a}+ \frac{d}{a}\enspace.
    \end{align*}
    Substituting $\learningrate{} = \nicefrac{1}{b}$ in the above, we obtain that for all $t \in \{0,\dots,T-1\}$,
    \begin{align}
        r_{t + 1} &\leq  r_0 \exp\left(-\frac{a (t + 1)}{b} \right) + \frac{c}{ab}+ \frac{d}{a}\enspace. \label{eq:r0_interm}
    \end{align}
    Recall that in this particular case, we assume $T a \leq b$. Thus, $\frac{1}{b} \leq \frac{1}{Ta}$ and we obtain that for all $t \in \{0,\dots,T-1\}$,
    \begin{align*}
        r_{t + 1} &\leq r_0 \exp\left(-\frac{a (t + 1)}{b} \right) + \frac{c}{a^2 T}+ \frac{d}{a}\enspace.
    \end{align*}
    Substituting $t = (T - 1)$ in the above yields 
    \begin{align*}
        r_T &\leq r_0 \exp\left(-\frac{a T}{b} \right) + \frac{c}{a^2 T}+ \frac{d}{a}\enspace.
    \end{align*}
    As $T > T/2$ and $a,c,d >0$, we have, 
    \begin{align*}
        r_T &\leq r_0 \exp\left(-\frac{a T}{2b} \right) + \frac{18c}{a^2 T}+ \frac{3d}{a}\enspace.
    \end{align*}

    \noindent{\bf Case 2.} $T > \nicefrac{b}{a}$ and for $s = \nicefrac{2b}{a} ~ $ and $t_0 = \ceil{\nicefrac{T}{2}}$,
        \begin{align*}
            \learningrate{t} = \left\{
            \begin{array}{ccc}
                \frac{1}{b} & , &\text{ \quad if } t < t_0 \\ 
                ~ \\
                \frac{2}{a(s+t-t_0+1)} & , & \text{ otherwise }
            \end{array} \right. \enspace.
        \end{align*}
    First, we consider the sub-case when $t < t_0$. As $\learningrate{t} = \learningrate{} = \nicefrac{1}{b}$ for all $t < t_0$,~\eqref{eq:r0_interm} holds true for any $t < t_0$. Thus, upon substituting $t = t_0 - 1$ in~\eqref{eq:r0_interm} we obtain that
    \begin{align*}
        r_{t_0} \leq  r_0 \exp\left(-\frac{a t_0}{b} \right) + \frac{c}{ab}+ \frac{d}{a}\enspace. 
    \end{align*}
    As $t_0 \geq \frac{T}{2}$, the above implies that
    \begin{align}\label{eq:rto}
        r_{t_0} &\leq  r_0 \exp\left( -\frac{aT}{2b} \right) + \frac{c}{ab}+ \frac{d}{a}\enspace.
    \end{align}
    Next, we consider the sub-case when $t_0 \leq t \leq T-1$. For an arbitrary such $t$, upon substituting $\learningrate{t} = \frac{2}{a(s+t-t_0+1)}$ in~\eqref{eqn:rt-rt} we obtain that
    \begin{align*}
        r_{t+1} &\leq (1 - a \learningrate{t}) r_{t} + c \, \learningrate{t}^2 + d \learningrate{t} = \left(1 -  \frac{2}{s+t - t_0 + 1} \right) r_{t} + \frac{4c}{a^2(s+t-t_0+1)^2} + \frac{2 d}{a(s+t-t_0+1)}\\
        &= \left(\frac{s + t- t_0 - 1}{s+t - t_0 + 1} \right) r_{t} + \frac{4c}{a^2(s+t-t_0+1)^2} + \frac{2 d}{a(s+t-t_0+1)}\enspace.
    \end{align*}
    Multiplying both sides above by $(s+t- t_0 + 1)^2$ we obtain that
    \begin{align*}
        (s+t- t_0 + 1)^2r_{t+1} &\leq (s+t-t_0-1)(s+t-t_0+1)  r_t  + \frac{4c}{a^2} + \frac{2 d}{a} (s+t- t_0 + 1) \\
        &= ((s+t-t_0)^2-1) r_t  + \frac{4c}{a^2} + \frac{2 d}{a} (s+t- t_0 + 1)\\
        &\leq (s+t-t_0)^2 r_t  + \frac{4c}{a^2} + \frac{2 d}{a} (s+t- t_0 + 1)\enspace.
    \end{align*}
    By rewriting $(s+t- t_0 + 1)$ as $(s+t + 1 - t_0)$ in the above, we have
    \begin{align*}
        (s+t + 1 - t_0 )^2r_{t+1} \leq (s+t-t_0)^2 r_t  + \frac{4c}{a^2} + \frac{2 d}{a} (s+t + 1 - t_0)\enspace.
    \end{align*}
    Recall that $t$ above is an arbitrary integer in $[t_0, \, T-1]$. Thus, the inequality holds true for all $t \in [t_0, \, T-1]$. Therefore, upon summing both the sides over all $t \in [t_0, \, T-1]$, we have
    \begin{align*}
        \sum_{t=t_0}^{T-1} (s+t + 1 - t_0 )^2r_{t+1} \leq \sum_{t=t_0}^{T-1} (s+t-t_0)^2 r_t  +  \sum_{t=t_0}^{T-1} \frac{4c}{a^2}   + \frac{2 d}{a} \sum_{t=t_0}^{T-1} (s+t + 1 - t_0)\enspace.
    \end{align*}
    Upon expanding the LHS and the first-term in the RHS we obtain that
    \begin{align*}
        (s+T-t_0)^2 \, r_{T} &\leq s^2 r_{t_0} + \sum_{t=t_0}^{T-1} \frac{4c}{a^2} + \frac{2d}{a} \, \sum_{t=t_0}^{T-1} (s+t+1-t_0)\\
        & =  s^2 r_{t_0} +  \frac{4c}{a^2} (T-t_0)+ \frac{d}{a} (T-t_0)(T-t_0+1+2s)\enspace.
    \end{align*}
    Therefore,
    \begin{align*}
        r_T \leq  \frac{s^2}{(s+T-t_0)^2} \, r_{t_0} +  \frac{4c (T-t_0)}{a^2 (s+T-t_0)^2} + \frac{d (T-t_0)(T-t_0+1+2s)}{a (s+T-t_0)^2}\enspace.
    \end{align*}
    As $ T-t_0 \leq s+T-t_0 $ and $T-t_0+1+2s \leq 2(s+T-t_0)$, from above we obtain that
    \begin{align*}
        r_T &\leq  \frac{s^2}{(s+T-t_0)^2} \, r_{t_0} +  \frac{4c}{a^2 (T - t_0)}  + \frac{2d}{a}\enspace.
    \end{align*}
    As $t_0 \leq \frac{2T}{3}$, we have $T - t_0 \geq \frac{T}{3}$. Using this above we obtain that
    \begin{align*}
        r_T \leq  \frac{s^2}{(s+T-t_0)^2} r_{t_0} +  \frac{12c}{a^2 T} + \frac{2d}{a}\enspace.
    \end{align*}
    Substituting from~\eqref{eq:rto} in the above, we obtain that
    \begin{align*}
        r_T
        & \leq  \frac{s^2}{(s+T-t_0)^2} \left(  r_0 \exp\left(-\frac{aT}{2b} \right)  + \frac{c}{ab}+ \frac{d}{a} \right) +  \frac{12c}{a^2T} + \frac{2d}{a}\enspace.
    \end{align*}
    As $s \leq s+T-t_0$, the above implies that
     \begin{align*}
        r_T
        & \leq  \frac{s}{s+T-t_0} \left(  \frac{c}{ab} \right) + r_0 \exp\left(-\frac{aT}{2b} \right) + \frac{d}{a}  +  \frac{12c}{a^2T} + \frac{2d}{a}\enspace.
    \end{align*}
    Using the fact that $s+T-t_0 \geq \frac{T}{3}$ above we have
    \begin{align*}
        r_T \leq  \frac{3s}{T} \left(  \frac{c}{ab} \right) + r_0 \exp\left(-\frac{aT}{2b} \right) + \frac{d}{a}  +  \frac{12c}{a^2T} + \frac{2d}{a}\enspace.
    \end{align*}
    Substituting $s = \frac{2b}{a}$ proves~\eqref{eqn:rT-r0}, i.e., we obtain that
    \begin{align*}
        r_T \leq  r_0 \exp\left(-\frac{aT}{2b} \right) + \frac{18c}{a^2 T}+ \frac{3d}{a}\enspace.
    \end{align*}
\end{proof}

\subsection{Final step to prove Theorem~\ref{thm:main}}

\begin{proof}[Proof of Theorem~\ref{thm:main}]
    We now apply Lemma~\ref{lem:stepsize_strongly_convex} to the recursion of Lemma~\ref{lemma:lyap}, for $a = \frac{\mu}{3}$, $b = {18L}$, $c = 27 L \left(\lambda+\frac{1}{n-f} \right) \sigma^2$ and $d =  \frac{3}{2}  \lambda  \heter^2$. Choosing the learning rates as specified in Lemma~\ref{lem:stepsize_strongly_convex}, we then obtain that
\begin{align}
\label{eq:lyapanovfoundbeforeV}
     \lyap{T} \leq \exp(-\frac{\mu T}{108 L}) \lyap{0} + \frac{4374L \left(\lambda+\frac{1}{n-f} \right) \sigma^2}{T\mu^2}+ \frac{9 \lambda\heter^2}{2\mu} \enspace.
\end{align}
As $\mmt{i}{0} = 0$ for all $i \in \H$, we have 
\begin{align*}
    \frac{1}{n-f}\sum_{i\in\H}\norm{\mmt{i}{0}-\AvgMmt{0}}^2 = 0 \enspace,
\end{align*}
and
\begin{align*}
    \norm{\dev{0}}^2 = \norm{ \nabla  \avgloss(\model{}{0})- \AvgMmt{0}}^2 =  \norm{ \nabla  \avgloss(\model{}{0})}^2 \leq 2L \left(\avgloss(\model{}{0}) - \optloss  \right) \enspace,
\end{align*}
where in the last inequality we used Lemma~\ref{lemma:smootnessbound}. Thus,
\begin{align*}
    \lyap{0} = {\avgloss\left( \model{}{0} \right) - \optloss + \frac{1}{12L}  \norm{\dev{0}}^2+ \frac{1}{12L} \frac{\lambda}{n-f}\sum_{i\in\H}\norm{\mmt{i}{0}-\AvgMmt{0}}^2} \leq \frac{7}{6} \left(\avgloss(\model{}{0}) - \optloss  \right)\enspace.
\end{align*}
Combining this with~\eqref{eq:lyapanovfoundbeforeV}, we obtain that
\begin{align*}
    \lyap{T} \leq \frac{7}{6} \left(\avgloss(\model{}{0}) - \optloss  \right) \cdot \exp(-\frac{\mu T}{108 L})+ \frac{4374L \left(\lambda+\frac{1}{n-f} \right) \sigma^2}{T\mu^2}+ \frac{9 \lambda\heter^2}{2\mu} \enspace.
\end{align*}
 By the definition of~$\lyap{t}$ in~\eqref{eq:lyap_def}, we have $\expect{\avgloss(\model{}{T}) - \optloss}  \leq \lyap{T}$. Therefore,
  \begin{align*}
    \expect{\avgloss(\model{}{T}) - \optloss}   \leq \frac{7}{6} \left(\avgloss(\model{}{0}) - \optloss  \right) \cdot \exp(-\frac{\mu T}{108 L})+ \frac{4374L \left(\lambda+\frac{1}{n-f} \right) \sigma^2}{T\mu^2}+ \frac{9 \lambda\heter^2}{2\mu} \enspace.
\end{align*}
This is the desired result.
\end{proof}
\subsection{Proof of Corollary~\ref{cor:main}}

As $n \geq (2+\nu) f$, we have

\begin{align*}
    \frac{2}{2+\nu} n \geq 2f\enspace.
\end{align*}
Rearranging the terms we have
\begin{align*}
    n - 2f \geq \left(1-\frac{2}{2+\nu} \right)n = \frac{\nu}{2+\nu} n\enspace.
\end{align*}
Therefore,
\begin{align*}
    \frac{f}{n-2f} \leq \frac{2+\nu}{\nu}\cdot \frac{f}{n}\enspace.
\end{align*}
As $\nu > 0$ is a constant, we have
\begin{align}
\label{eq:lambdaroder}
    \lambda = \frac{6 f}{n-2f}\, \left( 1 + \frac{f}{n-2f} \right) \leq \frac{2+\nu}{\nu}\cdot \frac{6f}{n} \left( 1 + \frac{2+\nu}{\nu}\cdot \frac{f}{n}\right) \in \mathcal{O}\left(\frac{f}{n} \right)\enspace.
\end{align}
Theorem~\ref{thm:main} then implies that 
\begin{align*}
        \avgloss(\model{}{T}) - \optloss   \in  \mathcal{O} \left( Q_0\cdot \exp(-\frac{\mu T}{108 L})+\frac{L \left(\lambda+\frac{1}{n-f} \right) \sigma^2}{T\mu^2}+ \frac{ \lambda\heter^2}{\mu} \right)\enspace,
\end{align*}
Combining this with~\eqref{eq:lambdaroder}, and noting that $\frac{1}{n-f}\leq \frac{2}{n}$, we have
\begin{align}\label{eq:sssvbv}
        \avgloss(\model{}{T}) - \optloss   \in  \mathcal{O}\left(Q_0\cdot \exp(-\frac{\mu T}{108 L}) + \frac{L \sigma^2}{\mu^2 T}\left(\frac{1}{n}+\frac{f}{n}\right) + \frac{f}{n} \frac{\heter^2}{\mu} \right)\enspace.
    \end{align}
    Now note that as $T \rightarrow \infty$, the first two terms converge to $0$. More precisely, for any $\varepsilon > 0$, setting $$T = \max \left\{\frac{2L \sigma^2}{\mu^2 \varepsilon}\left(\frac{f+1}{n}\right), 108 \frac{L}{\mu} \log\frac{2Q_0}{\varepsilon} \right\} \leq \frac{2L \sigma^2}{\mu^2 \varepsilon}\left(\frac{f+1}{n}\right)+ 108 \frac{L}{\mu} \log\frac{2Q_0}{\varepsilon}  ,$$ we obtain that
    \begin{align*}
        Q_0\cdot \exp(-\frac{\mu T}{108 L}) + \frac{L \sigma^2}{\mu^2 T}\left(\frac{1}{n}+\frac{f}{n}\right) \leq \varepsilon.
    \end{align*}
    Combing this with~\eqref{eq:sssvbv}, we have 
    $$\avgloss(\model{}{T}) - \optloss   \in  \mathcal{O}\left(\frac{f}{n} \cdot \frac{\heter^2}{\mu} + \varepsilon\right)\enspace,$$
    for $$T \in \mathcal{O}\left(\frac{L \sigma^2}{\mu^2 \varepsilon}\left(\frac{f+1}{n}\right)+ \frac{L}{\mu} \log\frac{Q_0}{\varepsilon}  \right),$$
    which is the desired result.

\section{Proof of Theorem~\ref{thm:upperboundgd}}
\label{app:proofupperboundgd}

Let us denote $R_t \coloneqq \text{TM}^{(f)}\left(G_t^{(1)}, \ldots, \, G_t^{(n)} \right)$ and $\bar{G}_t := \sum_{i \in \H} G_t^{(i)}$. By Proposition 2 in~\citep{allouah2023fixing}, we have
\begin{align}
\label{eq:first12}
    \norm{R_t -\bar{G}_t}^2  &\leq \lambda\frac{1}{n-f} \sum_{i \in \H} \norm{{G}_t^{(i)} -\bar{G}_t}^2, \quad \text{where} \quad \lambda = \frac{6 f}{n-2f}\, \left( 1 + \frac{f}{n-2f} \right) \enspace.
\end{align}
Similarly, for each $i \in \H$ and $\theta_t \in \R^d$, we have
\begin{align*}
    \norm{{G}_t^{(i)}  - \nabla  \localloss{i}(\model{}{t})}^2  &\leq \lambda'\frac{1}{m-b} \sum_{j \in \SSS_h^{(i)}} \norm{\nabla q(x^{(i,j)}, \theta_t) -\nabla  \localloss{i}(\model{}{t})}^2 , 
\end{align*}
where $\lambda' = \frac{6 b}{m-2b}\, \left( 1 + \frac{m}{m-2b} \right)$ . Therefore, by Assumption~\ref{asp:bnd_var}, we have
\begin{align}
\label{eq:second12}
    \norm{{G}_t^{(i)}  - \nabla  \localloss{i}(\model{}{t})}^2  &\leq \lambda'\sigma^2.
\end{align}
We now prove a few useful lemmas.
\begin{lemma}
\label{lem:decsent_gd}
Suppose Assumption~\ref{asp:lip}. Consider Algorithm~\ref{algorithm:d-gd} with $T\geq 2$, and $\gamma \leq \nicefrac{1}{L}$. Then, for all $t \in \{0,\dots,T-1\}$, the following holds true:
\begin{align*}
\avgloss(\model{}{t+1}) - \avgloss(\model{}{t})  &\leq -\frac{\gamma}{2}\norm{\nabla  \avgloss(\model{}{t})}^2 +\frac{\gamma}{2}\norm{R_t - \nabla  \avgloss(\model{}{t})}^2.
\end{align*}
\end{lemma}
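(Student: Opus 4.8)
The plan is to mirror the opening of the proof of Lemma~\ref{lemma:loss_bound}, but in the deterministic setting of Algorithm~\ref{algorithm:d-gd}, so no expectations and no appeal to the trimmed-mean robustness bound (Lemma~\ref{lemma:TM}) are needed at this stage. First I would record that, since $\avgloss = \frac{1}{\card{\H}}\sum_{i\in\H}\localloss{i}$ is an average of $L$-smooth functions, Assumption~\ref{asp:lip} implies $\nabla\avgloss$ is $L$-Lipschitz. The standard quadratic upper bound for smooth functions then gives
\begin{align*}
    \avgloss(\model{}{t+1}) - \avgloss(\model{}{t}) \leq \iprod{\model{}{t+1}-\model{}{t}}{\nabla\avgloss(\model{}{t})} + \frac{L}{2}\norm{\model{}{t+1}-\model{}{t}}^2 \enspace.
\end{align*}

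Next I would substitute the server update rule of Algorithm~\ref{algorithm:d-gd}, namely $\model{}{t+1} = \model{}{t} - \gamma R_t$ (where $R_t = \text{TM}^{(f)}(G_t^{(1)},\ldots,G_t^{(n)})$), to obtain $\avgloss(\model{}{t+1}) - \avgloss(\model{}{t}) \leq -\gamma\iprod{R_t}{\nabla\avgloss(\model{}{t})} + \frac{L\gamma^2}{2}\norm{R_t}^2$. The key algebraic step is the polarization identity $2\iprod{a}{b} = \norm{a}^2 + \norm{b}^2 - \norm{a-b}^2$, applied with $a = R_t$ and $b = \nabla\avgloss(\model{}{t})$, which turns the inner product into
\begin{align*}
    \avgloss(\model{}{t+1}) - \avgloss(\model{}{t}) \leq \left(\frac{L\gamma^2}{2}-\frac{\gamma}{2}\right)\norm{R_t}^2 -\frac{\gamma}{2}\norm{\nabla\avgloss(\model{}{t})}^2 +\frac{\gamma}{2}\norm{R_t - \nabla\avgloss(\model{}{t})}^2 \enspace.
\end{align*}

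Finally I would invoke the step-size condition $\gamma \leq 1/L$, which yields $L\gamma^2 \leq \gamma$ and hence $\frac{L\gamma^2}{2}-\frac{\gamma}{2}\leq 0$, so the $\norm{R_t}^2$ term can be discarded. This leaves precisely the claimed bound. I expect no genuine obstacle here: the argument is a clean, self-contained descent estimate that isolates the two quantities relevant downstream, namely the true-gradient norm $\norm{\nabla\avgloss(\model{}{t})}^2$ (which will combine with the PL inequality from Assumption~\ref{asp:polyak}) and the bias term $\norm{R_t - \nabla\avgloss(\model{}{t})}^2$ (which will subsequently be controlled via the two trimmed-mean bounds in~\eqref{eq:first12} and~\eqref{eq:second12}). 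The only point requiring care is to stop the estimate exactly at the bias term rather than expanding it prematurely, leaving the robustness analysis for the later lemmas.
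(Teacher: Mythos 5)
Your proposal is correct and follows essentially the same route as the paper's proof: the quadratic upper bound from smoothness, substitution of the update $\model{}{t+1} = \model{}{t} - \gamma R_t$, the polarization identity $2\iprod{a}{b} = \norm{a}^2 + \norm{b}^2 - \norm{a-b}^2$, and dropping the $\norm{R_t}^2$ term via $\gamma \leq \nicefrac{1}{L}$. Nothing is missing, and your remark about deferring the bias term to the later trimmed-mean lemmas matches how the paper structures the argument.
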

\begin{proof}
    Consider an arbitrary step $t$.  Note that Assumption~\ref{asp:lip} implies $L$-Lipschitz continuity of $\nabla \avgloss(\model{}{})$. Thus, we have 
\begin{align*}
    \avgloss(\model{}{t+1}) - \avgloss(\model{}{t})  &\leq \iprod{\model{}{t+1} - \model{}{t}}{\nabla  \avgloss(\model{}{t})} + \frac{L}{2} \norm{\model{}{t+1} - \model{}{t}}^2\enspace.
\end{align*}
Substituting from Algorithm~\ref{algorithm:d-gd}, $\model{}{t+1} = \model{}{t} - \gamma R_t$, we obtain that
\begin{align*}
    \avgloss(\model{}{t+1}) - \avgloss(\model{}{t})  \leq -\gamma\iprod{R_t}{\nabla  \avgloss(\model{}{t})} + \frac{L\gamma^2}{2} \norm{R_t}^2\enspace.
\end{align*}
Using the fact that $2\iprod{a}{b} = \norm{a}^2 + \norm{b}^2- \norm{a-b}^2$, we obtain that
\begin{align*}
    \avgloss(\model{}{t+1}) - \avgloss(\model{}{t})  &\leq -\frac{\gamma}{2}\norm{R_t}^2-\frac{\gamma}{2}\norm{\nabla  \avgloss(\model{}{t})}^2 +\frac{\gamma}{2}\norm{R_t - \nabla  \avgloss(\model{}{t})}^2 + \frac{L\gamma^2}{2} \norm{R_t}^2\\
    &= \left(\frac{L\gamma^2}{2}-\frac{\gamma}{2}\right) \norm{R_t}^2-\frac{\gamma}{2}\norm{\nabla  \avgloss(\model{}{t})}^2 +\frac{\gamma}{2}\norm{R_t - \nabla  \avgloss(\model{}{t})}^2\enspace.
\end{align*}
As $\gamma \leq \frac{1}{L}$, we have $\left(\frac{L\gamma^2}{2}-\frac{\gamma}{2}\right) \leq 0$ in the above, thereby proving the lemma.
\end{proof}

\begin{lemma}
\label{lemma:boundederrorgd}
    Suppose assumptions~\ref{asp:bnd_var}, and~\ref{asp:heter} hold true. Consider Algorithm~\ref{algorithm:d-gd}.
    For all $t \in \{0,\dots,T-1\}$, the following holds true:
    \begin{align*}
    \norm{R_t - \nabla  \avgloss(\model{}{t})}^2 \leq 2 \lambda' \sigma^2  + 6  \lambda\lambda' \sigma^2 + 6 \lambda\zeta^2.
\end{align*}
\end{lemma}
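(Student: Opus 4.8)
The plan is to control the aggregation error $\norm{R_t - \nabla\avgloss(\model{}{t})}^2$ by routing it through the honest average $\bar{G}_t = \frac{1}{n-f}\sum_{i\in\H}G_t^{(i)}$, so that the two already-established robustness bounds --- inequality~\eqref{eq:first12} for the global trimmed mean and inequality~\eqref{eq:second12} for the local trimmed mean --- can be invoked directly. Concretely, I would first split
\begin{align*}
    \norm{R_t - \nabla\avgloss(\model{}{t})}^2 \leq 2\norm{R_t - \bar{G}_t}^2 + 2\norm{\bar{G}_t - \nabla\avgloss(\model{}{t})}^2 ,
\end{align*}
using $\norm{a+b}^2 \leq 2\norm{a}^2 + 2\norm{b}^2$. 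The second term measures how far the honest average of the locally robustified gradients sits from the true honest gradient, while the first term is exactly the global-aggregation bias that~\eqref{eq:first12} is designed to bound.

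For the second term, since $\nabla\avgloss = \frac{1}{n-f}\sum_{i\in\H}\nabla\localloss{i}$ we have $\bar{G}_t - \nabla\avgloss(\model{}{t}) = \frac{1}{n-f}\sum_{i\in\H}\bigl(G_t^{(i)} - \nabla\localloss{i}(\model{}{t})\bigr)$, so Jensen's inequality followed by~\eqref{eq:second12} gives $\norm{\bar{G}_t - \nabla\avgloss(\model{}{t})}^2 \leq \frac{1}{n-f}\sum_{i\in\H}\norm{G_t^{(i)} - \nabla\localloss{i}(\model{}{t})}^2 \leq \lambda'\sigma^2$. For the first term,~\eqref{eq:first12} reduces the task to bounding the honest spread $\frac{1}{n-f}\sum_{i\in\H}\norm{G_t^{(i)} - \bar{G}_t}^2$. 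Here I would use that the empirical mean $\bar{G}_t$ minimizes the average squared deviation, so the spread is at most $\frac{1}{n-f}\sum_{i\in\H}\norm{G_t^{(i)} - \nabla\avgloss(\model{}{t})}^2$; splitting each summand as $G_t^{(i)} - \nabla\avgloss = (G_t^{(i)} - \nabla\localloss{i}) + (\nabla\localloss{i} - \nabla\avgloss)$ and bounding the two pieces by~\eqref{eq:second12} and Assumption~\ref{asp:heter} respectively controls the spread by (constant)$\cdot(\lambda'\sigma^2 + \zeta^2)$. Using the crude bound $\norm{a+b}^2 \leq 3\norm{a}^2 + 3\norm{b}^2$ in this last split yields honest-spread $\leq 3\lambda'\sigma^2 + 3\zeta^2$; feeding this through~\eqref{eq:first12} and the outer factor-two split and collecting terms gives exactly $2\lambda'\sigma^2 + 6\lambda\lambda'\sigma^2 + 6\lambda\zeta^2$.

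I do not expect any genuine analytical obstacle here: every step is either a standard norm manipulation (triangle/Young, Jensen, mean-minimizes-deviation) or a direct appeal to~\eqref{eq:first12},~\eqref{eq:second12}, and Assumption~\ref{asp:heter}. The only point requiring care is the bookkeeping of constants --- in particular, arranging the decomposition so that both the local data-poisoning contribution $\lambda'\sigma^2$ (carrying an extra $\lambda$ once passed through~\eqref{eq:first12}) and the heterogeneity contribution $\zeta^2$ surface with the stated multipliers. The constants are conservative: replacing the crude split by the tight $\norm{a+b}^2 \leq 2\norm{a}^2 + 2\norm{b}^2$ lowers the multiplier on $\lambda\lambda'\sigma^2$ and $\lambda\zeta^2$ from $6$ to $4$, so the claimed inequality holds a fortiori.
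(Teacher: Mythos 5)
Your proposal is correct, and its skeleton matches the paper's proof: the same outer split $\norm{R_t - \nabla\avgloss(\model{}{t})}^2 \leq 2\norm{R_t - \bar{G}_t}^2 + 2\norm{\bar{G}_t - \nabla\avgloss(\model{}{t})}^2$, the same Jensen-plus-\eqref{eq:second12} bound $\norm{\bar{G}_t - \nabla\avgloss(\model{}{t})}^2 \leq \lambda'\sigma^2$, and the same reduction of the first term to the honest spread via \eqref{eq:first12}. The one step where you genuinely diverge is the bound on that spread. The paper routes it through the pairwise-distance identity (Lemma~\ref{lem:diam_equal}), writing $\frac{1}{n-f}\sum_{i\in\H}\norm{G_t^{(i)}-\bar{G}_t}^2 = \frac{1}{2(n-f)^2}\sum_{i,j\in\H}\norm{G_t^{(i)}-G_t^{(j)}}^2$, splitting each pairwise difference into three terms, and then applying the identity a second time together with Assumption~\ref{asp:heter}. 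You instead use the variational property of the empirical mean (the spread around $\bar{G}_t$ is at most the average squared deviation from any fixed center, in particular from $\nabla\avgloss(\model{}{t})$), which requires only a two-term split and bypasses Lemma~\ref{lem:diam_equal} entirely. Both routes are elementary, but yours is slightly cleaner and, as you observe, strictly tighter: with the sharp inequality $\norm{a+b}^2 \leq 2\norm{a}^2+2\norm{b}^2$ it yields $2\lambda'\sigma^2 + 4\lambda\lambda'\sigma^2 + 4\lambda\zeta^2$, whereas the paper's three-way pairwise split inherently produces the factors of $6$; your deliberate use of the looser factor-$3$ bound merely recovers the stated constants verbatim. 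The bookkeeping in your proposal is consistent throughout, so the lemma follows as claimed.
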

\begin{proof}
    From the triangle and the Jensen's inequalities, we obtain that 
\begin{align*}
    \norm{R_t - \nabla  \avgloss(\model{}{t})}^2 =  \norm{R_t -\bar{G}_t  + \bar{G}_t  - \nabla  \avgloss(\model{}{t})}^2 \leq 2 \norm{R_t -\bar{G}_t}^2 + 2 \norm{\bar{G}_t  - \nabla  \avgloss(\model{}{t})}^2 .
\end{align*}
From Jensen's inequality, we have
\begin{align*}
    \norm{\bar{G}_t  - \nabla  \avgloss(\model{}{t})}^2 &= \norm{\frac{1}{n-f}\sum_{i \in \H}({G}_t^{(i)}  - \nabla  \localloss{i}(\model{}{t}))}^2 \leq \frac{1}{n-f} \sum_{i \in \H}\norm{{G}_t^{(i)}  - \nabla  \localloss{i}(\model{}{t})}^2 \leq \lambda' \sigma^2 .
\end{align*}
Moreover, we have
\begin{align*}
    & \norm{R_t -\bar{G}_t}^2  \leq \lambda\frac{1}{n-f} \sum_{i \in \H} \norm{{G}_t^{(i)} -\bar{G}_t}^2 \\
    &= \lambda\frac{1}{2(n-f)^2} \sum_{i,j \in \H} \norm{{G}_t^{(i)} -{G}_t^{(j)} }^2 \\
    &\leq  \lambda\frac{1}{2(n-f)^2} \sum_{i,j \in \H}\norm{{G}_t^{(i)} - \nabla\localloss{i}(\model{}{t})+ \nabla\localloss{i}(\model{}{t}) -\nabla\localloss{j}(\model{}{t}) + \nabla\localloss{j}(\model{}{t}) -{G}_t^{(j)} }^2 \\
    &\leq \lambda\frac{3}{2(n-f)^2} \sum_{i,j \in \H}\left(\norm{{G}_t^{(i)} - \nabla\localloss{i}(\model{}{t})}^2 +\norm{ \nabla\localloss{i}(\model{}{t}) -\nabla\localloss{j}(\model{}{t})}^2  +\norm{ \nabla\localloss{j}(\model{}{t}) -{G}_t^{(j)} }^2 \right)\\
    &= \lambda\frac{3}{n-f} \sum_{i \in \H}\norm{{G}_t^{(i)} - \nabla\localloss{i}(\model{}{t})}^2 + \lambda\frac{3}{2(n-f)^2} \sum_{i,j \in \H}\norm{ \nabla\localloss{i}(\model{}{t}) -\nabla\localloss{j}(\model{}{t})}^2 .
\end{align*}
From~\eqref{eq:second12}, for all $i \in \H$, we have $\norm{{G}_t^{(i)} - \nabla\localloss{i}(\model{}{t})}^2 \leq \lambda' \sigma^2$. Furthermore, by Assumption~\ref{asp:heter}, $\frac{1}{2(n-f)^2}\sum_{i,j \in \H}\norm{ \nabla\localloss{i}(\model{}{t}) -\nabla\localloss{j}(\model{}{t})}^2 = \frac{1}{n-f} \sum_{i \in \H}\norm{ \nabla\localloss{i}(\model{}{t}) -\nabla  \avgloss(\model{}{t})}^2 \leq \zeta^2$. Thus, from above we obtain that
\begin{align*}
    \norm{R_t -\bar{G}_t}^2  \leq 3  \lambda\lambda' \sigma^2 + 3 \lambda\zeta^2 .
\end{align*}
Combining the above we obtain that 
\begin{align*}
    \norm{R_t - \nabla  \avgloss(\model{}{t})}^2 \leq 2 \lambda' \sigma^2  + 6  \lambda\lambda' \sigma^2 + 6 \lambda\zeta^2 .
\end{align*}
\end{proof}

\begin{proof}[Back to the proof of Theorem~\ref{thm:upperboundgd}]
    Using the fact that the loss function satisfies the PL condition, from Lemma~\ref{lem:decsent_gd}, we obtain that 
    \begin{align*}
        \avgloss(\model{}{t+1}) - \avgloss(\model{}{t})  &\leq -\frac{\gamma}{2}\norm{\nabla  \avgloss(\model{}{t})}^2 +\frac{\gamma}{2}\norm{R_t - \nabla  \avgloss(\model{}{t})}^2\\
        &\leq -{\mu \gamma} (\avgloss(\model{}{t}) - Q^*)+\frac{\gamma}{2}\norm{R_t - \nabla  \avgloss(\model{}{t})}^2.
    \end{align*}
    Therefore, substituting from Lemma~\ref{lemma:boundederrorgd} in the above, we obtain that
    \begin{align*}
        \avgloss(\model{}{t+1}) - Q^* &\leq (1-{\mu \gamma}) (\avgloss(\model{}{t}) - Q^*)+\frac{\gamma}{2}\norm{R_t - \nabla  \avgloss(\model{}{t})}^2\\
        & \leq  (1-{\mu \gamma}) (\avgloss(\model{}{t}) - Q^*)+{\gamma}(  \lambda' \sigma^2  + 3 \lambda\lambda' \sigma^2 + 3 \lambda\zeta^2) .
    \end{align*}
    Recall that the above holds true for any $t \in \{0, \ldots, T -1 \}$. As $\mu \leq L$, we have $1 - \mu \gamma = 1 - \frac{\mu}{L} \in [0 , 1)$. Thus, substituting $\gamma = 1/L$ and applying the inequality recursively, we obtain that
    \begin{align*}
        \avgloss(\model{}{T}) - Q^* &\leq \left( 1 - \frac{\mu}{L}\right)^T \left( \avgloss(\model{}{0}) - Q^* \right) + \frac{1}{\mu}(  \lambda' \sigma^2  + 3 \lambda\lambda' \sigma^2 + 3 \lambda\zeta^2) .
    \end{align*}
    As $\left( 1 - \frac{\mu}{L}\right)^T \leq \exp \left( -\frac{\mu}{L}T\right)$, the above proves the theorem. 
\end{proof}

\end{document}